\definecolor{mydarkblue}{rgb}{0,0.08,0.45}
\crefname{prob}{Problem}{Problems}
\def\xx{{\boldsymbol x}}
\def\ddelta{{\boldsymbol \delta}}
\def\vv{{\boldsymbol v}}
\def\uu{{\boldsymbol u}}
\def\ww{{\boldsymbol w}}
\def\AA{{\boldsymbol A}}
\def\bF{{\boldsymbol F}}
\def\EE{{\mathbb E}\,}
\renewcommand{\gg}{\boldsymbol{g}}
\DeclareMathOperator{\sign}{sign}
\DeclareMathOperator*{\argmin}{{arg\,min}}
\DeclareMathOperator*{\argmax}{{arg\,max}}
\definecolor{myblue}{HTML}{D2E4FC}
\definecolor{Gray}{gray}{0.92}
\theoremstyle{plain}%
\newtheorem{thm}{Theorem}[section]
\newtheorem{lem}[thm]{Lemma}
\newtheorem{cor}{Corollary}
\theoremstyle{definition}
\newtheorem{defn}{Definition}[section]
\theoremstyle{remark}
\newcommand{\E}{{\mathbb{E}}}
\newcommand{\C}{{\mathbb{C}}}
\newcommand{\R}{{\mathbb{R}}}
\newcommand{\N}{{\mathcal{N}}}
\newcommand{\I}{{\mathbb{I}}}
\newcommand{\dxy}[2]{{\frac{\partial {#1}}{\partial #2}}}
\newcommand{\twocolfigwidth}{0.47\linewidth}
\title{Bridging the Gap Between Adversarial Robustness and Optimization Bias}
\author{%
Fartash Faghri$^{2, 3}$\thanks{Work done during an internship at Google Research. Code available at: \url{https://github.com/fartashf/robust_bias}.
Correspondence to: Fartash Faghri \textless{}faghri@cs.toronto.edu\textgreater{}.}
\And
Sven Gowal$^4$
\And
Cristina Vasconcelos$^1$
\vspace*{-.8cm}
\AND
David J. Fleet$^{1,2,3}$
\And
Fabian Pedregosa$^1$
\And
Nicolas Le Roux$^{1,5}$
\AND
\vspace{-.7cm}
\\
$^1$Google Research\quad
$^2$University of Toronto\quad
$^3$Vector Institute\quad
$^4$DeepMind\quad
$^5$Mila%
}
\begin{document}

\maketitle

\begin{abstract}
\vspace*{-0.25cm}

We demonstrate that the choice of optimizer,
neural network architecture,
and  regularizer significantly affect the adversarial 
robustness of linear neural networks, providing guarantees without
the need for adversarial training.
To this end, we revisit a known result linking maximally robust classifiers and minimum norm solutions, and combine it with recent results on the implicit bias of optimizers.
First, we show that, under certain conditions, it is possible to achieve both perfect standard accuracy and a certain degree of robustness,
simply by training an overparametrized model using the implicit bias of the optimization. In that regime, there is a direct relationship between the type of the optimizer and the attack to which the model is robust.
To the best of our knowledge, this work is the first to
study the impact of optimization methods such as sign gradient descent and proximal methods on adversarial robustness.
Second, we characterize the robustness of
linear convolutional models, showing that they resist attacks subject to a constraint on the Fourier-$\ell_\infty$ norm.
To illustrate these findings we design a novel
Fourier-$\ell_\infty$ attack that finds
adversarial examples with controllable frequencies.
We evaluate Fourier-$\ell_\infty$ robustness of adversarially-trained
deep CIFAR-10 models from the standard RobustBench
benchmark and visualize adversarial perturbations.

\end{abstract}

\section{Introduction}
\label{sec:intro}

Deep neural networks achieve high accuracy on standard test
sets, yet \citet{szegedy2013intriguing} showed that
any natural input correctly classified by a neural network
can be modified with adversarial perturbations
that fool the network into misclassification,
even when such perturbations are constrained
to be small enough that do not
significantly affect human perception.
Adversarial training improves model robustness through training
on adversarial samples~\citep{goodfellow2014explaining}
and can be interpreted as approximately
solving a saddle-point problem~\citep{madry2017towards}.
Adversarial training is the state-of-the-art approach to adversarial
robustness~\citep{gowal2020uncovering, croce2020robustbench} and
alternative approaches are more likely to exhibit
spurious robustness~\citep{tramer2020adaptive}.
Nevertheless, adversarial training is computationally expensive 
compared to standard training, as it involves an alternated optimization.
Adversarial training also exhibits a trade-off between standard generalization 
and adversarial robustness. 
That is, it achieves improved {\it robust accuracy}, on
adversarially perturbed data, at the expense of {\it standard accuracy}, the
probability of correct predictions on natural data \citep{tsipras2018robustness}.
This adversarial robustness trade-off has been shown to be intrinsic in a
number of toy examples~\citep{fawzi2018analysis}, independent of the
learning algorithm in some cases~\citep{schmidt2018adversarially}.
Alternatives to adversarial training have been proposed to
reduce this trade-off, but a gap 
remains in practice~\citep{zhang2019theoretically}.

Here we consider connections between the adversarial robustness trade-off 
and optimization biases in training overparametrized models.
Deep learning models can often achieve interpolation, i.e., they have 
the capacity to exactly fit the training data~\citep{zhang2016understanding}. 
Their ability to generalize well in such cases has been attributed 
to an implicit bias toward simple
solutions~\citep{gunasekar2018characterizing,hastie2019surprises}.

Our main contribution is 
to
connect two large bodies of work on adversarial robustness
and optimization bias.
Focusing on models that achieve interpolation,
we use the formulation of a {\it Maximally Robust Classifier}
from robust optimization~\citep{ben2009robust}.
We theoretically demonstrate that the choice of optimizer
(\cref{cor:implicit_robust}), neural network architecture
(\cref{cor:max_robust_to_min_norm_conv}), and regularizer
(\cref{cor:reg_is_max_margin}), significantly affect
the adversarial robustness of linear neural networks.
Even for linear models, the impact of these choices
had not been characterized precisely prior to our work.
We observe that, in contrast to adversarial training,
under certain
conditions we can find maximally robust classifiers at no additional
computational cost.

Based on our theoretical results on the robustness of linear convolutional
models to Fourier attacks,
we introduce a new class of attacks in the Fourier
domain. In particular, we design the Fourier-$\ell_\infty$
attack and illustrate our theoretical results. Extending to
non-linear models, we attack adversarially-trained deep models on
CIFAR-10 from RobustBench benchmark~\citep{croce2020robustbench}
and find low and high frequency adversarial perturbations
by directly controlling
spectral properties through Fourier constraints.
This example demonstrates how
understanding maximal robustness of linear models is a
stepping stone to understanding and guaranteeing robustness
of non-linear models.

\section{No Trade-offs with Maximally Robust Classifiers}
\vspace*{-5pt}
\label{sec:def_robust}

We start by defining adversarial robustness and
the robustness trade-off in adversarial training.
Then, \cref{sec:max_robust} provides an alternative formulation to
adversarial robustness that avoids the robustness trade-off.
Let
$\mathcal{D}=\{(\xx_i, y_i)\}_{i=1}^n$
denote a training set sampled I.I.D.\ from a distribution,
where
${\xx_i\in \R^d}$ are features and ${y_i\in \{-1,+1\}}$
are binary labels.
A binary classifier is a function
 ${\varphi : \R^d \rightarrow \R}$, and its
prediction on an input $\xx$ is given by
$\sign(\varphi(\xx))\in\{-1,+1\}$.
The aim in supervised
learning is to find a classifier that accurately classifies
the training data and generalizes to unseen test data.
One standard framework for training a classifier is
Empirical Risk Minimization (ERM),
$\argmin_{\varphi\in\Phi}
\mathcal{L}(\varphi)$,
where
$\mathcal{L}(\varphi)
\coloneqq \E_{(\xx,y) \sim \mathcal{D}} \,
\zeta(y\varphi(\xx))$,
$\Phi$ is a family of classifiers,
and $\zeta:\R\rightarrow\R^+$
is a loss function that we assume to be
strictly monotonically decreasing to $0$,
i.e.,\ $\zeta' < 0$.
Examples are the exponential loss, $\exp{(-\hat{y}y)}$,
and the logistic loss, $\log{(1+\exp{(-\hat{y}y)})}$.

Given a classifier, an adversarial perturbation
$\ddelta \in \R^d$ is any small perturbation that
changes the model prediction, i.e.,
$\sign(\varphi(\xx+\ddelta))\neq
\sign(\varphi(\xx)),\,
\|\ddelta\|\leq\varepsilon$,
where $\|\cdot\|$ is a norm on $\R^d$,
and $\varepsilon$ is an arbitrarily chosen constant.
It is common to use norm-ball constraints to
ensure perturbations are small (e.g., imperceptible in images)
but other constraints exist~\citep{brown2017adversarial}.
Commonly used are the $\ell_p$ norms, 
$\|\vv\|_p=\left(\sum_{i=0}^{d-1} [\vv]_i^p\right)^{1/p}$,
where $[\vv]_i$ denotes
the $i$-th element of a vector $\vv$,
for $i=0,\ldots,d-1$.
In practice, an adversarial perturbation, $\ddelta$,
is found as an approximate solution
to the following optimization problem,
\vspace*{-4pt}
\begin{equation}
\label{eq:find_adv}
\max_{\ddelta : \|\ddelta\|\leq \varepsilon}
\zeta(y\varphi(\xx+\ddelta))\,.
\end{equation}
Under certain conditions, closed form solutions exist
to the optimization problem in \eqref{eq:find_adv}.
For example, \citet{goodfellow2014explaining} observed that
the maximal $\ell_\infty$-bounded adversarial perturbation
against a linear model
(i.e.\ one causing the maximum change in the output)
is the sign gradient direction scaled by $\varepsilon$.

\citet{madry2017towards} defined an adversarially robust classifier
as the solution to the saddle-point optimization problem,
\vspace*{-4pt}
\begin{equation}
\label{eq:saddle_point}
\argmin_{\varphi\in\Phi}\, \E_{(\xx,y) \sim \mathcal{D}}
\max_{\ddelta : \|\ddelta\|\leq \varepsilon}
\zeta(y\varphi(\xx+\ddelta))\,.
\end{equation}
The saddle-point adversarial robustness problem
is the robust counter-part to empirical risk minimization where the expected loss is minimized on
worst-case adversarial samples defined as solutions
to \eqref{eq:find_adv}.
Adversarial Training~\citep{goodfellow2014explaining}
refers to solving \eqref{eq:saddle_point}
using an alternated optimization. It is
computationally expensive because
it often requires solving
\eqref{eq:find_adv} many times.

The main drawback of defining the adversarially robust classifier 
using \eqref{eq:saddle_point}, and a drawback of
adversarial training,
is that the parameter $\varepsilon$
needs to be known or tuned.
The choice of $\varepsilon$ controls a trade-off
between standard accuracy on samples
of the dataset $\mathcal{D}$
versus the robust accuracy, i.e., the accuracy on adversarial samples.
At one extreme $\varepsilon=0$, where
\eqref{eq:saddle_point} reduces to ERM.
At the other, 
as $\varepsilon\rightarrow\infty$,
all inputs in $\R^d$ are within the $\varepsilon$-ball
of every training point and can be an adversarial input.
The value of the inner max in \eqref{eq:saddle_point}
for a training point $\xx,y$ is the loss of the most confident prediction over $\R^d$ that is predicted as
$-y$. For large enough $\varepsilon$, the solution to
\eqref{eq:saddle_point} is a classifier predicting
the most frequent label, i.e.,\ 
$\varphi(\cdot)=p^\ast$, where $p^\ast$ is the solution to
$\argmin_p n_{-1}\zeta(-p)+n_{+1}\zeta(p)$, and
$n_{-1}, n_{+1}$ are the number of negative and positive
training labels.

Robust accuracy is often a complementary generalization
metric to standard test accuracy.
In practice, we prefer a classifier that is accurate on the test set,
and that additionally, achieves maximal robustness.
The saddle-point formulation makes this challenging
without the knowledge of the maximal $\varepsilon$.
This trade-off has been studied in various works~\citep{
tsipras2018robustness, zhang2019theoretically, fawzi2018adversarial,
fawzi2018analysis, schmidt2018adversarially}.
Regardless of the trade-off imposed by $\varepsilon$,
adversarial training
is considered to be the state-of-the-art for adversarial
robustness. The evaluation is
based on the robust accuracy achieved at
fixed $\varepsilon$'s
even though the standard accuracy is usually lower
than a comparable non-robust model~\citep{croce2020robustbench,
gowal2020uncovering}.

\vspace*{-5pt}
\subsection{Maximally Robust Classifier}
\label{sec:max_robust}
\vspace*{-5pt}
In order to avoid the trade-off imposed by $\varepsilon$
in adversarial robustness, we revisit a definition from
robust optimization.
\begin{defn}%
\label{def:max_robust}
A \textbf{Maximally Robust Classifier} (\citet{ben2009robust}) is a solution to
\vspace*{-1pt}
\begin{align}
\label{eq:max_robust0}
    \argmax_{\varphi\in\Phi} \{\varepsilon\,|\,y_i \varphi(\xx_i + \ddelta) > 0 
    ,\,~\forall i,\|\ddelta\| \leq \varepsilon\}\,.
\end{align}
\end{defn}
\vspace*{-10pt}
Compared with the saddle-point formulation~\eqref{eq:saddle_point},
$\varepsilon$ in \eqref{eq:max_robust0}
is not an arbitrary constant. Rather, it is
maximized as part of the optimization problem.
Moreover, the maximal $\varepsilon$ in this definition does
not depend on a particular loss function.
Note, a maximally robust classifier
is not necessarily unique.

The downside of \eqref{eq:max_robust0}
is that the formulation requires
the training data to be separable
so that \eqref{eq:max_robust0} is non-empty,
i.e.\ there exists $\varphi\in\Phi$
such that $\forall i, y_i\varphi(\xx_i)>0$.
In most deep learning settings,
this is not a concern as models are large
enough that they
can interpolate the training data, i.e.
for any dataset there exists $\varphi$ such that $\varphi(\xx_i)=y_i$.
An alternative formulation is to modify
the saddle-point problem and
include an outer maximization on $\varepsilon$ by allowing
a non-zero slack loss. However, the new slack loss
reimposes a trade-off between standard and robust accuracy
(See \cref{sec:max_robust_gen}).

One can also show that adversarial training, i.e.,
solving the saddle-point problem \eqref{eq:saddle_point}, 
does not necessarily find a maximally robust classifier.
To see this, suppose we are given the maximal
$\varepsilon$ in \eqref{eq:max_robust0}.
Further assume the minimum of \eqref{eq:saddle_point} is non-zero.
Then the cost in the saddle-point problem does not distinguish
between the following two models: 1) a model that makes no
misclassification errors but has low confidence,
i.e.
$\forall i,\, 0<\max_{\ddelta} y_i \varphi(\xx_i+\ddelta)\leq c_1$
for some small $c_1$
2) a model that classifies a training point, $\xx_j$, incorrectly
but is highly confident on all other training data and adversarially
perturbed ones, i.e.
$\forall i\neq j,\,
0<c_2 < \max_{\ddelta} y_i\varphi(\xx_i+\ddelta)$.
The second model can incur a loss $n\zeta(c_1)-(n-1)\zeta(c_2)$
on $\xx_j$ while being no worse than the first model according
to the cost of the saddle-point problem.
The reason is another trade-off between
standard and robust accuracy caused by
taking the expectation over data points.

\subsection{Linear Models: Maximally Robust is the Minimum Norm Classifier}
\vspace*{-3pt}
Given a dataset and a norm, what is the maximally robust
linear classifier with respect to that norm? In this section, we revisit a result from~\citet{ben2009robust}
for classification.

\begin{defn}[Dual norm]
Let $\|\cdot\|$ be a norm on $\R^{n}.$ The associated \emph{dual norm}, denoted $\| \cdot \|_\ast$, is defined as
$\|\ddelta\|_\ast = \sup_{\xx}\{ |\langle \ddelta, \xx \rangle|\; |\; \|\xx\| \leq 1 \}\;$.
\end{defn}

\begin{defn}[Linear Separability]
    \label{def:sep}
    We say a dataset is linearly separable if there exists $\ww,b$ such that
    ${y_i (\ww^\top \xx_i +b)> 0}$ for all $i$.
\end{defn}

\begin{restatable}[Maximally Robust Linear Classifier (\citet{ben2009robust}, \S12)]{lem}{lemmaxrobust}
\label{lem:max_robust_to_min_norm_linear}
For linear models and linearly separable data,
the following problems are equivalent; i.e., 
from a solution of one, a solution of the other is readily found.
\vspace*{-3pt}
\begin{align}
\label{eq:max_robust}
   \text{Maximally robust classifier:}\quad&
   \!\! \argmax_{\ww,b} \{\varepsilon\,|\,
    y_i (\ww^\top (\xx_i + \ddelta) +b) > 0 
    ,\,~\forall i,\|\ddelta\| \leq \varepsilon\}\, ,\\
\label{eq:max_margin}
   \text{Maximum margin classifier:}\quad&
    \argmax_{\ww,b : \|\ww\|_\ast\leq 1} \{\varepsilon\,|\,
    y_i (\ww^\top  \xx_i + b)
    \geq \varepsilon,\, ~\forall i\}\,,\\
\label{eq:min_norm}
   \text{Minimum norm classifier:}\quad&
    \argmin_{\ww,b} \{\|\ww\|_\ast\,|\,
    y_i (\ww^\top  \xx_i+b) \geq 1,\, ~\forall i\}\,.
\end{align}
\vspace*{-2pt}
The expression $\min_i y_i (\ww^\top \xx_i+b)/\|\ww\|$
is the margin
of a classifier $\ww$ that is the distance of the nearest training
point to the classification boundary,
i.e.\ the line ${\{\vv:\ww^\top \vv=-b\}}$.
\end{restatable}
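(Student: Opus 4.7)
The plan is to rewrite the robustness constraint using the dual norm, then use the positive-homogeneity of linear classifiers to rescale between the three formulations. Throughout, linear separability ensures all three problems have nonempty feasible sets and nonzero optimal $\ww$, so divisions by $\|\ww\|_\ast$ are well defined.

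First I would simplify the worst-case constraint in \eqref{eq:max_robust}. For fixed $\ww,b$ and any training point $(\xx_i,y_i)$, the worst perturbation solves $\min_{\|\ddelta\|\le\varepsilon} y_i\ww^\top\ddelta$, which by the definition of dual norm equals $-\varepsilon\|\ww\|_\ast$. Hence the universal constraint $y_i(\ww^\top(\xx_i+\ddelta)+b)>0$ for all $\|\ddelta\|\le\varepsilon$ is equivalent to
\begin{equation*}
y_i(\ww^\top\xx_i+b)\ge \varepsilon\|\ww\|_\ast\qquad \text{for all } i,
\end{equation*}
and \eqref{eq:max_robust} becomes $\argmax_{\ww,b}\{\varepsilon : y_i(\ww^\top\xx_i+b)\ge \varepsilon\|\ww\|_\ast,\,\forall i\}$. (The strict inequality is absorbed by maximizing $\varepsilon$ over an open condition, giving the same supremum.)

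Next I would connect this to \eqref{eq:max_margin}. Because the constraint is invariant under the rescaling $(\ww,b)\mapsto(c\ww,cb)$ for $c>0$, I can impose $\|\ww\|_\ast\le 1$ without loss of generality. Substituting this normalization yields exactly the maximum margin problem \eqref{eq:max_margin}, and a solution of one is mapped to a solution of the other by dividing or multiplying $(\ww,b)$ by $\|\ww\|_\ast$.

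Finally I would show \eqref{eq:max_margin} is equivalent to \eqref{eq:min_norm} by another rescaling. Given a maximum margin solution $(\ww^\star,b^\star)$ with margin $\varepsilon^\star>0$ and $\|\ww^\star\|_\ast\le 1$, set $\tilde\ww=\ww^\star/\varepsilon^\star$ and $\tilde b=b^\star/\varepsilon^\star$; this is feasible for \eqref{eq:min_norm} with objective $\|\tilde\ww\|_\ast\le 1/\varepsilon^\star$. Conversely, from any minimum-norm feasible $(\ww,b)$, the pair $(\ww/\|\ww\|_\ast,b/\|\ww\|_\ast)$ is feasible for \eqref{eq:max_margin} with margin $\ge 1/\|\ww\|_\ast$. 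A short argument then shows the optimal margin of \eqref{eq:max_margin} equals the reciprocal of the minimum norm in \eqref{eq:min_norm}, and the optimizers coincide up to this positive scaling. The final identification of the margin as $\min_i y_i(\ww^\top\xx_i+b)/\|\ww\|$ in the usual $\ell_2$ case (or with $\|\cdot\|_\ast$ more generally) follows from the dual-norm identity used in step one.

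The main obstacle is bookkeeping around the bias $b$ and around strict vs.\ non-strict inequalities: one must check that the supremum in \eqref{eq:max_robust} is attained by the same $\ww,b$ that solve the closed constraint $y_i(\ww^\top\xx_i+b)\ge\varepsilon\|\ww\|_\ast$, and that the rescalings used to move between \eqref{eq:max_robust}, \eqref{eq:max_margin}, and \eqref{eq:min_norm} preserve optimality in both directions. Everything else is a direct application of the dual norm identity and homogeneity.
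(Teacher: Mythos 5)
Your proposal is correct and follows essentially the same route as the paper's proof: eliminate $\ddelta$ via homogeneity and the dual-norm identity $\inf_{\|\ddelta\|\leq 1}\ww^\top\ddelta=-\|\ww\|_\ast$ to obtain the constraint $y_i(\ww^\top\xx_i+b)\geq\varepsilon\|\ww\|_\ast$, then pass between \eqref{eq:max_robust}, \eqref{eq:max_margin}, and \eqref{eq:min_norm} by positive rescalings of $(\ww,b)$, using linear separability to guarantee $\ww\neq 0$ and a positive optimal $\varepsilon$. The points you flag as bookkeeping (strict vs.\ non-strict inequalities, the normalization $\|\ww\|_\ast\leq 1$, and the reciprocal relation between the optimal margin and the minimum norm) are exactly the steps the paper handles, so no gap remains.
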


We provide a proof for
general norms based on \citet{ben2009robust} in \cref{proof:max_robust_to_min_norm_linear}.
Each formulation in \cref{lem:max_robust_to_min_norm_linear}
is connected to a wide array of results that can be transferred
to other formulations. Maximally robust classification is
one example of a problem in robust optimization that can
be reduced and solved efficiently. Other problems
such as robust regression as well as
robustness to correlated input perturbations have been studied
prior to deep learning~\citep{ben2009robust}.

On the other hand, maximum margin and minimum norm
classification have long been popular because of their
generalization guarantees.
Recent theories for overparametrized models
link the margin and the norm of a model
to generalization~\citep{hastie2019surprises}.
Although the tools are different, connecting the margin and
the norm of a model has also been the basis of generalization
theories for Support Vector Machines 
and AdaBoost~\citep{shawe1998structural,telgarsky2013margins}.
Maximum margin classification does not require linear separability,
because there can exist a classifier with $\varepsilon<0$ that
satisfies the margin constraints.
Minimum norm classification is the easiest formulation
to work with in practice
as it does not rely on $\varepsilon$ nor $\ddelta$ and
minimizes a function of the weights subject to a set of
constraints.

In what follows, we use
\cref{lem:max_robust_to_min_norm_linear}
to transfer recent results about minimum norm classification
to maximally robust classification. These results 
have been the basis for explaining generalization properties
of deep learning models~\citep{hastie2019surprises,nakkiran2019deep}.

\vspace*{-5pt}
\section{Implicit Robustness of Optimizers}
\vspace*{-5pt}
\label{sec:implicit_bias}

The most common approach to empirical risk minimization (ERM) is through 
gradient-based optimization. 
As we will review shortly, \citet{gunasekar2018characterizing} showed
that gradient  descent, and more generally steepest descent methods, 
have an implicit bias towards minimum norm solutions. 
From the infinitely many solutions that minimize 
the empirical risk,
we can characterize the one found by steepest descent.
Using \cref{lem:max_robust_to_min_norm_linear}, we show that
such a classifier is also maximally robust
w.r.t. a specific norm.

Recall that ERM is defined as
$\argmin_{\varphi\in\Phi}\mathcal{L}(\varphi)$, where
$\mathcal{L}(\varphi)
= \E_{(\xx,y) \sim \mathcal{D}} \zeta(y\varphi(\xx))$.
Here we assume $\mathcal{D}$ is a finite dataset of size $n$.
For the linear family of functions, we write $\mathcal{L}(\ww,b)$.
Hereafter, we rewrite the loss as
$\mathcal{L}(\ww)$ and use an augmented representation with
a constant $1$ dimension.
For linearly separable data and overparametrized models
($d>n$), there exist infinitely many linear
classifiers that minimize the empirical
risk~\citep{gunasekar2018characterizing}.
We will find it convenient to ignore the scaling
and focus on the normalized vector $\ww/\|\ww\|$, i.e.\
the direction of $\ww$.
We will say that the sequence $\ww_1, \ww_2, \ldots$ 
converges in direction to a vector $\vv$
if $\lim_{t\rightarrow\infty} \ww_t/\|\ww_t\|=\vv$.

\vspace*{-5pt}
\subsection{Steepest Descent on Fully-Connected Networks}
\vspace*{-5pt}

\begin{defn}[Steepest Descent]
Let $\|\cdot\|$ denote a norm, $f$ a function to be minimized, and
$\gamma$ a step size. The steepest descent method associated with this norm finds
\vspace*{-4pt}
\begin{align}\label{eq:steepest_descent}
\ww_{t+1} &= \ww_t + \gamma \Delta\ww_t,\nonumber\\
\text{where}~~\Delta \ww_{t} &\in \argmin_\vv \langle\nabla f(\ww_t), \vv \rangle
+ \frac{1}{2}\|\vv\|^2\,.
\end{align}
The steepest descent step, $\Delta \ww_{t}$, can be equivalently written as
${-\|\nabla f(\ww_t)\|_\ast\, g_{\text{nst}}}$,
where
$g_{\text{nst}} \in \argmin\left\{ \langle\nabla f(\ww_t), \vv\rangle\; |\; \|\vv\| = 1\right\}$.
A proof can be found in \citep[\S9.4]{boyd2004convex}.
\vspace*{-10pt}
\end{defn}

\paragraph{Remark.} For some $p$-norms,
steepest descent steps have closed form expressions.
Gradient Descent (GD) is steepest descent w.r.t.\ $\ell_2$ norm 
where ${- \nabla f(\ww_t)}$ is a steepest descent step.
Sign gradient descent is steepest descent w.r.t.\
$\ell_\infty$ norm where  
${- \|\nabla f(\ww_t)\|_1 \sign(\nabla f(\ww_t))}$ is a steepest descent step.
Coordinate Descent (CD) is steepest descent w.r.t.\ $\ell_1$ norm where
${- \nabla f(\ww_t)_i {\boldsymbol{e}}_i }$ is a steepest descent step
($i$ is the coordinate for which the gradient has the largest absolute magnitude).

\begin{thm}[Implicit Bias of Steepest Descent (\citet{gunasekar2018characterizing} (Theorem 5))]
\label{thm:gunasekar_linear}
For any separable dataset $\{\xx_i, y_i\}$
and any norm $\|\cdot\|$,
consider the steepest descent updates
from \eqref{eq:steepest_descent} for minimizing
the empirical risk $\mathcal{L}(\ww)$ (defined in \cref{sec:def_robust})
with the exponential loss, $\zeta(z)=\exp{(-z)}$. For all
initializations $\ww_0$, and
all bounded step-sizes satisfying a known upper bound,
the iterates $\ww_t$ satisfy
\vspace*{-7pt}
\begin{equation}
\label{eq:gunasekar_linear}
    \lim_{t \to \infty}
    \min_i \frac{y_i \ww_t^\top \xx_i}{\|\ww_t\|}
    = \max_{\ww : \|\ww\| \leq 1}
    \min_i y_i \ww^\top \xx_i\, .
\end{equation}
In particular, if a unique maximum margin classifier
$\ww_{\|\cdot\|}^\ast=\argmax_{\ww : \|\ww\| \leq 1} \min_i y_i \ww^\top \xx_i$
exists, the limit direction converges to it, i.e.\
$\lim_{t \to \infty}\frac{\ww_t}{\|\ww_t\|}
=\ww^\ast_{\|\cdot\|}$.
\end{thm}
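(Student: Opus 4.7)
The plan is to reduce the limit identity \eqref{eq:gunasekar_linear} to a matched pair of asymptotic rate estimates for the unnormalized margin $m_t := \min_i y_i \ww_t^\top \xx_i$ and the iterate norm $\|\ww_t\|$. Writing $\gamma^\ast := \max_{\|\ww\|\le 1}\min_i y_i \ww^\top \xx_i$ (the right-hand side of \eqref{eq:gunasekar_linear}, nonnegative by separability), the goal is to show $m_t = \gamma^\ast \log t + o(\log t)$ and $\|\ww_t\| = \log t + o(\log t)$, so that their ratio converges to $\gamma^\ast$. The uniqueness statement then follows by continuity once convergence of the ratio is in hand, since any limit point of $\ww_t/\|\ww_t\|$ must attain the maximum in the definition of $\gamma^\ast$.

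First I would show that the empirical risk vanishes along the trajectory. Since $\zeta(z)=e^{-z}$ is convex, $\mathcal{L}$ is convex and smooth with respect to $\|\cdot\|$ on bounded sets, so under the step-size hypothesis the steepest-descent update obeys a descent inequality $\mathcal{L}(\ww_{t+1}) \le \mathcal{L}(\ww_t) - c\gamma \|\nabla \mathcal{L}(\ww_t)\|_\ast^2$ for some $c>0$. Together with linear separability (which ensures $\inf\mathcal{L}=0$), this yields $\mathcal{L}(\ww_t)\to 0$, hence $m_t\to\infty$ and $\|\ww_t\|\to\infty$. A more careful summation of the descent inequality combined with the observation that along any ray $\ww_t = \alpha \ww_{\|\cdot\|}^\ast$ the loss decays at rate $e^{-\alpha\gamma^\ast}$ gives the sharper quantitative bound $\mathcal{L}(\ww_t) = \Theta(1/t)$.

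Second I would extract the two rate estimates. For the numerator, since the exponential loss is a convex combination of exponentials, $\mathcal{L}(\ww_t) \ge \tfrac{1}{n} e^{-m_t}$, so $m_t \ge -\log\mathcal{L}(\ww_t) - \log n \ge \log t - O(1)$. For the denominator, the step increment has norm $\gamma\|\nabla\mathcal{L}(\ww_t)\|_\ast$, and $\|\nabla\mathcal{L}(\ww_t)\|_\ast$ is bounded in terms of $\mathcal{L}(\ww_t)$ times a data-dependent constant. Summing $\|\ww_{t+1}\|-\|\ww_t\| \le \gamma\|\nabla\mathcal{L}(\ww_t)\|_\ast$ over iterations and using $\mathcal{L}(\ww_t)=\Theta(1/t)$ yields $\|\ww_t\| = O(\log t)$. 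The matching constants come from a dual lower bound: using $\ww_{\|\cdot\|}^\ast$ as a test direction and the fact that $\langle\nabla\mathcal{L}(\ww_t),\ww_{\|\cdot\|}^\ast\rangle \le -\gamma^\ast \mathcal{L}(\ww_t)$ by definition of the max-margin direction, one shows $\langle\ww_t,\ww_{\|\cdot\|}^\ast\rangle \ge \log t - O(1)$, and since $|\langle\ww_t,\ww_{\|\cdot\|}^\ast\rangle|\le\|\ww_t\|\cdot\|\ww_{\|\cdot\|}^\ast\|_\ast = \|\ww_t\|$, this pins down $\|\ww_t\| \ge \log t - O(1)$, matching the upper bound.

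The main obstacle I anticipate is the non-smoothness of general norms: when $\|\cdot\|$ is not strictly convex (e.g.\ $\ell_\infty$ or $\ell_1$), the steepest-descent direction $g_{\text{nst}}$ is not unique, so the iterate may oscillate among equally steep faces. This prevents a direct argument tracking $\ww_t$ coordinate-by-coordinate and is one reason the statement is phrased as a limit of the margin ratio rather than of $\ww_t/\|\ww_t\|$. To handle this I would work with scalar potentials that are insensitive to the choice of subgradient, namely the margin $m_t$ itself and the dual pairing $\langle\ww_t,\ww_{\|\cdot\|}^\ast\rangle$, both of which are monotonic along any valid steepest-descent trajectory. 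A secondary technical point is the descent lemma for a general non-Euclidean norm, which requires either a local smoothness bound for $\mathcal{L}$ in the prescribed norm or, alternatively, the use of the equivalence of norms on $\R^d$ together with the hypothesized step-size bound; this is likely where the unspecified ``known upper bound'' on the step size enters.
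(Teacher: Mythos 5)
This theorem is imported verbatim from \citet{gunasekar2018characterizing} (their Theorem 5); the paper states it without proof, so there is no in-paper argument to compare against. Judged on its own terms, your sketch follows the right family of ideas (descent lemma, loss-to-margin conversion, the max-margin direction as a dual certificate), but two of its load-bearing steps are broken.

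First, the target asymptotics are inconsistent with your own intermediate bounds and would not produce the ratio $\gamma^\ast$. From $\mathcal{L}(\ww_t)\geq\tfrac1n e^{-m_t}$ and $\mathcal{L}(\ww_t)=\Theta(1/t)$ you correctly get $m_t\geq\log t-O(1)$; this is incompatible with the announced $m_t=\gamma^\ast\log t+o(\log t)$ whenever $\gamma^\ast<1$, and in no case does this technique yield a $\gamma^\ast$-dependent constant in front of $\log t$ for the numerator. The correct matched rates are $m_t\sim\log t$ and $\|\ww_t\|\sim\gamma^{\ast-1}\log t$, and the hard direction is the \emph{upper} bound on $\|\ww_t\|$ (equivalently, that the loss decays as fast as $e^{-\gamma^\ast\|\ww_t\|}$). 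The missing ingredient is the inequality $\|\nabla\mathcal{L}(\ww)\|_\ast\geq\gamma^\ast\,\mathcal{L}(\ww)$, obtained by noting that $\ww^\ast_{\|\cdot\|}$ is a feasible point in the supremum over the primal unit ball defining $\|\nabla\mathcal{L}\|_\ast$; fed into the descent inequality this gives $\log\mathcal{L}(\ww_{t+1})-\log\mathcal{L}(\ww_t)\leq-(1-o(1))\,\gamma^\ast\bigl(\|\ww_{t+1}\|-\|\ww_t\|\bigr)$, which after telescoping and combining with $m_t\geq-\log\bigl(n\mathcal{L}(\ww_t)\bigr)$ delivers $m_t/\|\ww_t\|\to\gamma^\ast$ directly, without ever needing the precise $t$-dependence of either quantity. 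You have this pairing inequality in hand ($\langle\nabla\mathcal{L},\ww^\ast_{\|\cdot\|}\rangle\leq-\gamma^\ast\mathcal{L}$) but deploy it in the wrong place.

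Second, the ``dual lower bound'' on $\|\ww_t\|$ fails on two counts. (i) It confuses primal and dual norms: the max-margin classifier satisfies $\|\ww^\ast_{\|\cdot\|}\|\leq 1$, not $\|\ww^\ast_{\|\cdot\|}\|_\ast\leq1$, so H\"older gives $|\langle\ww_t,\ww^\ast_{\|\cdot\|}\rangle|\leq\|\ww_t\|\,\|\ww^\ast_{\|\cdot\|}\|_\ast$ with a constant that is not $1$, and the claimed conclusion $\|\ww_t\|\geq\log t-O(1)$ does not follow. (ii) The monotone growth of $\langle\ww_t,\ww^\ast_{\|\cdot\|}\rangle$ is asserted but unjustified: the steepest-descent increment is $\gamma\|\nabla\mathcal{L}(\ww_t)\|_\ast\,g_{\text{nst}}$, not $-\gamma\nabla\mathcal{L}(\ww_t)$, and for non-Euclidean norms ($\ell_1$, $\ell_\infty$) there is no general reason that $\langle g_{\text{nst}},\ww^\ast_{\|\cdot\|}\rangle>0$, so the inner product with the max-margin direction need not increase at every step. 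This is exactly the non-uniqueness/oscillation difficulty you flag, but the proposed remedy (treat $m_t$ and the pairing as monotone potentials) does not actually resolve it --- neither quantity is monotone along a general steepest-descent trajectory. The argument via $\|\nabla\mathcal{L}\|_\ast\geq\gamma^\ast\mathcal{L}$ sidesteps both problems, since it never requires controlling the direction of the individual steps.
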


In other words, the margin converges to the maximum margin
and if the maximum margin classifier is unique, the
iterates converge in direction to $\ww^\ast_{\|\cdot\|}$.
We use this result to derive our \cref{cor:implicit_robust}.

\begin{cor}[Implicit Robustness of Steepest Descent]
\label{cor:implicit_robust}
For any linearly separable dataset and any norm $\|\cdot\|$,
steepest descent iterates
minimizing the empirical risk, $\mathcal{L}(\ww)\,$,
satisfying the conditions of \cref{thm:gunasekar_linear},
converge in direction to a maximally robust classifier,
\begin{align*}
    \argmax_{\ww}
    \{\varepsilon\,|\,y_i \ww^\top (\xx_i + \ddelta) > 0 
    ,\,~\forall i,\,\|\ddelta\|_\ast \leq \varepsilon\}\,.
\end{align*}
In particular, a maximally robust classifier against
$\ell_1$, $\ell_2$, and $\ell_\infty$ is reached, respectively, by
sign gradient descent, gradient descent, and coordinate descent.
\vspace*{-10pt}
\end{cor}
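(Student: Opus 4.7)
The plan is to assemble the corollary by chaining the two results already stated in the excerpt: \cref{thm:gunasekar_linear} on the implicit bias of steepest descent, and \cref{lem:max_robust_to_min_norm_linear} on the equivalence between maximally robust, maximum margin, and minimum norm linear classifiers. There is essentially no new analytical work to do; the content is the bookkeeping of which norm goes with which formulation, and the observation that the three named algorithms are instances of steepest descent.

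First I would apply \cref{thm:gunasekar_linear} to the norm $\|\cdot\|$ driving the steepest descent recursion \eqref{eq:steepest_descent}. Under the hypotheses on the step size and the exponential loss, the theorem implies that the normalized iterates $\ww_t/\|\ww_t\|$ achieve the optimal margin with respect to $\|\cdot\|$, and in particular that any directional limit point $\ww_\infty$ solves
\[
    \ww_\infty \in \argmax_{\ww\,:\,\|\ww\|\leq 1} \min_i y_i \ww^\top \xx_i\,.
\]
That is, the limiting direction is a maximum margin classifier for the norm $\|\cdot\|$.

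Next I would invoke \cref{lem:max_robust_to_min_norm_linear}, which asserts the equivalence of maximum margin and maximally robust classifiers, being careful about which norm plays which role. In the lemma the constraint $\|\ww\|_\ast \leq 1$ on the max-margin side is dual to the perturbation norm $\|\cdot\|$ on the robust side. Therefore, reading the equivalence in reverse, a max-margin classifier under the norm $\|\cdot\|$ coincides with a maximally robust classifier against perturbations measured in the dual norm $\|\cdot\|_\ast$. Combining this with the previous step yields exactly the statement of the corollary: the limit direction $\ww_\infty$ of steepest descent with respect to $\|\cdot\|$ is a maximally robust classifier against $\|\cdot\|_\ast$-bounded adversarial perturbations.

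Finally, I would specialize to the three concrete cases by reading off the remark after the definition of steepest descent: gradient descent, sign gradient descent, and coordinate descent are steepest descent for $\ell_2$, $\ell_\infty$, and $\ell_1$ respectively, whose duals are $\ell_2$, $\ell_1$, and $\ell_\infty$. The main subtle point, rather than an analytical obstacle, is that \cref{thm:gunasekar_linear} only guarantees directional convergence when the maximum margin classifier is unique; in the general case one has convergence of the margin value and of accumulation points. Since the lemma gives an equivalence at the level of optimizers and not just of optimal values, every such accumulation point is a maximally robust classifier, so the phrase ``converge in direction to a maximally robust classifier'' is justified in the same sense as in \cref{thm:gunasekar_linear} and requires no separate argument.
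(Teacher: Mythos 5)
Your proposal is correct and follows essentially the same route as the paper's own proof: apply \cref{thm:gunasekar_linear} to get margin convergence to the maximum margin w.r.t.\ $\|\cdot\|$, then use \cref{lem:max_robust_to_min_norm_linear} to identify a maximum margin classifier w.r.t.\ $\|\cdot\|$ with a maximally robust classifier against $\|\cdot\|_\ast$-bounded perturbations, and read off the three special cases from the duality pairs $(\ell_\infty,\ell_1)$, $(\ell_2,\ell_2)$, $(\ell_1,\ell_\infty)$. Your closing remark on the non-unique case---that every directional accumulation point attains the maximum margin and is therefore maximally robust---is a point the paper's two-sentence proof leaves implicit, and is a welcome clarification rather than a deviation.
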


\vspace*{-5pt}
\begin{proof}
By \cref{thm:gunasekar_linear},
the margin of the steepest descent iterates,
$\min_i \frac{y_i \ww_t^\top \xx_i}{\|\ww_t\|}$ ,
converges as $t \to \infty$ to the maximum margin,
$\max_{\ww : \|\ww\| \leq 1} \min_i y_i \ww^\top \xx_i$.
By  \cref{lem:max_robust_to_min_norm_linear},
any maximum margin classifier w.r.t. $\|\cdot\|$
gives a maximally robust classifier w.r.t. $\|\cdot\|_\ast$.
\vspace*{-5pt}
\end{proof}

\cref{cor:implicit_robust} implies that for
overparametrized linear models,
we obtain guaranteed robustness by an appropriate choice of
optimizer without the additional cost
and trade-off of adversarial training.
We note that \cref{thm:gunasekar_linear} and
\cref{cor:implicit_robust},
characterize linear models, but do not account for the bias $b$.
We can close the gap with an augmented input representation,
to include the bias explicitly.
Or one could preprocess the data, 
removing the mean before training.

To extend \cref{cor:implicit_robust} to deep learning models
one can use generalizations of \cref{thm:gunasekar_linear}.
For the special case of gradient descent,
\cref{thm:gunasekar_linear} has been generalized to multi-layer
fully-connected linear networks and a larger family
of  strictly monotonically decreasing loss functions
including the logistic loss~\citep[Theorem 2]{nacson2019convergence}.

\subsection{Gradient Descent on Linear Convolutional Networks}
\vspace*{-5pt}
\label{sec:implicit_conv}

In this section, we show that even for linear models,
the choice of the architecture affects implicit robustness,
which gives another alternative for achieving maximal robustness.
We use a generalization of
\cref{thm:gunasekar_linear} to linear convolutional models.

\begin{defn}[Linear convolutional network]
An $L$-layer convolutional network with
$1$-D circular convolution is parameterized using
weights of $L-1$ convolution layers,
$\ww_1,\ldots,\ww_{L-1}\in\R^d$,
and weights of a final linear layer, $\ww_L\in\R^d$,
such that the linear mapping of the network is
\vspace*{-2pt}
\begin{equation*}
\varphi_{\text{conv}}(\xx;\ww_1,\ldots,\ww_L)
\coloneqq
\ww_L^\top (\ww_{L-1} \star \cdots (\ww_1 \star \xx))\, .
\end{equation*}
Here, circular convolution is defined as
$ [\ww\star\xx]_i
\coloneqq
\frac{1}{\sqrt{d}} \sum_{k=0}^{d-1}
[\ww]_{\overline{-k}} [\xx]_{\overline{i+k}}$, where
$[\vv]_i$ denotes the $i$-th element of a vector $\vv$
for $i=0,\ldots,d-1$,
and $\overline{\phantom{i}i \phantom{i}}=i\bmod d$.~\footnote{
We use the usual definition of circular
convolution in signal processing, rather than cross-correlation,
$\ww^\downarrow \star \xx$ with $[\vv^\downarrow]_i=[\vv]_{\overline{-i}}$, which is used in deep learning literature, but not associative.}
\vspace*{-5pt}
\end{defn}

A linear convolutional network is equivalent
to a linear model with weights
$ \ww= \ww_L\star(\cdots\star (\ww_{2}\star\ww_1)) $ because
of the associative property of convolution.
In particular, for two-layer linear convolutional networks
$\ww=\ww_2\star\ww_1$.

\begin{defn}[Discrete Fourier Transform]
$\mathcal{F}(\ww)\in\C^d$ denotes the Fourier coefficients of $\ww$
where
$[\mathcal{F}(\ww)]_d = \frac{1}{\sqrt{d}}
\sum_{k=0}^{d-1} [\ww]_k \exp(-\frac{2\pi j}{d} k d)$
and $j^2=-1$.
\end{defn}

\begin{thm}[Implicit Bias towards Fourier Sparsity
(\citet{gunasekar2018implicit}, Theorem 2, 2.a)]
\label{thm:gunasekar_conv}
Consider the family of
$L$-layer linear convolutional networks and
the sequence of gradient descent iterates, $\ww_t$,
minimizing the empirical risk, $\mathcal{L}(\ww)$, with
the exponential loss, $\exp{(-z)}$.
For almost all linearly separable datasets
under known conditions on the step size and
convergence of iterates,
$\ww_t$  converges in direction
to the classifier minimizing
the norm of the Fourier coefficients given by
\vspace{-2pt}
\begin{align}
\argmin_{\ww_1,\ldots,\ww_L} \{ \|\mathcal{F}(\ww)\|_{2/L}\, |\,
y_i \langle\ww, \xx_i\rangle\geq 1,\,\forall i\}.
\end{align}
In particular, for two-layer linear convolutional networks
the implicit bias is towards the solution with minimum $\ell_1$ norm of the Fourier coefficients,
$\|\mathcal{F}(\ww)\|_1$.
For $L>2$, the convergence is to a first-order stationary
point.
\vspace*{-5pt}
\end{thm}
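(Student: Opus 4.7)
The plan is to diagonalize the convolutional layers via the Fourier transform, reduce the problem to the well-studied implicit bias of a depth-$L$ diagonal linear network, and then transport the resulting minimum-norm characterization back to a statement about $\mathcal{F}(\ww)$.

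First, I would pass to Fourier coordinates. Since $\mathcal{F}$ is unitary and circular convolution diagonalizes under it, we have, up to an explicit constant depending on $d$, $[\mathcal{F}(\ww)]_k = c\prod_{\ell=1}^{L}[\mathcal{F}(\ww_\ell)]_k$, and by Parseval's identity $\langle\ww,\xx_i\rangle=\langle\mathcal{F}(\ww),\mathcal{F}(\xx_i)\rangle$. Setting $\uu_\ell\coloneqq\mathcal{F}(\ww_\ell)$, the network becomes a complex depth-$L$ diagonal linear network acting on the transformed dataset $\{\mathcal{F}(\xx_i),y_i\}$. Because $\mathcal{F}$ is a unitary change of basis, the gradient descent dynamics on $\{\ww_\ell\}$ are, frequency-by-frequency, in correspondence with gradient descent on $\{\uu_\ell\}$ for the transformed empirical risk; the reality of $\ww_\ell$ corresponds to conjugate-symmetry of $\uu_\ell$, an invariant manifold of the flow.

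Second, I would invoke the known implicit bias for depth-$L$ diagonal linear networks trained with the exponential loss. For $L=2$ the dynamics provably converge in direction to the $\ell_1$-maximum-margin classifier in the diagonalized coordinates, i.e.\ the minimizer of $\|\uu\|_1$ subject to $y_i\langle\uu,\mathcal{F}(\xx_i)\rangle\geq 1$; for $L>2$ one only obtains convergence to a first-order stationary point of the analogous $\ell_{2/L}$ quasi-norm program, because the depth-$L$ factorization makes the induced problem nonconvex. Substituting $\uu=\mathcal{F}(\ww)$ back and using Parseval on the constraints yields the claimed $\argmin\{\|\mathcal{F}(\ww)\|_{2/L}\,|\,y_i\langle\ww,\xx_i\rangle\geq 1\}$, with the $L=2$ case giving the sharper $\ell_1$ statement.

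The main obstacle I expect is the complex/conjugate-symmetry bookkeeping: the natural Fourier parameterization is complex while $\ww_\ell$ is real, so the equivalence between real gradient descent on $\ww_\ell$ and effectively complex gradient descent on $\mathcal{F}(\ww_\ell)$ must be justified carefully, by viewing the loss as a real-analytic function of real and imaginary parts and verifying that conjugate symmetry is preserved. A secondary technical point is the ``almost all datasets'' and step-size hypotheses: to apply the depth-$L$ diagonal implicit bias theorem one needs a margin/convergence argument in the transformed problem, which requires the Fourier-transformed data to admit a strictly separating direction charging every frequency in the support of the minimizer, a genericity condition that must be checked from separability of the original data combined with a non-degeneracy assumption on $\{\mathcal{F}(\xx_i)\}$.
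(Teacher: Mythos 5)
This statement is not proved in the paper at all: it is quoted verbatim (with attribution) from \citet{gunasekar2018implicit}, Theorem~2, and the paper only uses it as an imported black box to derive \cref{cor:max_robust_to_min_norm_conv}. So there is no in-paper proof to compare against; what can be said is whether your sketch matches the strategy of the cited source, and it essentially does. The original argument likewise diagonalizes circular convolution by the DFT, so that the end-to-end predictor satisfies $[\mathcal{F}(\ww)]_k\propto\prod_{\ell}[\mathcal{F}(\ww_\ell)]_k$ and the convolutional network becomes a depth-$L$ diagonal network over the transformed data, from which the induced penalty $\|\mathcal{F}(\ww)\|_{2/L}$ and the $L=2$ versus $L>2$ dichotomy (convex $\ell_1$ program versus first-order stationarity of a quasi-norm program) fall out exactly as you describe.

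Two caveats on your sketch. First, the step you treat as citable --- ``the known implicit bias for depth-$L$ diagonal linear networks'' --- is, in the cited paper, a sibling theorem proved by the same underlying machinery (a directional-convergence analysis of gradient descent on the exponential loss for homogeneous polynomial parameterizations, yielding convergence to a KKT point of the induced minimum-penalty problem). Invoking it as an off-the-shelf lemma and then claiming the convolutional case reduces to it is close to circular unless you actually carry out that asymptotic analysis; that analysis is where all the real work (and the step-size and iterate-convergence hypotheses in the statement) lives. Second, the real diagonal-network results do not directly apply to the complex, conjugate-symmetric parameterization you obtain after the DFT; you correctly flag this, but it is not merely bookkeeping --- one must verify that the general homogeneous-parameterization theorem covers this constrained complex setting, which is how the original proof handles it. Neither point is a wrong turn, but both are places where your proposal defers the substance of the proof rather than supplying it.
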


We use this result to derive our \cref{cor:max_robust_to_min_norm_conv}.

\begin{restatable}[Maximally Robust to Perturbations with Bounded Fourier Coefficients]{cor}{cormaxrobustconv}
\label{cor:max_robust_to_min_norm_conv}
Consider the family of two-layer linear convolutional networks
and the gradient descent iterates, $\ww_t$,
minimizing the empirical risk.
For almost all linearly separable datasets under conditions
of \cref{thm:gunasekar_conv}, $\ww_t$ converges in direction
to a maximally robust classifier,
\vspace*{-2pt}
\begin{align*}
    \argmax_{\ww_1,\ldots,\ww_L}
    \{\varepsilon\,|\,y_i
    \varphi_{\text{conv}}(\xx_i+\ddelta;\{\ww_l\}_{l=1}^L) > 0
    ,\,~\forall i,\,\|\mathcal{F}(\ddelta)\|_\infty \leq \varepsilon\}\,.
\end{align*}
\end{restatable}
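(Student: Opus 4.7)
The plan is to combine \cref{thm:gunasekar_conv} (with $L=2$), which pins down the limit direction of the gradient descent iterates as the minimum Fourier-$\ell_1$ norm interpolator, with the duality argument of \cref{lem:max_robust_to_min_norm_linear} transplanted to the Fourier domain. The punchline of \cref{lem:max_robust_to_min_norm_linear} is that minimum-norm interpolation with norm $\|\cdot\|_\ast$ is equivalent to maximal robustness against perturbations bounded in $\|\cdot\|$. Since the dual of $\ell_1$ is $\ell_\infty$, if I can phrase the limit classifier as a minimum-$\ell_1$-norm interpolator \emph{in Fourier coordinates} with respect to Fourier-transformed data, then the companion statement will give exactly maximal robustness under a Fourier-$\ell_\infty$ constraint.

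The concrete steps I would follow are as follows. First, I would collapse a two-layer linear convolutional network into a single linear map by associativity of circular convolution, writing $\varphi_{\text{conv}}(\xx;\ww_1,\ww_2) = \langle \ww, \xx\rangle$ for an effective filter $\ww$ depending on $\ww_1,\ww_2$. Second, I would move to the Fourier domain: let $\hat\ww = \mathcal{F}(\ww)$, $\hat\xx_i = \mathcal{F}(\xx_i)$, and $\hat\ddelta = \mathcal{F}(\ddelta)$; Parseval's identity then yields
\begin{equation*}
\langle \ww, \xx_i + \ddelta\rangle = \langle \hat\ww, \hat\xx_i + \hat\ddelta\rangle,
\end{equation*}
so both the interpolation constraints and the adversarial constraints transfer verbatim to Fourier coordinates. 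Third, \cref{thm:gunasekar_conv} with $L=2$ identifies the limit direction as a solution of
\begin{equation*}
\argmin_{\hat\ww} \{ \|\hat\ww\|_1 \,|\, y_i\langle \hat\ww, \hat\xx_i\rangle \geq 1,\, \forall i\}.
\end{equation*}
Finally, I would rerun the proof of \cref{lem:max_robust_to_min_norm_linear} in this Fourier parametrization, using $\|\cdot\|_\ast=\|\cdot\|_1$ and its dual $\|\cdot\|=\|\cdot\|_\infty$. Pulling the resulting maximally robust statement back through $\mathcal{F}$, the constraint $\|\hat\ddelta\|_\infty\leq\varepsilon$ becomes $\|\mathcal{F}(\ddelta)\|_\infty \leq \varepsilon$ and $y_i\langle\hat\ww,\hat\xx_i+\hat\ddelta\rangle>0$ becomes $y_i\varphi_{\text{conv}}(\xx_i+\ddelta;\ww_1,\ww_2)>0$, which is precisely the claim.

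The main obstacle is that the Fourier coefficients are complex and obey conjugate symmetry (since $\ww,\xx,\ddelta$ are real), whereas \cref{lem:max_robust_to_min_norm_linear} was stated for real vectors with a real inner product. I expect two acceptable routes. Route (i): reparametrize by the independent half-spectrum (real and imaginary parts of the positive frequencies, plus the DC and, if $d$ is even, Nyquist terms), which gives a genuinely real vector space in which the Parseval inner product is bilinear and $\|\mathcal{F}(\cdot)\|_1,\|\mathcal{F}(\cdot)\|_\infty$ are ordinary $\ell_1,\ell_\infty$ norms up to fixed factors. Route (ii): extend \cref{lem:max_robust_to_min_norm_linear} to complex vectors, noting that H\"older's inequality and the dual-norm characterization underlying its proof go through unchanged for $\|\hat\vv\|_1=\sum_k |[\hat\vv]_k|$ and $\|\hat\vv\|_\infty=\max_k|[\hat\vv]_k|$. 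Either route reduces the problem to the scalar duality already established, after which the corollary follows immediately; the rest amounts to routine bookkeeping of normalization constants in the discrete Fourier transform and in the convolution theorem.
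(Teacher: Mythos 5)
Your proposal is correct and follows essentially the same route as the paper: collapse the two-layer convolutional network to a single linear map, invoke \cref{thm:gunasekar_conv} with $L=2$ to get the minimum Fourier-$\ell_1$ interpolator, and then reuse the duality argument of \cref{lem:max_robust_to_min_norm_linear} after identifying $\|\mathcal{F}(\cdot)\|_1$ on weights and $\|\mathcal{F}(\cdot)\|_\infty$ on perturbations as a dual pair (the paper does this by a direct change of variables $\ddelta\mapsto\bF^\ast\ddelta$ using unitarity of the DFT, which is the same computation as your "rerun the lemma in Fourier coordinates"). Your explicit handling of the conjugate-symmetry/realness of $\ddelta$ is a point the paper glosses over, and your observation that the maximizing $\hat\ddelta$ inherits Hermitian symmetry from $\hat\ww$ is exactly what makes the paper's complex-valued dual-norm computation valid on the real subspace.
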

\vspace*{-5pt}

Proof in \cref{proof:max_robust_to_min_norm_conv}.
\cref{cor:max_robust_to_min_norm_conv} implies
that, at no additional cost,
linear convolutional models are already maximally robust,
but w.r.t. perturbations in the Fourier domain.
We call attacks with $\ell_p$ constraints in the
Fourier domain \textit{Fourier-$\ell_p$} attacks.
\cref{sec:unit_norm_balls} depicts various norm-balls
in $3$D to illustrate the significant geometrical difference
between the Fourier-$\ell_\infty$ and other commonly used norm-balls for adversarial robustness.
One way to understand \cref{cor:max_robust_to_min_norm_conv}
is to think of perturbations that succeed in fooling a
linear convolutional network.
Any such
adversarial perturbation must have at least one frequency beyond the maximal robustness of the model.
This condition is satisfied for perturbations
with small $\ell_1$ norm in the spatial domain,
i.e.,\ only a few pixels are perturbed.

\subsection{Fourier Attacks}
\vspace*{-5pt}

\begin{wrapfigure}[10]{r}{0.5\textwidth}
\vspace*{-35pt}
\begin{minipage}{0.5\textwidth}
\begin{algorithm}[H]
   \caption{Fourier-$\ell_\infty$ Attack
   (see \cref{sec:fourier_ops})}
   \label{alg:fourier_linf_attack}
\begin{algorithmic}
   \STATE {\bfseries Input:} data $\xx$, label $y$,
   loss function $\zeta$,
   classifier $\varphi$,
   perturbation size $\varepsilon$,
   number of attack steps $m$,
   dimensions $d$,
   Fourier transform $\mathcal{F}$
   \FOR{$k=1$ {\bfseries to} $m$}
       \STATE $\hat{\gg} = \mathcal{F}(\nabla_\xx \zeta(y\varphi(\xx)))$
       \STATE $[\ddelta]_i =
       \varepsilon\frac{[\hat{\gg}]_i}{|[\hat{\gg}]_i|},\,
       \forall i\in \{0,\ldots,d-1\}$
    \STATE $\xx = \xx + \mathcal{F}^{-1}(\ddelta)$
   \ENDFOR
\end{algorithmic}
\end{algorithm}
\end{minipage}
\end{wrapfigure}

The predominant motivation for designing new attacks is to
\emph{fool} existing models. 
In contrast, our results characterize the attacks 
that existing models \emph{perform best} 
against, as measured by maximal robustness.
Based on \cref{cor:max_robust_to_min_norm_conv} we
design the Fourier-$\ell_p$ attack to verify our results.
Some adversarial attacks exist with
Fourier constraints~\citep{tsuzuku2019structural,guo2019low}.
\citet{sharma2019effectiveness} proposed a Fourier-$\ell_p$ 
attack that includes Fourier constraints
in addition to $\ell_p$ constraints in the spatial domain.
Our theoretical results suggest a more general class of
attacks with only Fourier constraints.

The maximal $\ell_p$-bounded adversarial perturbation
against a linear model in \eqref{eq:find_adv} consists of real-valued
constraints with a closed form solution.
In contrast, maximal Fourier-$\ell_p$ has complex-valued constraints.
In \cref{sec:fourier_ops} we derive the Fourier-$\ell_\infty$
attack in closed form for linear models and provide the pseudo-code
in \cref{alg:fourier_linf_attack}.
To find perturbations as close as possible to
natural corruptions such as
blur, $\varepsilon$ can be a matrix of constraints
that is multiplied elementwise by $\ddelta$.
As our visualizations in \cref{fig:image_attack} show,
adversarial perturbations under bounded
Fourier-$\ell_\infty$ 
can be controlled to be high frequency and
concentrated on subtle details of the image, or low frequency
and global.
We observe that high frequency Fourier-$\ell_\infty$ attacks
succeed more easily with smaller perturbations compared
with low frequency attacks.
The relative success of our
band-limited Fourier attacks matches
the empirical observation
that the amplitude spectra
of common $\ell_p$ attacks are largely 
band-limited as such attacks succeed more easily~\citep{yin2019fourier}.

\begin{figure*}[t]
    \centering
    \begin{subfigure}[b]{.31\textwidth}
    \begin{tabular}{c}
    $\xx$\hfill
    $\xx+\ddelta$\hfill
    \,$\ddelta$\hfill
    \\
    \includegraphics[width=.95\textwidth]{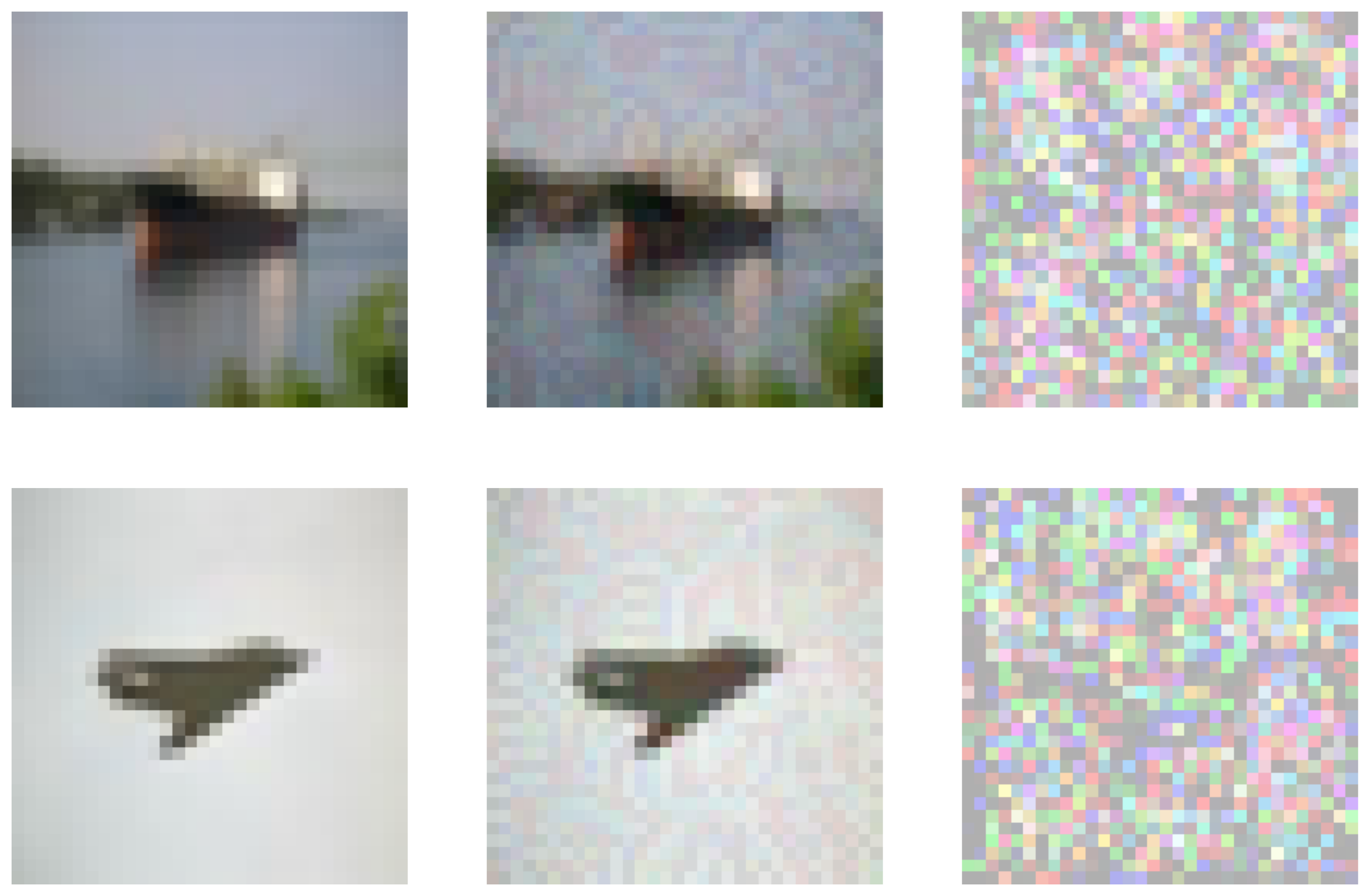}
    \end{tabular}
    \caption{$\ell_\infty$ attack}
    \label{fig:image_attack_linf}
    \end{subfigure}
    \hfill
    \begin{subfigure}[b]{.21\textwidth}
    \begin{tabular}{c}
    $\xx+\ddelta$\hfill
    \,$\ddelta$\hfill
    \\
    \includegraphics[width=.95\textwidth]{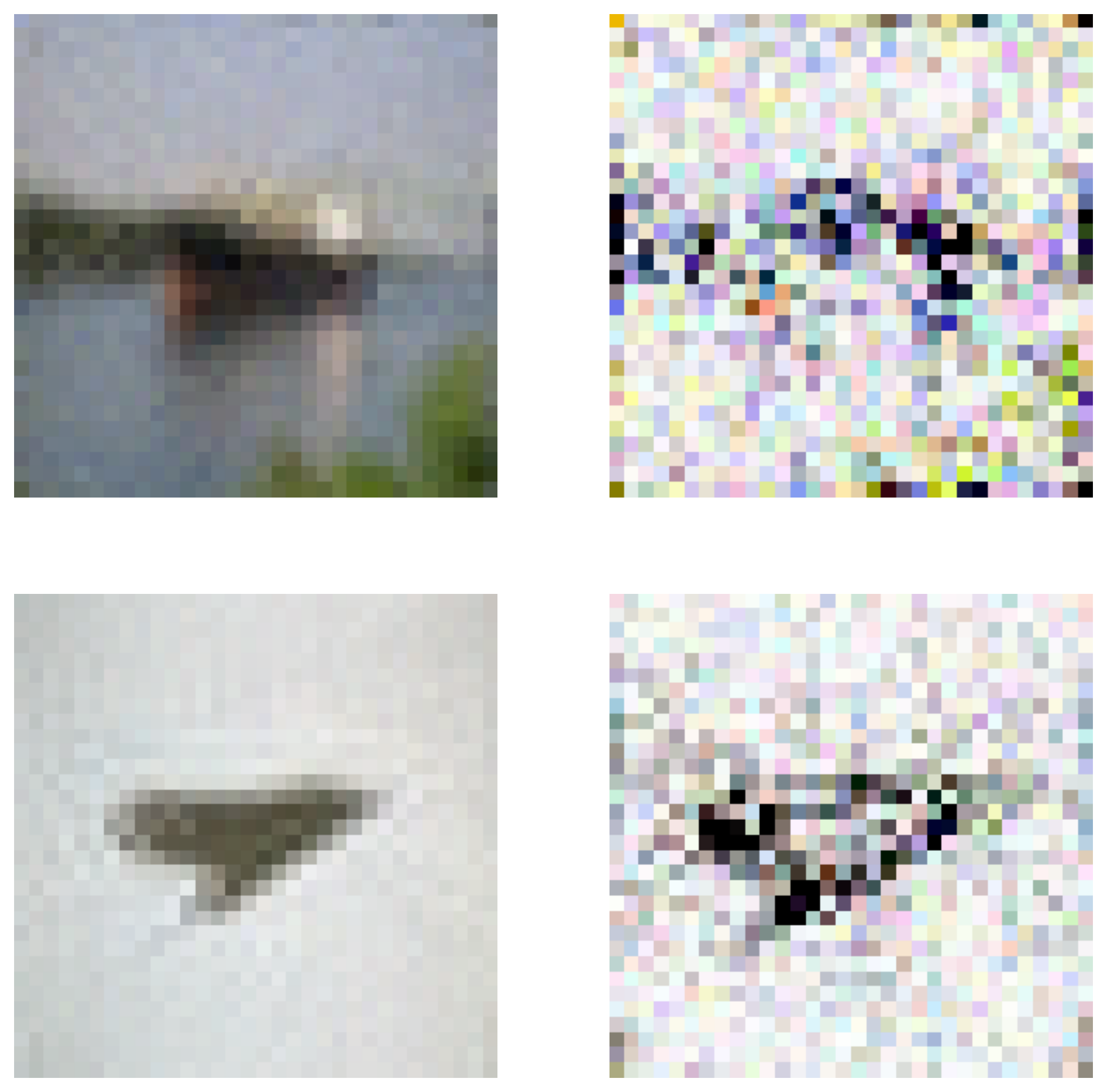}
    \end{tabular}
    \caption{Fourier-$\ell_\infty$ attack}
    \label{fig:image_attack_dftinf_any}
    \end{subfigure}
    \hfill
    \begin{subfigure}[b]{.21\textwidth}
    \begin{tabular}{c}
    $\xx+\ddelta$\hfill
    \,$\ddelta$\hfill
    \\
    \includegraphics[width=.95\textwidth]{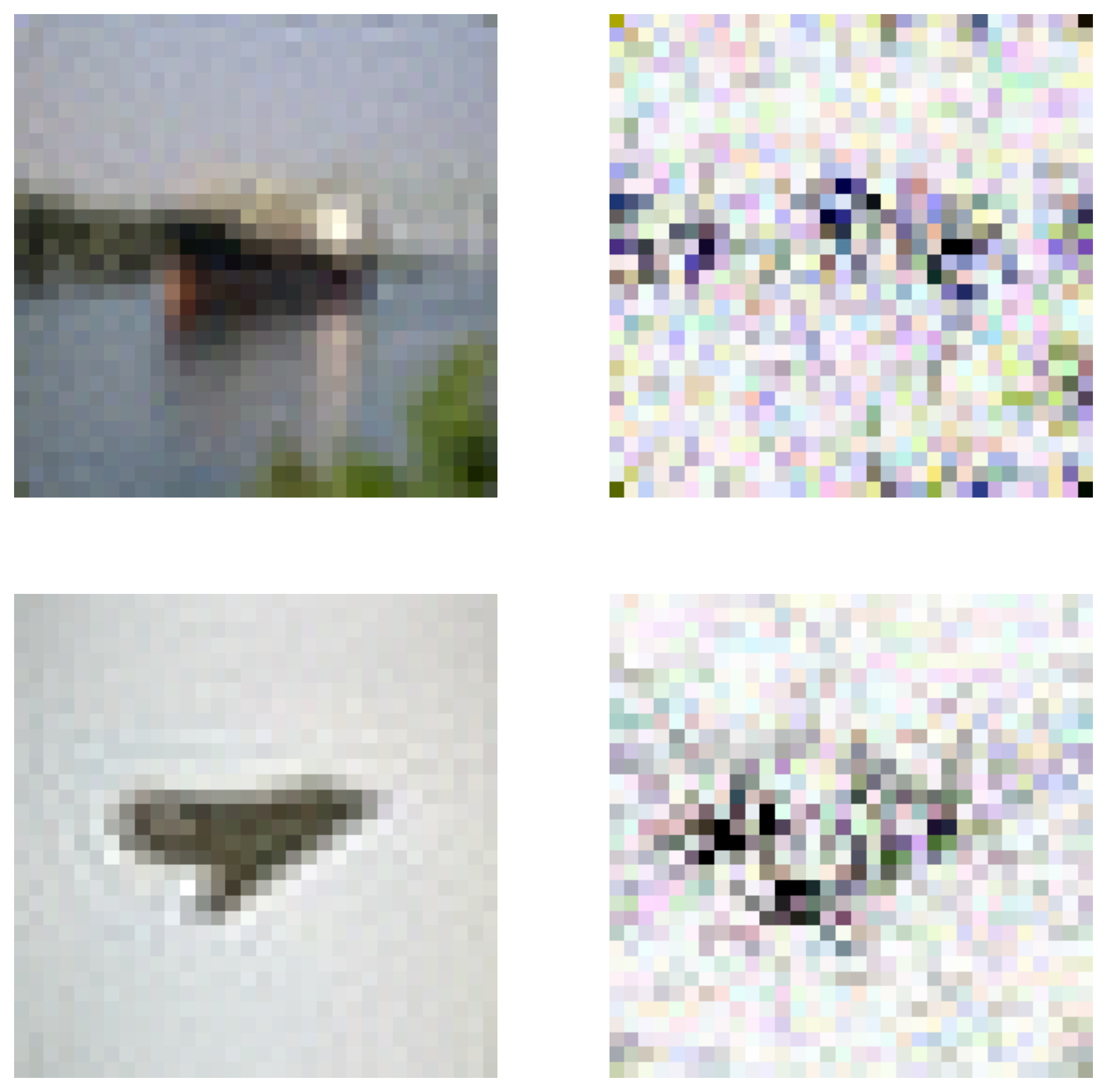}
    \end{tabular}
    \caption{High freq. F-$\ell_\infty$}
    \label{fig:image_attack_dftinf_highf}
    \end{subfigure}
    \hfill
    \begin{subfigure}[b]{.21\textwidth}
    \begin{tabular}{c}
    $\xx+\ddelta$\hfill
    \,$\ddelta$\hfill
    \\
    \includegraphics[width=.95\textwidth]{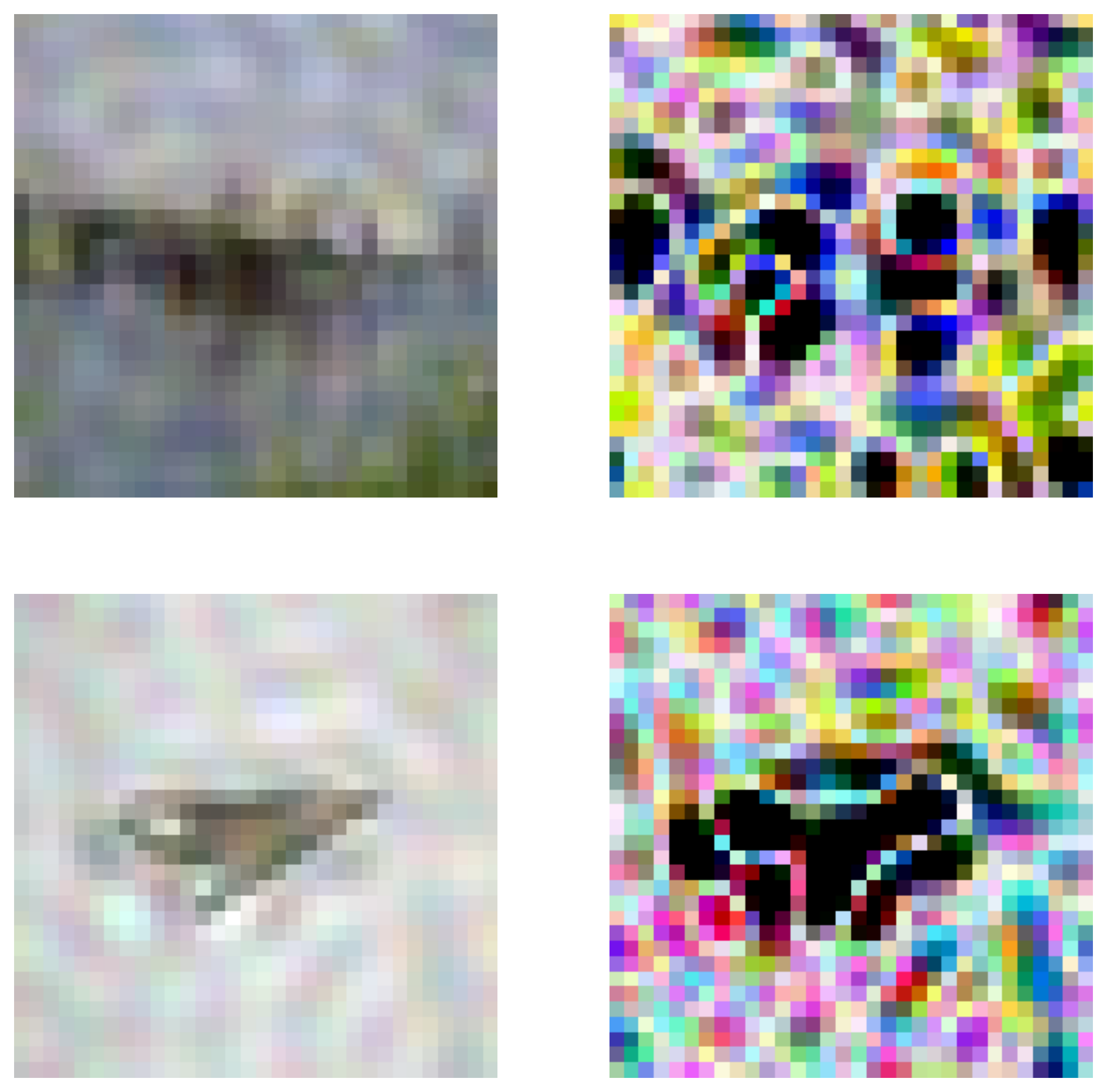}
    \end{tabular}
    \caption{Low Freq. F-$\ell_\infty$}
    \label{fig:image_attack_dftinf_lowf}
    \end{subfigure}
    \hfill
    \vspace*{-4pt}
    \caption{
    \textbf{Adversarial attacks ($\ell_\infty$ and Fourier-$\ell_\infty$)
    against CIFAR-10 classification models.}
    Fourier-$\ell_\infty$ perturbations
    (\subref{fig:image_attack_dftinf_any}) in the spatial
    domain are concentrated around subtle details of the object
    (darker means stronger perturbation).
    In contrast, $\ell_\infty$ perturbations
    (\subref{fig:image_attack_linf})
    are perceived by people as random noise.
    Fourier-$\ell_\infty$ can also be controlled to be
    high or low frequency
    (\subref{fig:image_attack_dftinf_highf},
    \subref{fig:image_attack_dftinf_lowf}). It is more difficult to attack a standard model
    with only low frequency perturbations
    (for all attacks $\varepsilon=8/255$ but for low frequency Fourier-$\ell_\infty$ $\varepsilon=50/255$, otherwise
    attack fails).
    \cref{sec:fourier_vis} shows 
    visualizations for variety of models in RobustBench.
    }
    \label{fig:image_attack}
    \vspace*{-12pt}
\end{figure*}

\section{Explicit Regularization}
\vspace*{-6pt}
\label{sec:explicit_reg}

Above we discussed the impact of optimization method
and model architecture on robustness. Here, 
we discuss explicit regularization as
another choice that affects robustness.

\begin{defn}[Regularized Classification]
\label{def:reg_classifier}
The regularized empirical risk minimization problem
for linear classification
is defined as
$\hat{\ww}(\lambda) =
\argmin_\ww \E_{(\xx,y) \sim \mathcal{D}}
\zeta(y\ww^\top \xx) + \lambda \|\ww\|$,
where $\lambda$ denotes a regularization constant,
$\zeta$ is a monotone loss function,
and $\mathcal{D}$ is a dataset. For simplicity we assume
this problem has a unique solution while the original ERM can
have multiple solutions.
\end{defn}

\begin{thm}[Maximum Margin Classifier using Regularization
(\citet{rosset2004margin}, Theorem 2.1)]
\label{thm:rosset2004_reg}
Consider
linearly separable finite
datasets and monotonically non-increasing
loss functions. Then as $\lambda\rightarrow 0$,
the sequence of solutions, $\hat{\ww}(\lambda)$,
to the regularized problem  in
\cref{def:reg_classifier}, 
converges in direction to a maximum margin classifier
as defined in \eqref{eq:max_margin}. Moreover, if the maximum margin
classifier is unique,
\vspace*{-12pt}
\begin{align}
\lim_{\lambda\rightarrow0}
\frac{\hat{\ww}(\lambda)}{\|\hat{\ww}(\lambda)\|}
&= \argmax_{\ww : \|\ww\| \leq 1}
\min_i y_i \ww^\top \xx_i\, .
\end{align}
\end{thm}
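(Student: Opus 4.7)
The strategy is to show (i) the norm $\|\hat{\ww}(\lambda)\|$ diverges as $\lambda\to 0$, and (ii) any subsequential limit of $\hat{\ww}(\lambda)/\|\hat{\ww}(\lambda)\|$ attains the maximum margin. Uniqueness of the max-margin classifier then upgrades this to convergence of the full sequence. Throughout, set $c_\lambda := \|\hat{\ww}(\lambda)\|$, $\hat{\vv}_\lambda := \hat{\ww}(\lambda)/c_\lambda$, and $\gamma(\vv) := \min_i y_i\vv^\top\xx_i$. By linear separability, fix a direction $\ww^\star$ with $\|\ww^\star\|=1$ realizing the maximum margin $\gamma^\star := \max_{\|\ww\|\le 1}\gamma(\ww) > 0$.

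\textbf{Step 1 (diverging norm).} Evaluating the regularized objective of \cref{def:reg_classifier} at the candidate $c\ww^\star$ and using monotonicity of $\zeta$ yields the upper bound $n\,\zeta(c\gamma^\star) + \lambda c$. Since $\zeta$ is non-increasing with $\zeta(z)\to 0$ as $z\to\infty$, for every fixed $\lambda$ the minimum over $c$ of this upper bound tends to $0$ and is achieved at some $c(\lambda)$ that grows without bound as $\lambda\to 0$. If $\|\hat{\ww}(\lambda_k)\|$ stayed bounded along some $\lambda_k\to 0$, continuity of $\zeta$ together with its non-vanishing on compacts would force the data-fit portion of the objective at $\hat{\ww}(\lambda_k)$ to stay bounded away from $0$, contradicting the fact that the infimum can be driven arbitrarily close to zero. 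Hence $c_\lambda \to \infty$.

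\textbf{Step 2 (limiting direction is max-margin).} By compactness of the unit sphere, any sequence $\lambda_k \to 0$ admits a subsequence along which $\hat{\vv}_{\lambda_k} \to \vv$ with $\|\vv\|=1$. Compare the objective at $\hat{\ww}(\lambda_k)$ with its value at the competitor $c_{\lambda_k}\ww^\star$: both points have norm $c_{\lambda_k}$, so the regularization penalties cancel and the comparison reduces to the data-fit term. Monotonicity of $\zeta$ gives $\sum_i \zeta(c_{\lambda_k} y_i \ww^{\star\top}\xx_i) \le n\,\zeta(c_{\lambda_k}\gamma^\star)$ and $\sum_i \zeta(c_{\lambda_k} y_i \hat{\vv}_{\lambda_k}^\top\xx_i) \ge \zeta(c_{\lambda_k}\gamma(\hat{\vv}_{\lambda_k}))$. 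If $\gamma(\vv) < \gamma^\star$, then for $k$ large there is a fixed gap $\gamma(\hat{\vv}_{\lambda_k}) \le \gamma^\star - \eta$ for some $\eta > 0$; combined with $c_{\lambda_k}\to\infty$ and the tail decay of $\zeta$, the ratio $n\,\zeta(c_{\lambda_k}\gamma^\star)/\zeta(c_{\lambda_k}(\gamma^\star - \eta))$ tends to $0$, so $c_{\lambda_k}\ww^\star$ strictly outperforms $\hat{\ww}(\lambda_k)$ for large $k$, contradicting optimality. Every cluster direction $\vv$ therefore satisfies $\gamma(\vv)=\gamma^\star$.

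\textbf{Step 3 and main obstacle.} Since every subsequential limit of $\hat{\vv}_\lambda$ attains the maximum margin and this maximizer is unique by assumption, the full sequence $\hat{\vv}_\lambda$ converges to $\argmax_{\|\ww\|\le 1}\gamma(\ww)$. The sharpest technical point is the tail-ratio step in Step 2: the bound $n\,\zeta(c\gamma^\star)/\zeta(c(\gamma^\star-\eta)) \to 0$ is automatic for losses with at least exponential decay (exponential, logistic, etc.), but genuine monotonicity alone is not enough and a precise margin-based decay hypothesis on $\zeta$ must be added, as formalized by Rosset et al. With that hypothesis in hand, the remaining structure is a compactness plus objective-comparison schema that goes through uniformly in the choice of norm $\|\cdot\|$.
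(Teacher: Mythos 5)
The paper does not actually prove this theorem: it is imported verbatim from \citet{rosset2004margin} (their Theorem 2.1), with only the remark that the original $\ell_p$-specific proof goes through for any convex norm. Your argument is, in substance, a reconstruction of that cited proof --- diverging norm, comparison against a same-norm competitor pointed in the max-margin direction, a tail-ratio condition on the loss, then compactness plus uniqueness of the maximizer --- so there is no genuine divergence of approach to report, and Steps 2 and 3 are sound as written (the regularization terms cancel because the competitor is rescaled to norm $c_{\lambda_k}$, the lower bound on the data-fit term uses only nonnegativity of $\zeta$, and continuity of the margin functional justifies the fixed gap $\eta$).

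Two caveats, one of which you already flag. First, as you correctly observe, ``monotonically non-increasing'' as literally stated is not enough for the tail-ratio step; \citet{rosset2004margin} assume a margin-sensitive decay condition (essentially $\zeta(t(1-\epsilon))/\zeta(t)\to\infty$ for every $\epsilon>0$), and the paper itself concedes that the loss conditions are delicate by deferring to Appendix A of \citet{nacson2019convergence}. Making this hypothesis explicit is the right call. Second, your Step 1 additionally uses that $\zeta$ is strictly positive everywhere and vanishes only at $+\infty$. For a non-increasing loss that is identically zero past some point (e.g.\ the hinge loss, which satisfies the theorem's stated hypotheses), $\|\hat{\ww}(\lambda)\|$ does \emph{not} diverge --- it converges to the finite norm of the hard-margin solution --- so the claim $c_\lambda\to\infty$ fails there even though the theorem's conclusion still holds, by the easier observation that for small $\lambda$ the minimizer must have zero loss and minimal norm among such points. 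This is consistent with the actual hypotheses of \citet{rosset2004margin}, but is worth stating, since the theorem as transcribed in the paper nominally covers such losses.
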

\vspace*{-7pt}
The original proof in \citep{rosset2004margin}
was given specifically for $\ell_p$
norms, however we observe that their proof only requires
convexity of the norm, so we state it more generally.
Quasi-norms such as $\ell_p$
for $p<1$ are not covered by this theorem. In addition,
the condition on the loss function is weaker than
our strict monotonic decreasing condition as shown
in \citep[Appendix A]{nacson2019convergence}.

We use this result to derive our \cref{cor:reg_is_max_margin}.

\begin{cor}[Maximally Robust Classifier via
Infinitesimal Regularization]
\label{cor:reg_is_max_margin}
For linearly separable data, under conditions of
\cref{thm:rosset2004_reg},
the sequence of solutions to regularized classification problems
converges in direction to a maximally robust classifier. That is,
$\lim_{\lambda\rightarrow 0}\hat{\ww}(\lambda)/\|\hat{\ww}(\lambda)\|$
converges to a solution of
$\argmax_{\ww}
\{\varepsilon\,|\,y_i \ww^\top (\xx_i + \ddelta) > 0 
,\,~\forall i,\,\|\ddelta\|_\ast \leq \varepsilon\}$.
\vspace*{-12pt}
\end{cor}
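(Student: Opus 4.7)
The plan is to chain Theorem 4.3 with Lemma 2.3, exactly in the same spirit as the proof of Corollary 3.2. The entire argument is a two-step composition: Theorem 4.3 identifies the direction of the vanishing-regularization limit as a maximum margin classifier with respect to the regularizing norm $\|\cdot\|$, and Lemma 2.3 identifies any such maximum margin classifier with a maximally robust classifier with respect to the dual norm $\|\cdot\|_\ast$. Since the regularized problem in Definition 4.1 is stated without an explicit bias $b$, I would adopt the same augmented-input convention used in Section 3, so that Lemma 2.3 applies verbatim.

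More concretely, I would first invoke Theorem 4.3 under the standing hypotheses (linear separability of the finite dataset and a monotonically non-increasing loss, so in particular our $\zeta$ with $\zeta'<0$ qualifies). This yields
\begin{align*}
\lim_{\lambda\to 0}\frac{\hat{\ww}(\lambda)}{\|\hat{\ww}(\lambda)\|}\in \argmax_{\ww:\|\ww\|\le 1}\min_i y_i\,\ww^\top \xx_i,
\end{align*}
i.e., every accumulation direction is a maximum margin classifier with respect to the regularizing norm $\|\cdot\|$. Next, I would apply Lemma 2.3, which states the equivalence between problems (\ref{eq:max_margin}) and (\ref{eq:max_robust}): any maximum margin classifier with respect to $\|\cdot\|$ is in particular a maximally robust classifier against perturbations satisfying $\|\ddelta\|_\ast\le\varepsilon$. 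Composing the two statements gives exactly the displayed conclusion of Corollary 4.4.

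The only potential subtleties are bookkeeping rather than mathematical. First, Theorem 4.3 is stated sharply only when the maximum margin classifier is unique; when it is not, one has a set-valued limit of directions, but Lemma 2.3 holds for \emph{any} maximum margin classifier, so the statement of Corollary 4.4 (which only asserts convergence to \emph{a} maximally robust classifier, not a unique one) goes through regardless. Second, the hypothesis on the loss function in Theorem 4.3 is slightly weaker than the strictly monotonically decreasing loss assumed globally in the paper, and convexity of $\|\cdot\|$ (observed after the theorem) is trivially satisfied for a norm, so no new assumption is required. I expect no genuine obstacle beyond ensuring the dual-norm swap is correctly carried out when transferring the result of Lemma 2.3.
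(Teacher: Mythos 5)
Your proposal is correct and follows essentially the same route as the paper's own proof: invoke \cref{thm:rosset2004_reg} to identify the limiting direction as a maximum margin classifier w.r.t.\ $\|\cdot\|$, then apply \cref{lem:max_robust_to_min_norm_linear} to convert it into a maximally robust classifier w.r.t.\ $\|\cdot\|_\ast$. Your extra remarks on non-uniqueness and the augmented-input handling of the bias are sensible bookkeeping that the paper leaves implicit, but they do not change the argument.
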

\begin{proof}
\vspace*{-2pt}
By \cref{thm:rosset2004_reg},
the margin of the sequence of regularized classifiers,
$\min_i y_i \frac{\hat{\ww}(\lambda)^\top}{\|\hat{\ww}(\lambda)\|} \xx_i$,
converges to the maximum margin,
$\max_{\ww : \|\ww\| \leq 1} \min_i y_i \ww^\top \xx_i$.
By \cref{lem:max_robust_to_min_norm_linear},
any maximum margin classifier w.r.t. $\|\cdot\|$
gives a maximally robust classifier w.r.t. $\|\cdot\|_\ast$.
\end{proof}
\vspace*{-9pt}
Assuming the solution to the
regularized problem is unique,
the regularization term replaces other implicit
biases in minimizing the empirical risk.
The regularization coefficient
controls the trade-off between robustness
and standard accuracy.
The advantage of this formulation compared with
adversarial training is that we do not need the knowledge
of the maximally robust $\varepsilon$ to find
a maximally robust classifier. It suffices to choose
an infinitesimal regularization coefficient.
\citet[Theorem 4.1]{wei2019regularization}
generalized \cref{thm:rosset2004_reg} for a family of
classifiers that includes fully-connected networks with ReLU
non-linearities, which allows for potential extension of
\cref{cor:reg_is_max_margin} to non-linear models.
There remain gaps in this extension
(see \cref{sec:nonlinear}).

Explicit regularization has been explored
as an alternative approach to
adversarial training~\citep{hein2017formal,
sokolic2017robust,
zhang2019theoretically,
qin2019adversarial,
guo2020connections}.
To be clear, we do not propose a new regularization method but rather,
we provide a framework for deriving and guaranteeing
the robustness of existing and future regularization methods.

\vspace*{-7pt}

\section{Experiments}
\label{sec:experiments}
\vspace*{-7pt}

\begin{figure*}[t]
\vspace*{-10pt}
\centering
\begin{subfigure}[c]{0.26\textwidth}
\includegraphics[width=\textwidth]{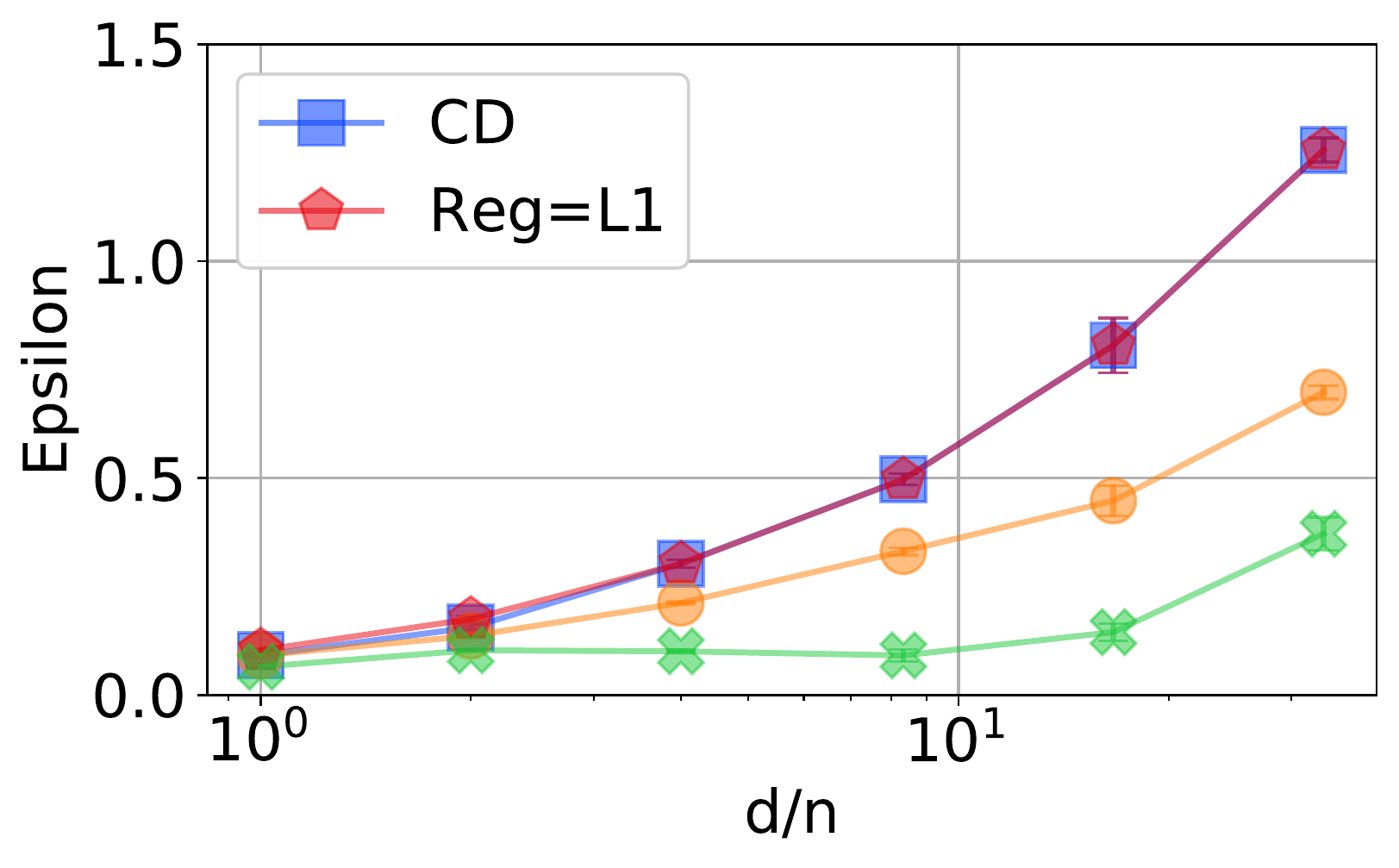}
\caption{$\ell_\infty$ attack}
\label{fig:linear_max_eps_linf}
\end{subfigure}
\hspace{1pt}
\begin{subfigure}[c]{0.26\textwidth}
\includegraphics[width=\textwidth]{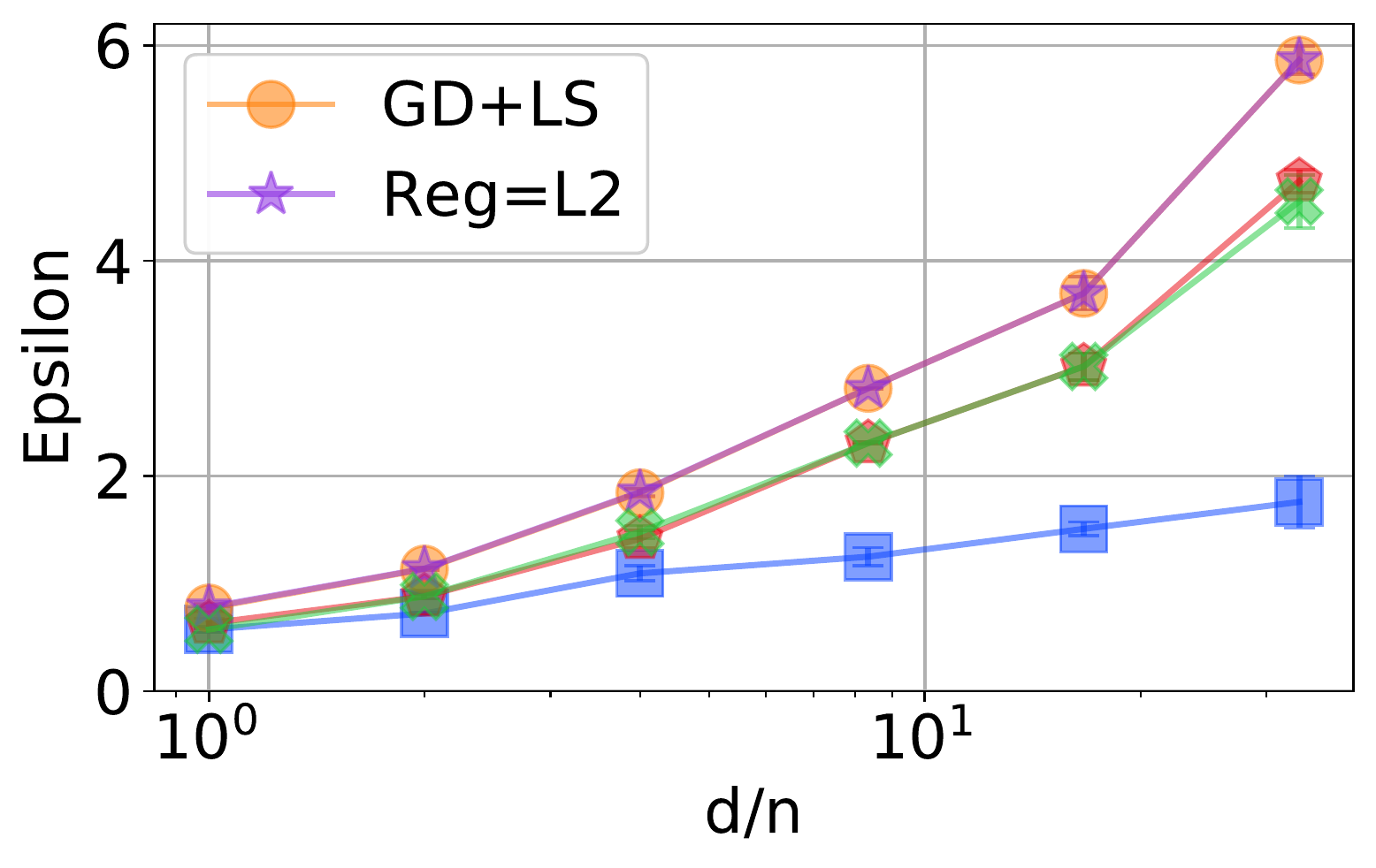}
\caption{$\ell_2$ attack}
\label{fig:linear_max_eps_l2}
\end{subfigure}
\hspace{1pt}
\begin{subfigure}[c]{0.26\textwidth}
\includegraphics[width=\textwidth]{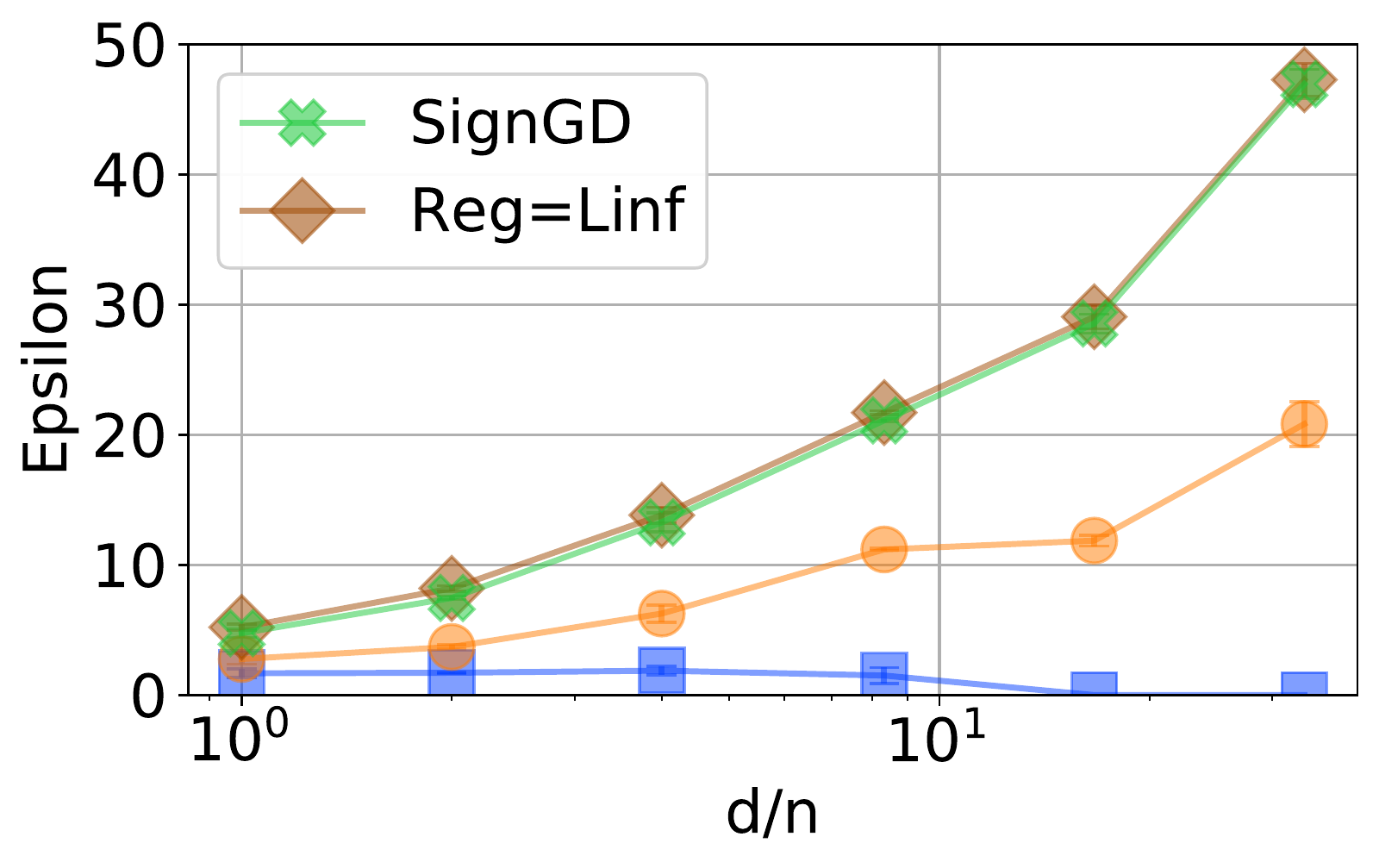}
\caption{$\ell_1$ attack}
\label{fig:linear_max_eps_l1}
\end{subfigure}
\hspace{1pt}
\begin{subfigure}[c]{0.12\textwidth}
\vspace*{-25pt}
\fbox{
\includegraphics[width=\textwidth]{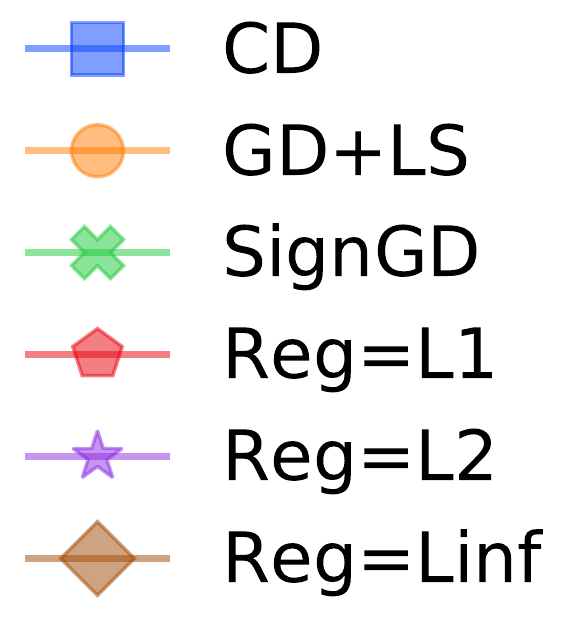}}
\end{subfigure}
\vspace*{-5pt}
\caption{\textbf{Maximally robust perturbation size ($\varepsilon$)
for linear models against $\ell_\infty$, $\ell_2$, and $\ell_1$ attacks.}
For each attack, there exists one optimizer and one regularization method
that finds a maximally robust classifier
(inner legends). 
We compare Coordinate Descent (CD), Gradient Descent with Line Search
(GD+LS), Sign Gradient Descent (SignGD), and explicit $\ell_1$, $\ell_2$,
and $\ell_\infty$ regularization.
The gap between methods grows with the overparametrization ratio ($d/n$).
(More figures in \cref{sec:margin})}
\label{fig:linear_max_eps}
    \vspace*{-10pt}
\end{figure*}

This section empirically compares approaches to finding
maximally robust classifiers.
\cref{sec:cifar10} evaluates the robustness of
CIFAR-10~\citep{krizhevsky2009learning}
image classifiers against our Fourier-$\ell_\infty$ attack. We
implement our attack in AutoAttack~\citep{croce2020reliable}
and  evaluate the robustness
of recent defenses available in
RobustBench~\citep{croce2020robustbench}.
Details of the experiments and additional
visualizations are in \cref{sec:exp_ext}.

\subsection{Maximally Robust to
\texorpdfstring{$\ell_\infty,\ell_2,\ell_1$}{Linf, L2, L1}, and Fourier-\texorpdfstring{$\ell_\infty$}{Linf} Bounded Attacks}
\vspace*{-5pt}

\begin{wrapfigure}[12]{r}{0.5\textwidth}
\vspace*{-12pt}
\centering
\begin{subfigure}[l]{.6\linewidth}
\includegraphics[width=\textwidth]{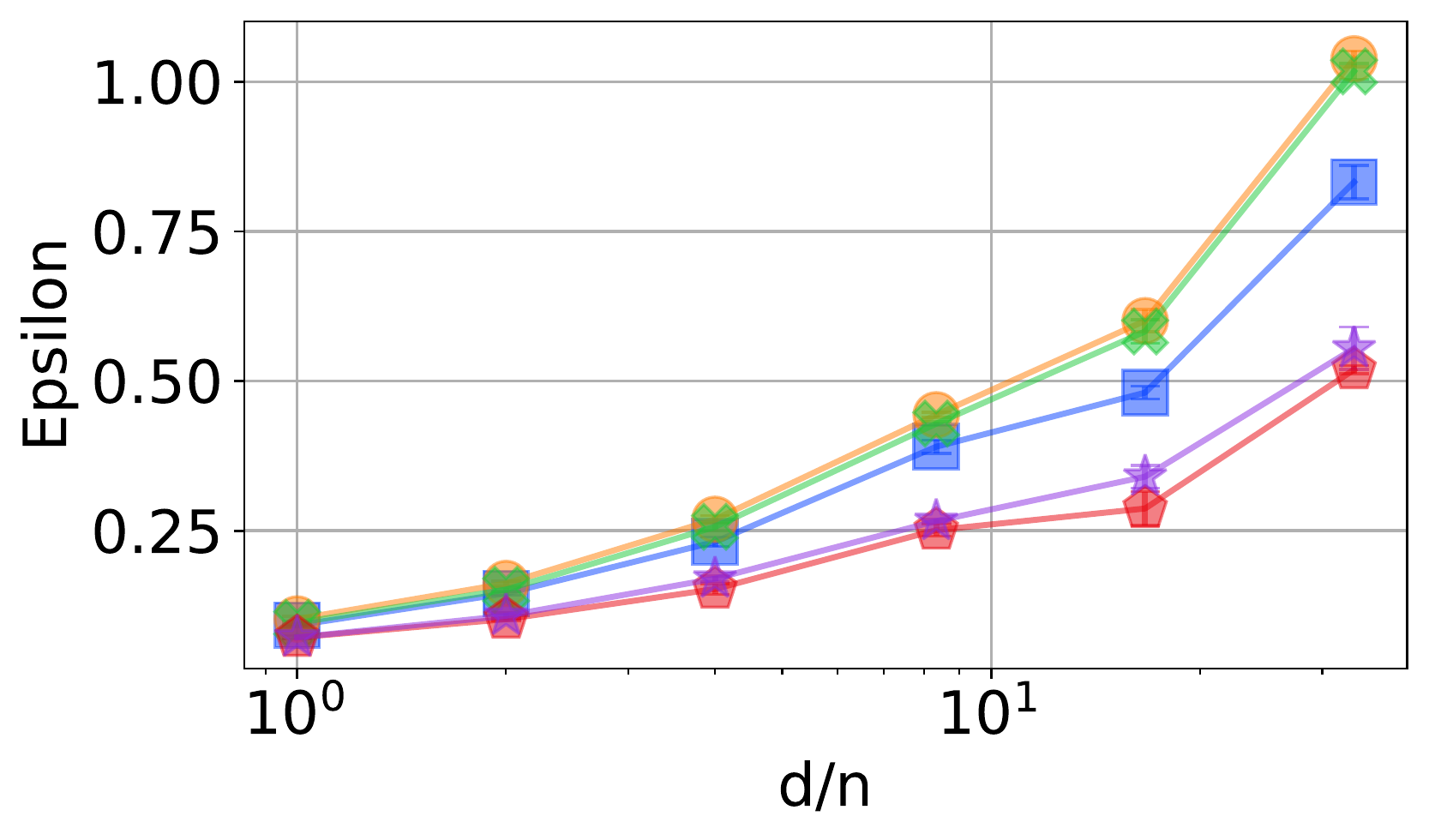}
\end{subfigure}
\begin{subfigure}[l]{.35\linewidth}
\vspace*{-25pt}
\fbox{
\includegraphics[width=\textwidth]{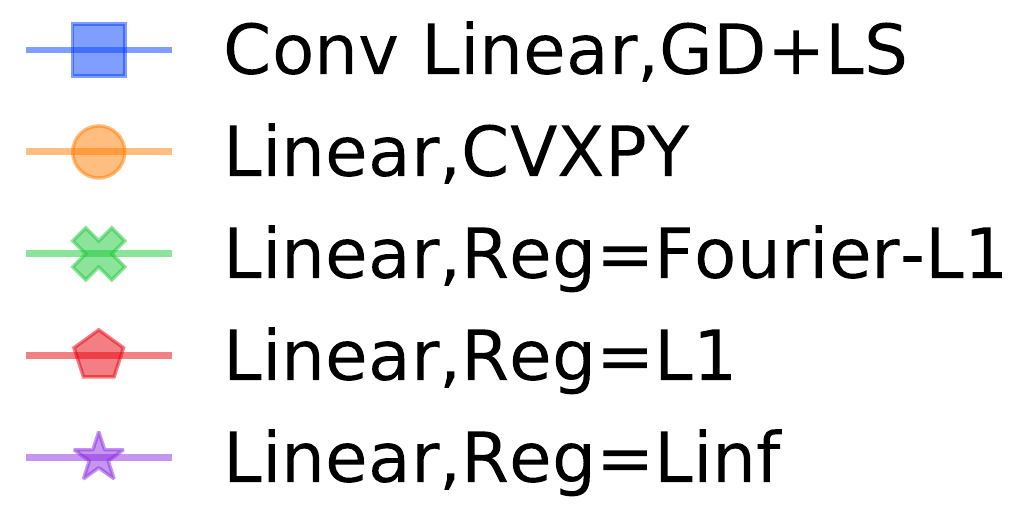}}
\end{subfigure}
\vspace*{-0.3cm}
\caption{\textbf{Maximally robust $\varepsilon$ against
Fourier-$\ell_\infty$ attack.}
Explicit Fourier-$\ell_1$ regularization finds
a maximally robust classifier as it achieves similar robustness
as CVXPY's solution.
A linear convolutional model converges to a solution
more slowly.}
\label{fig:conv}
\vspace*{-5pt}
\end{wrapfigure}

Figs.\ \ref{fig:linear_max_eps} and \ref{fig:conv} plot the maximally robust $\epsilon$ as a function of the overparametrization ratio $d/n$, where $d$ is the model dimension and $n$ is the number of data points.
\cref{fig:linear_max_eps} shows robustness against $\ell_\infty$, $\ell_2$, and $\ell_1$ attacks for linear models.
Coordinate descent and explicit $\ell_1$ regularization
find a maximally robust $\ell_\infty$ classifier.
Coordinate descent and $\ell_2$ regularization
find a maximally robust $\ell_2$ classifier.
Sign gradient descent and $\ell_\infty$ regularization
find a maximally robust $\ell_1$ classifier.
The gap between margins grows as $d/n$ increases.
\cref{fig:conv} shows robustness against
Fourier-$\ell_\infty$ attack; training a 2-layer
linear convnet with gradient descent converges to a maximally robust classifier, albeit slowly.

For these plots we synthesized linearly separable data focusing on overparametrized classification problems (i.e., $ d > n $).
Plotting the overparametrization ratio shows how robustness
changes as models become more complex.
We compare models by computing the maximal $\varepsilon$
against which they are robust, or equivalently,
the margin for linear models, $\min_i y_i\ww^\top\xx_i/\|\ww\|$.
As an alternative to the margin, we estimate the maximal $\varepsilon$
for a model by choosing a range of potential values,
generating adversarial samples, and finding the largest value against which the classification error is zero.
Generating adversarial samples involves optimization, and requires
more implementation detail compared with computing the margin.
Plots in this section are based on generating adversarial
samples to match common practice in the evaluation of non-linear models.
Matching margin plots are presented in \cref{sec:margin},
which compare against the solution found using CVXPY~\citep{diamond2016cvxpy}
and adversarial training given the maximal $\varepsilon$.
Our plots depict mean and error bars for $3$ random seeds.

\vspace*{-7pt}
\subsection{Plotting the Trade-offs}
\label{sec:tradeoffs}
\vspace*{-7pt}
\begin{figure*}[t]
\vspace*{-0.15cm}
\centering
\begin{subfigure}[b]{0.24\textwidth}
\includegraphics[width=\textwidth]{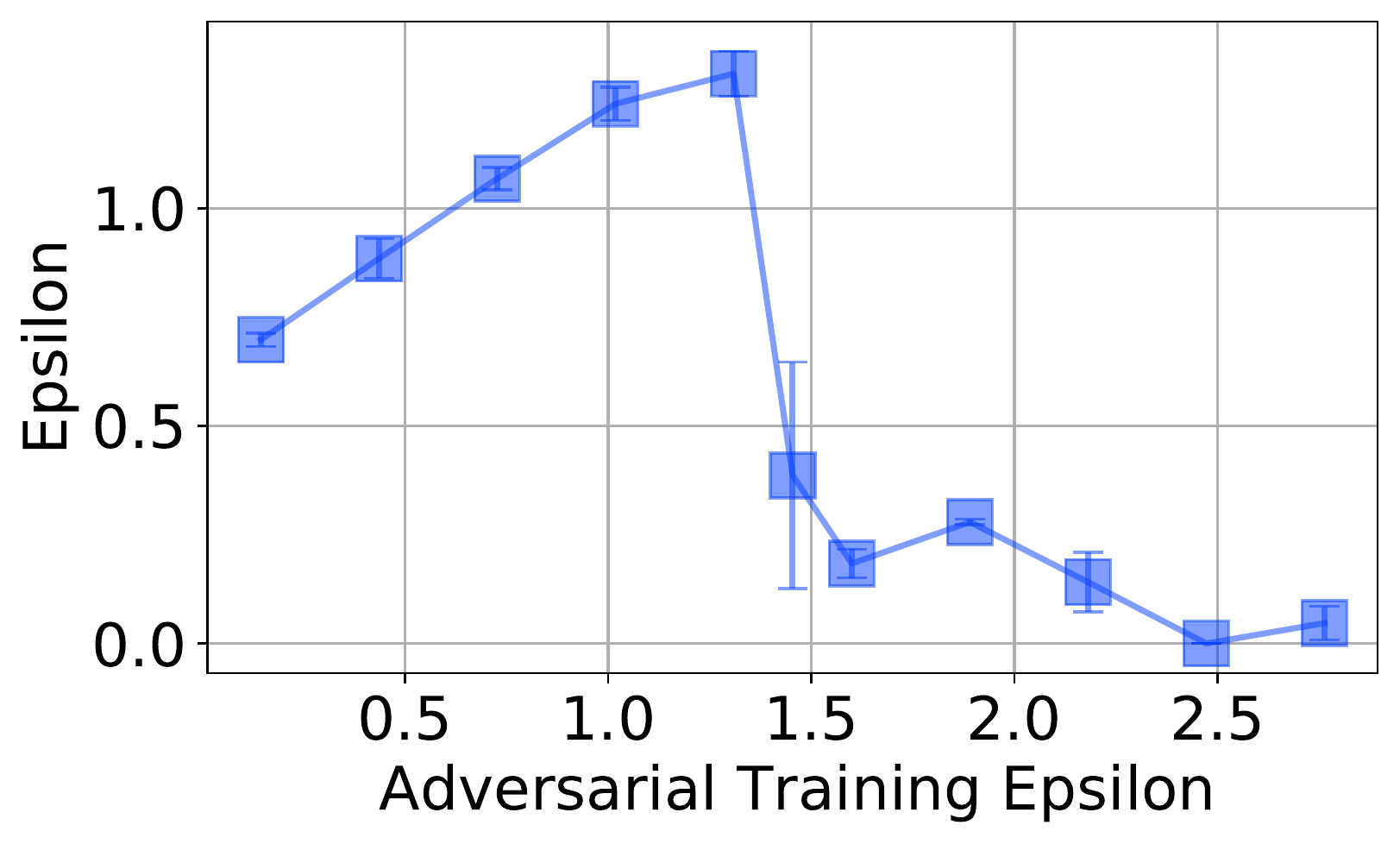}
\caption{Adversarial training}
\label{fig:tradeoffs_at}
\end{subfigure}
\begin{subfigure}[b]{0.24\textwidth}
\includegraphics[width=\textwidth]{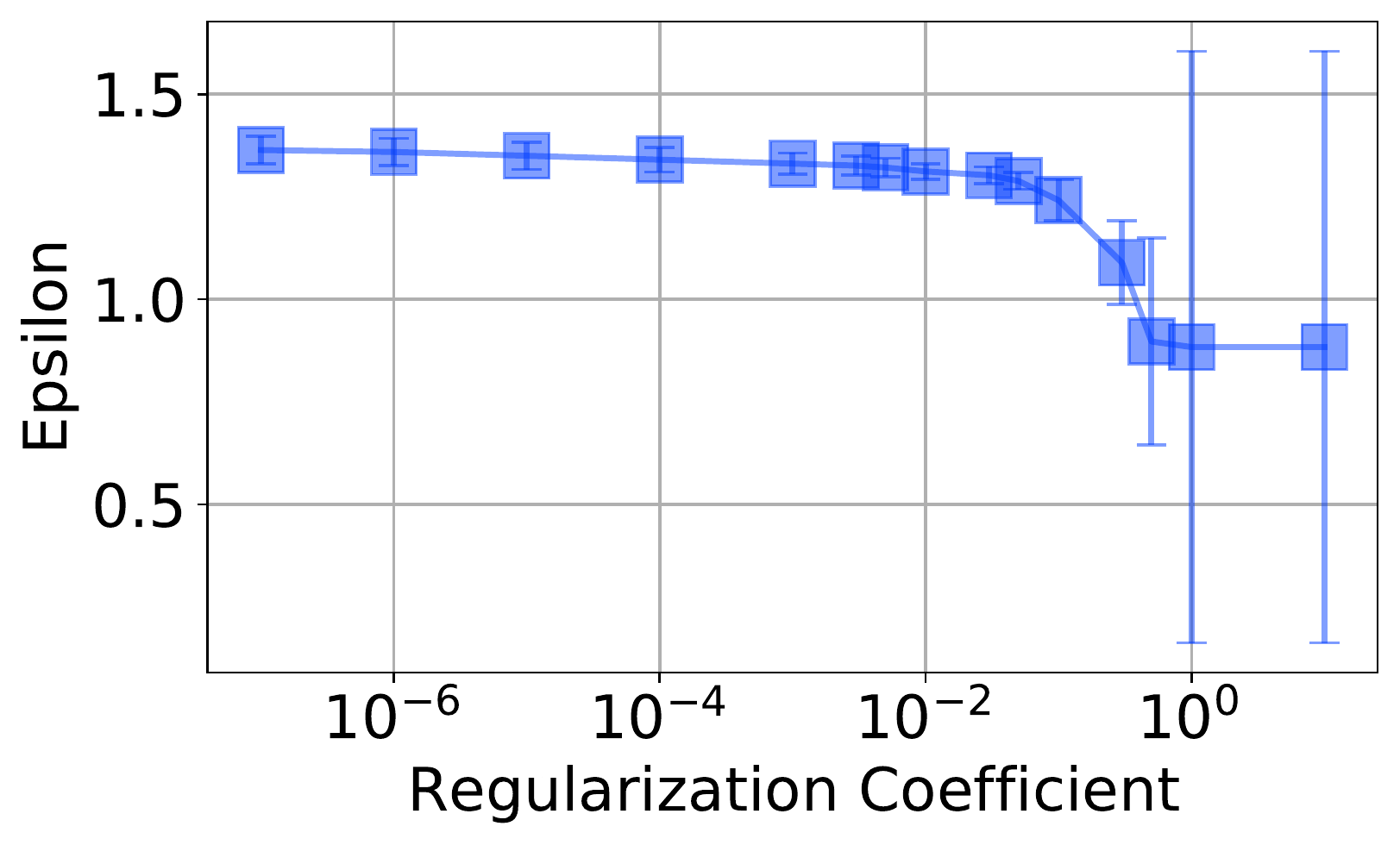}
\caption{$\ell_1$ regularization}
\label{fig:tradeoffs_l1}
\end{subfigure}
\begin{subfigure}[b]{0.24\textwidth}
\includegraphics[width=\textwidth]{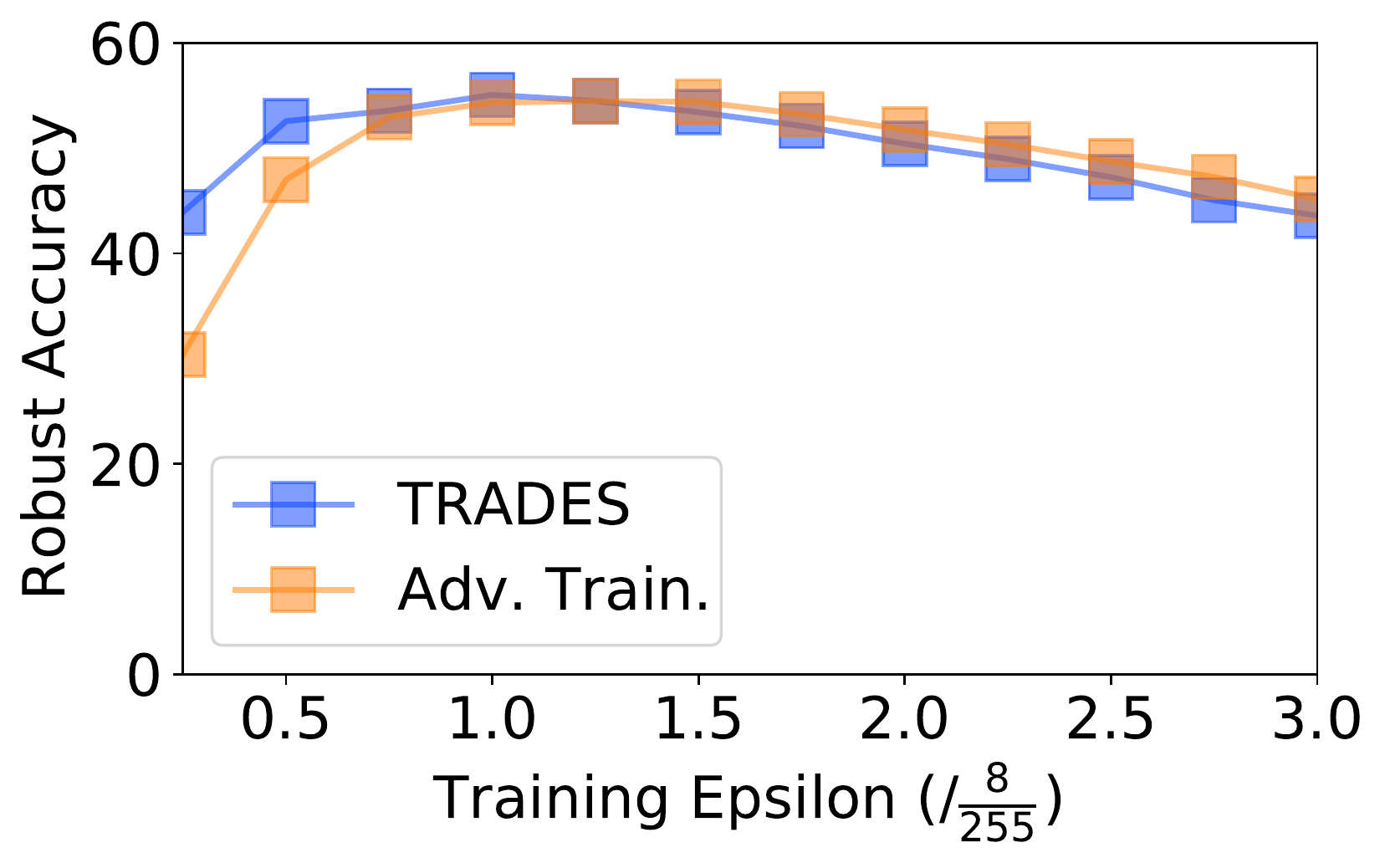}
\caption{CIFAR-10 Train. Eps.}
\label{fig:tradeoffs_cifar10_eps}
\end{subfigure}
\begin{subfigure}[b]{0.24\textwidth}
\includegraphics[width=\textwidth]{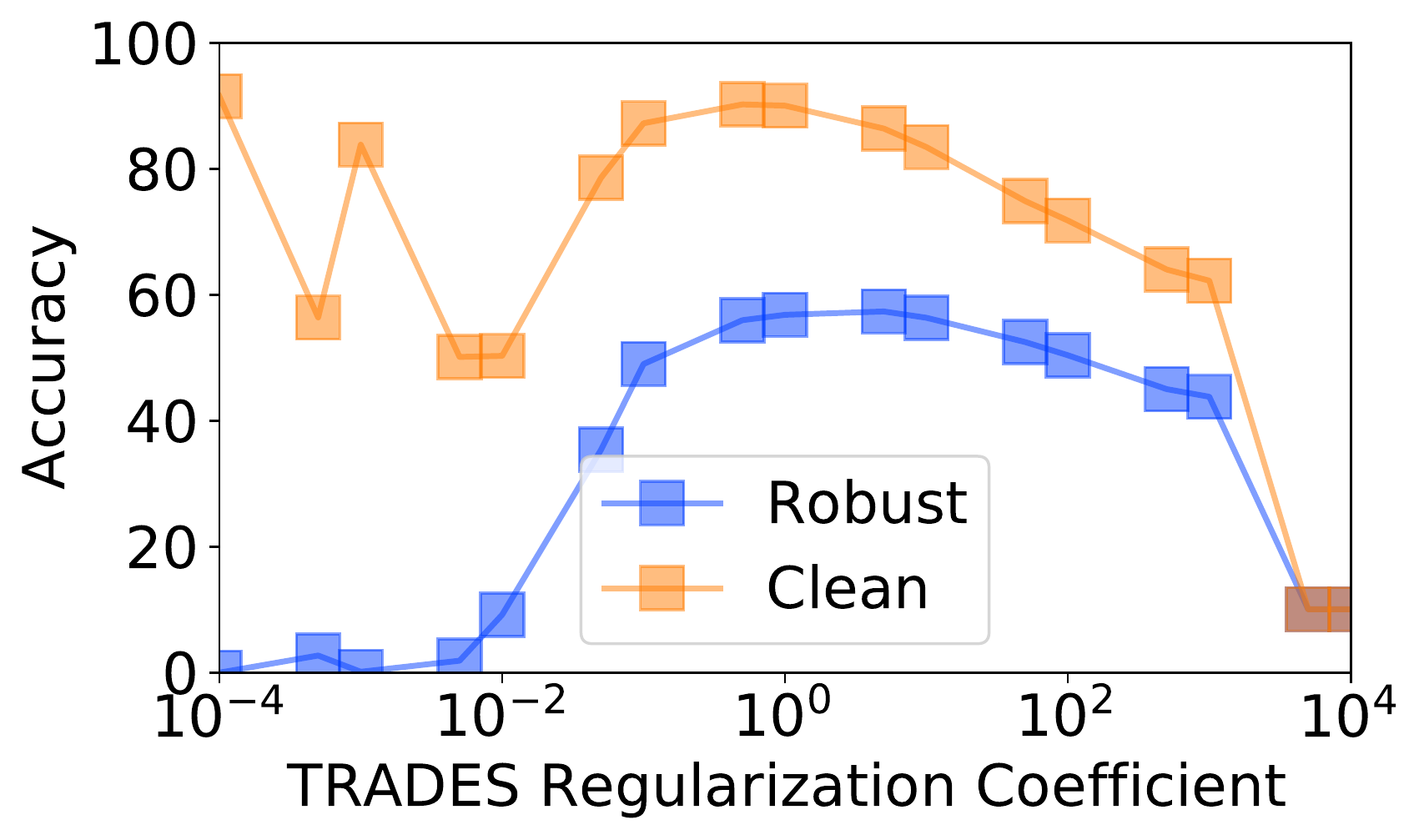}
\caption{CIFAR-10 TRADES}
\label{fig:tradeoffs_cifar10_trades}
\end{subfigure}
\vspace*{-0.1cm}
\caption{\textbf{Trade-off
in robustness against $\ell_\infty$ attack
in linear models and CIFAR-10.}
We plot the maximally robust $\varepsilon$ for adversarial
training and explicit regularization.
Robustness is controlled using $\varepsilon$ in adversarial training (\subref{fig:tradeoffs_at})
and regularization coefficient in explicit regularization (\subref{fig:tradeoffs_l1}).
Using adversarial training we have to search for the maximal
$\varepsilon$ but for explicit regularization it suffices
to choose a small regularization coefficient.
Similarly, on CIFAR-10,
the highest robustness at a fixed test $\varepsilon$
is achieved for $\varepsilon$ used during
training (\subref{fig:tradeoffs_cifar10_eps}).
In contrast to optimal linear regularizations,
TRADES shows degradation as the
regularization coefficient decreases
(\subref{fig:tradeoffs_cifar10_trades}).
Discussion in \cref{sec:tradeoffs}
}
\label{fig:tradeoffs_linf}
    \vspace*{-16pt}
\end{figure*}

\cref{fig:tradeoffs_linf} illustrates the trade-off between
standard accuracy and adversarial robustness.
Adversarial training finds the maximally robust classifier
only if it is trained with the
knowledge of the maximally robust $\varepsilon$ (\cref{fig:tradeoffs_at}).
Without this knowledge, we have to search for the maximal $\varepsilon$
by training multiple models.
This adds further computational complexity to adversarial
training which performs an alternated optimization.
In contrast, explicit regularization converges to a
maximally robust classifier for a small enough regularization constant (\cref{fig:tradeoffs_l1}).

On CIFAR-10, we compare adversarial training with
the regularization method TRADEs~\citep{zhang2019theoretically}
following the state-of-the-art
best practices~\citep{gowal2020uncovering}.
Both methods depend on a constant $\varepsilon$
during training.
\cref{fig:tradeoffs_cifar10_eps} shows optimal robustness
is achieved for a model trained and tested with
the same $\varepsilon$. When the test
$\varepsilon$ is unknown, both methods need to search for
the optimal $\varepsilon$.
Given the optimal training $\varepsilon$,
\cref{fig:tradeoffs_cifar10_trades} investigates whether
TRADES performs similar to an optimal linear regularization
(observed in \cref{fig:tradeoffs_l1}),
that is the optimal robustness is achieved with infinitesimal
regularization.
In contrast to the linear regime, the robustness degrades with
smaller regularization.
We hypothesize that with enough model capacity,
using the optimal $\varepsilon$,
and sufficient training iterations, smaller
regularization should improve robustness.
That suggests that there is potential for improvement
in TRADES and better understanding of robustness in non-linear
models.

\vspace*{-7pt}
\subsection{CIFAR-10 Fourier-\texorpdfstring{$\ell_\infty$}{Linf} Robustness}
\label{sec:cifar10}
\vspace*{-8pt}

\begin{wrapfigure}[13]{r}{0.42\linewidth}
\vspace*{-27pt}
    \centering
    \includegraphics[width=\linewidth]{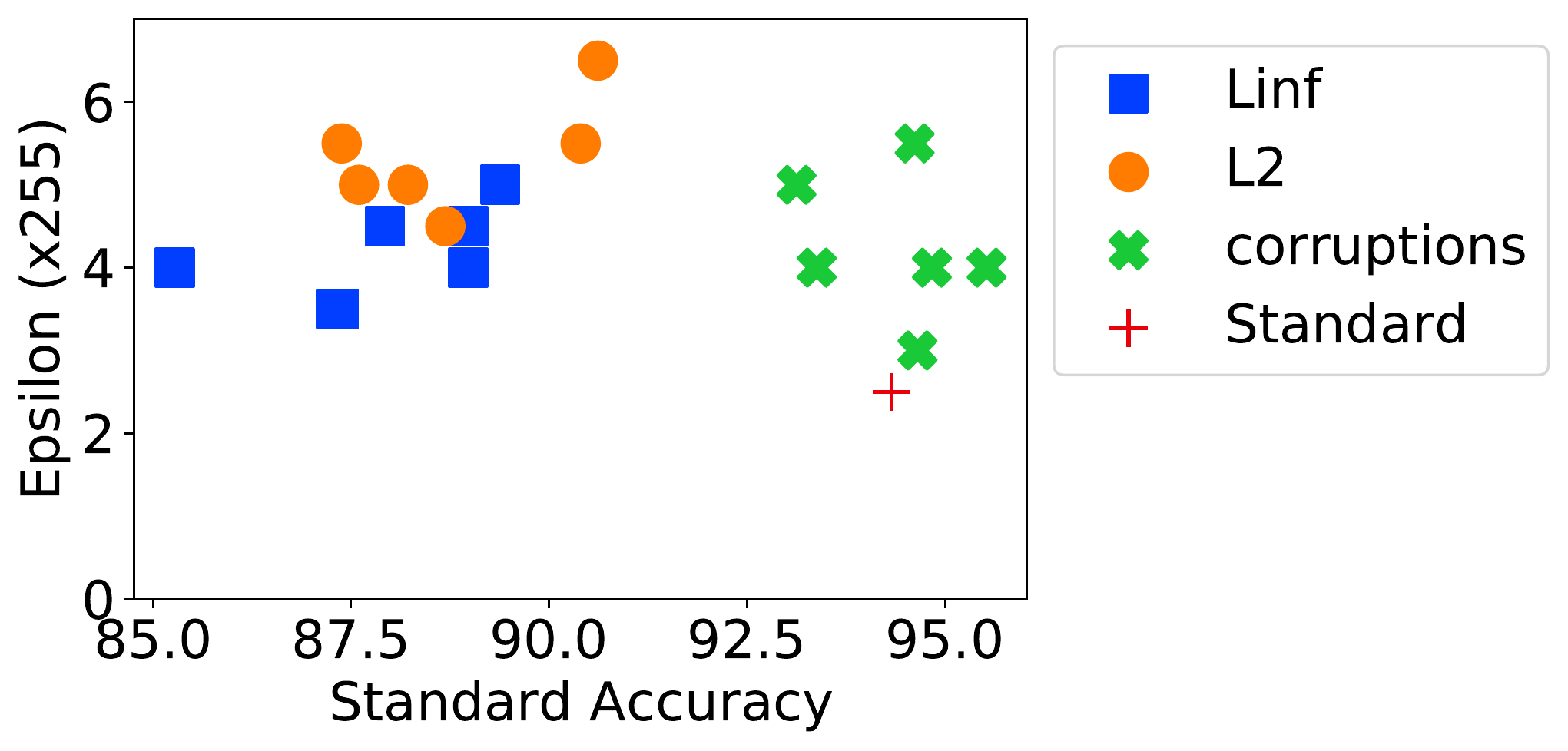}
    \caption{\textbf{Maximally robust
$\varepsilon$ against Fourier-$\ell_\infty$ for recent defenses.}
Color and shape denote the type of
robust training: adversarial ($\ell_2$  or $\ell_\infty$),
corruptions,
or standard training.
Fourier-$\ell_\infty$ is a strong attack against
current robust models.
}
    \label{fig:cifar10_acc}
\end{wrapfigure}

\cref{fig:cifar10_acc} reports the maximally robust $\varepsilon$
of image classification models on CIFAR-10.
We evaluate top defenses on the leaderboard of
RobustBench~\citep{croce2020robustbench}.
The attack methods are APGD-CE and APGD-DLR with default
hyperparameters in RobustBench and $\varepsilon=8/255$.
Theoretical results do not provide guarantees beyond the maximally
robust $\varepsilon$.
Even robust models
against corruptions with no adversarial training achieve
similar robustness to $\ell_2$/$\ell_\infty$ models.
The maximal $\varepsilon$ is the largest
at which adversarial accuracy is no more than $1\%$ worse
than the standard accuracy.
All models have almost zero accuracy
against larger, but still perceptually small, perturbations
($\varepsilon=20/255$).
\cref{sec:fourier_vis} gives more examples of
Fourier-$\ell_\infty$ attacks and band-limited variations
similar to \cref{fig:image_attack} for robustly trained models, showing that
perturbations are qualitatively different from those against
the standard model.

\vspace*{-10pt}
\section{Related work}
\label{sec:related}
\vspace*{-10pt}

This paper bridges two bodies of work on adversarial robustness and
optimization bias. 
As such there are many related works,
the most relevant of which we discuss here.
Prior works either did not connect optimization bias
to adversarial robustness beyond
margin-maximization~\citep{ma2020increasing,ding2018max,elsayed2018large}
or only considered adversarial
training with a given perturbation size~\citep{li2019implicit}.

\vspace*{-8pt}
\paragraph{Robustness Trade-offs}
Most prior work defines the metric for robustness and generalization
using an expectation over the loss. Instead, we define robustness
as a set of classification constraints.
Our approach better matches the security perspective that
even a single inaccurate prediction is a vulnerability.
The limitation is explicit constraints
only ensure perfect accuracy near the training set.
Standard generalization remains to be studied using
other approaches such with assumptions on the data distribution.
Existing work has used assumptions about the data distribution to
achieve explicit trade-offs between robustness and standard generalization
~\citep{dobriban2020provable,
javanmard2020preciseb,%
javanmard2020precise,
raghunathan2020understanding,
tsipras2018robustness,
zhang2019theoretically,
schmidt2018adversarially,
fawzi2018adversarial,
fawzi2018analysis}.

\vspace*{-8pt}
\paragraph{Fourier Analysis of Robustness.}
Various observations have been made about Fourier properties
of adversarial perturbations against deep non-linear models~\citep{ilyas2019adversarial,
tsuzuku2019structural,
sharma2019effectiveness}.
\citet{yin2019fourier} showed that adversarial
training increases robustness to perturbations concentrated
at high frequencies and reduces robustness to perturbations
concentrated at low frequencies. \citet{ortiz2020hold}
also observed that the measured margin of classifiers
at high frequencies is larger than the margin at
low frequencies.
Our \cref{cor:max_robust_to_min_norm_conv}
does not distinguish between
low and high frequencies but we establish an exact characterization of
robustness.
\citet{caro2020using} hypothesized about
the implicit robustness to $\ell_1$ perturbations in the Fourier
domain while we prove maximal robustness to
Fourier-$\ell_\infty$ perturbations.

\vspace*{-8pt}
\paragraph{Architectural Robustness.}
An implication of our results is that robustness can be achieved
at a lower computational cost compared with adversarial training by
various architectural choices as recently explored~\citep{xie2020smooth,
galloway2019batch,
awais2020towards}.
Moreover, for architectural choices that align with human biases, standard
generalization can also improve~\citep{vasconcelos2020effective}.
Another potential future direction is to rethink $\ell_p$ robustness as an architectural
bias and find inspiration in human visual system for
appropriate architectural choices.

\vspace*{-8pt}
\paragraph{Robust Optimization}
A robust counterpart to an optimization
problem considers uncertainty in the data and optimizes for the worst-case.
\citet{ben2009robust}
provided extensive formulations and discussions
on robust counterparts
to various convex optimization problems. Adversarial robustness
is one such robust counterpart and many other robust counterparts
could also be considered in deep learning. An example is
adversarial perturbations with different norm-ball constraints
at different training inputs.
\citet{madry2017towards}
observed the link between robust optimization
and adversarial robustness where the objective
is a min-max problem
that minimizes the worst-case loss. However, they did not
consider the more challenging problem of maximally robust
optimization that we revisit.

\vspace*{-8pt}
\paragraph{Implicit bias of optimization methods.}
Minimizing the empirical risk for
an overparametrized model with more parameters than the training
data has multiple solutions.
\citet{zhang2017understanding} observed that overparametrized
deep models can even fit to randomly labeled training data, yet
given correct labels they consistently generalize to test data.
This behavior has been explained using the implicit bias of
optimization methods towards particular solutions.
\citet{gunasekar2018characterizing} proved that
minimizing the empirical risk using  steepest descent
and mirror descent have an implicit bias towards minimum norm
solutions in overparametrized linear classification. Characterizing
the implicit bias in linear regression proved to be more challenging
and dependent on the initialization.
\citet{ji2018gradient} proved that
training a deep linear classifier using gradient descent
not only implicitly converges to the minimum norm classifier
in the space of the product of parameters,
each layer is also biased towards rank-$1$ matrices aligned with adjacent layers.
\citet{gunasekar2018implicit} proved the implicit bias of
gradient descent in training linear convolutional classifiers
is towards minimum norm solutions in the Fourier domain that
depends on the number of layers.
\citet{ji2020directional} has established the directional alignment
in the training of deep linear networks using gradient flow as well as
the implicit bias of training deep 2-homogeneous networks.
In the case of gradient flow (gradient descent with infinitesimal
step size) the implicit bias of training multi-layer linear models
is towards rank-$1$ layers that satisfy directional alignment
with adjacent layers~\citep[Proposition 4.4]{ji2020directional}.
Recently, \citet{yun2020unifying} has proposed a unified framework
for implicit bias of neural networks using tensor formulation 
that includes fully-connected, diagonal, and convolutional
networks and weakened the convergence assumptions.

Recent theory of generalization in deep learning, in particular
the double descent phenomenon, studies the generalization
properties of minimum norm solutions
for finite and noisy training sets~\citep{hastie2019surprises}.
Characterization of the double descent phenomenon relies on the
implicit bias of optimization methods
while using additional assumptions
about the data distribution.
In contrast, our results only rely on the implicit bias of optimization
and hence are independent of the data distribution.

\vspace*{-8pt}
\paragraph{Hypotheses.}
\citet{goodfellow2014explaining}  proposed the linearity
hypothesis that informally suggests
$\ell_p$ adversarial samples exist because deep learning
models converge to functions similar to linear models.
To improve robustness, they argued models have to be more non-linear.
Based on our framework, linear models are not inherently weak.
When trained, regularized, and parametrized
appropriately they can be robust to some degree, the extent of
which depends on the dataset.
\citet{gilmer2018adversarial} proposed adversarial spheres
as a toy example where a two layer neural network exists with perfect
standard and robust accuracy for non-zero perturbations. Yet,
training a randomly initialized model with gradient descent
and finite data does not converge to a robust model.
Based on our framework, we interpret this as an example
where the implicit bias of gradient descent is not towards
the ground-truth model, even though
there is no misalignment in the architecture.
It would be interesting to understand this implicit bias in future work.

\vspace*{-8pt}
\paragraph{Robustness to $\ell_p$-bounded attacks.}
Robustness is achieved when any perturbation to natural
inputs that changes a classifier's prediction also confuses a human.
$\ell_p$-bounded attacks are the first step in achieving adversarial
robustness.
\citet{tramer2020adaptive} have recently shown many recent robust
models only achieve spurious robustness against $\ell_\infty$ and
$\ell_1$ attacks.
\citet{croce2020reliable} showed that on image classification datasets
there is still a large gap in adversarial robustness to $\ell_p$-bounded
attacks and standard accuracy.
Robustness to multiple $\ell_p$-bounded perturbations through
adversarial training and its trade-offs has also been
analyzed~\citep{tramer2019adversarial,maini2020adversarial}.
\citet{sharif2018suitability, sen2019should} argue that none of
$\ell_0$, $\ell_1$, $\ell_\infty$, or SSIM are a perfect match for human
perception of similarity. That is for any such norm, for any
$\varepsilon$, there exists a perturbation such that humans classify
it differently.
Attacks based on other perceptual similarity metrics
exist~\citep{zhao2020adversarial,liu2019beyond}.
This shows that the quest for adversarial robustness should also
be seen as a quest for understanding human perception.

\vspace*{-8pt}
\paragraph{Robustness through Regularization}
Various regularization
methods have been proposed for adversarial robustness that
penalize the gradient norm and can be studied using the framework
of maximally robust classification.
\citet{lyu2015unified} proposed general $\ell_p$ norm regularization of
gradients.
\citet{hein2017formal} proposed the Cross-Lipschitz penalty
by regularizing the norm
of the difference between two gradient vectors of the function.
\citet{ross2018improving} proposed $\ell_2$ regularization
of the norm of the gradients.
\citet{sokolic2017robust} performed regularization of
Frobenius norm of the per-layer Jacobian.
\citet{moosavi2019robustness} proposed penalizing the curvature
of the loss function.
\citet{qin2019adversarial} proposed encouraging local linearity
by penalizing the error of local linearity.
\citet{simon2019first}
proposed regularization of the gradient norm where
the dual norm of the attack norm is used.
\citet{avery2020adversarial}
proposed Hessian regularization. 
\citet{guo2020connections} showed that some regularization methods
are equivalent or perform similarly in practice.
Strong gradient or curvature regularization methods can
suffer from gradient masking~\citep{avery2020adversarial}.

\vspace*{-8pt}
\paragraph{Certified Robustness.}
Adversarially trained models are empirically harder to attack than
standard models. 
But their robustness is not often provable.
Certifiably robust models seek to close this gap~\citep{hein2017formal,
wong2018provable,
cohen2019certified,%
gowal2018effectiveness,
salman2019convex}.
A model is certifiably
robust if for any input, it also provides an 
$\varepsilon$-certificate that guarantees robustness to any
perturbation within the $\varepsilon$-ball of the input.
In contrast, a maximally robust classifier finds a classifier
that is guaranteed to be robust to maximal $\varepsilon$ while
classifying all training data correctly. That allows
for data dependent robustness guarantees at test time.
In this work, we have not explored standard generalization guarantees.

\vspace*{-10pt}
\section{Conclusion}
\label{sec:conclusion}
\vspace*{-7pt}

We demonstrated that the choice of optimizer,
neural network architecture,
and  regularizer,  significantly affect the adversarial 
robustness of linear neural networks. These results lead us to
a novel Fourier-$\ell_\infty$ attack with controllable spectral
properties applied against deep non-linear CIFAR-10 models.
Our results provide a framework, insights, and directions for improving robustness
of non-linear models through approaches other than adversarial training.

\vspace*{-8pt}
\paragraph{Limitations.} We have not proposed a novel defense for non-linear
models but provided directions and insight.
There are challenges in extending our theory to non-linear models
that needs additional assumptions (See \cref{sec:related,sec:nonlinear}).
There is a growing literature on the implicit bias of
non-linear networks that can be used to extend our
results~\citep{chizat2020implicit,lyu2020gradient,ongie2020function}.
There is a small gap in \cref{fig:conv} between theory and experiment that
might be due to limited training time.

\vspace*{-8pt}
\paragraph{Societal Impact.} We theoretically
connect a security challenge
to the implicit and explicit
biases in machine learning, both with potential negative impacts.
We show that the latter can be used to positively change the former.

\vspace*{-10pt}
\section*{Acknowledgements}
\vspace*{-7pt}
The authors would like to thank Nicholas Carlini, Nicolas Papernot, and
Courtney Paquette for helpful discussions and invaluable feedback. NLR is supported by the Canada CIFAR AI Chair
program.

\bibliographystyle{unsrtnat}
\bibliography{biblio}

\newpage
\appendix

\section{Generalization of the Maximally Robust Classifier}
\label{sec:max_robust_gen}

\begin{defn}[Maximally Robust Classifier with Slack Loss] 
\label{def:eta_max_robust}
Let $\xi \geq 0$ denote a given slack variable.
A maximally robust classifier with slack loss is the solution to
\begin{align}
\label{eq:mrc_slack}
\argmax_{\varphi \in \Phi}
\left\{ \vphantom{i_2^\top} \varepsilon \,|\,
\EE_{(\xx, y)} \max_{\|\ddelta\| \leq \varepsilon}
\zeta(y\varphi(\xx + \ddelta)) \leq \xi\right\}\,.
\end{align}
\end{defn}
This formulation is similar to the saddle-point problem
in that we seek to minimize the expectation of the worst case loss.
The difference is that we also seek to maximize $\varepsilon$.
However, we have introduced another arbitrary variable $\xi$
that is not optimized as part of the problem.
For linear classifiers and the hinge loss,
${\zeta(z)=[1-z]_+}$,
\cref{eq:mrc_slack} can be written as,
\begin{align}
&\argmax_{\ww}\left\{
\vphantom{i_2^\top} \varepsilon \,|\,
\EE_{(\xx, y)} [1-y \ww^\top\xx + \varepsilon \|\ww\|_\ast]_+
\leq \xi\right\}\,,
\end{align}
where $[\cdot]_+$ is the hinge loss,
and the weight penalty term $\|\ww\|_\ast$ is inside the hinge loss.
This subtle difference makes solving the problem more challenging
than weight penalty outside the loss.

Because of the two challenges we noted, we do not study the
maximal robustness with slack loss.

\section{Proofs}
\label{sec:proofs}
\subsection{Proof of \texorpdfstring{\cref{lem:max_robust_to_min_norm_linear}}{Lemma 1}}
\begin{proof}
\label{proof:max_robust_to_min_norm_linear}
We first show that the maximally robust classifier is equivalent to a robust counterpart
by removing $\ddelta$ from the problem,
\begin{align*}
& \argmax_{\ww,b}\,\{ \varepsilon \mid 
 y_i (\ww^\top (\xx_i + \ddelta) + b) > 0
,~\forall i,\, \|\ddelta\| \leq \varepsilon\}\\
&\qquad \text{ (homogeneity of $p$-norm)} \\
&= \argmax_{\ww,b}\,\{ \varepsilon \mid 
 y_i (\ww^\top (\xx_i + \varepsilon\ddelta) + b) > 0
,~\forall i,\, \|\ddelta\| \leq 1\}\\
&\qquad \text{ (if it is true for all $\ddelta$ it is true for the worst of them)}\\ %
& = \argmax_{\ww,b}\,\{ \varepsilon \mid 
 \inf_{\|\ddelta\| \leq 1} y_i (\ww^\top (\xx_i + \varepsilon\ddelta) + b) > 0,~\forall i \}\\
& = \argmax_{\ww,b}\,\{ \varepsilon \mid 
 y_i (\ww^\top \xx_i  + b) + \varepsilon\inf_{\|\ddelta\| \leq 1} \ww^\top \ddelta > 0,~\forall i \}\\
 &\qquad \text{ (definition of dual norm)}\\
& = \argmax_{\ww,b}\,\{ \varepsilon \mid 
 y_i (\ww^\top \xx_i + b) > \varepsilon \|\ww\|_\ast,~\forall i \}\\
\end{align*}

Assuming $\ww\neq 0$, which is a result of linear separability
assumption, we can divide both sides by $\|\ww\|_\ast$
and change variables,
\begin{align*} 
& = \argmax_{\ww,b}\,\{ \varepsilon \mid
 y_i (\ww^\top \xx_i + b) \geq \varepsilon ,~\forall i, \|\ww\|_\ast \leq 1 \}\,,
\end{align*}
where we are also allowed to change $>$ to $\geq$ because any solution
to one problem gives an equivalent solution to the other given
$\ww\neq 0$.

Now we show that the robust counterpart is equivalent to the
minimum  norm classification problem by removing $\varepsilon$.
When the data is linearly separable there exists a solution with $\varepsilon> 0$,
\begin{align*}
& \argmax_{\ww,b}\,\{ \varepsilon \mid 
 y_i (\ww^\top \xx_i + b) > \varepsilon
 \|\ww\|_\ast,~\forall i \}\\
&=\argmax_{\ww,b}\,\left\{ \varepsilon \mid 
 y_i \left(\frac{\ww^\top}{\varepsilon\|\ww\|_\ast} \xx_i + \frac{b}{\varepsilon\|\ww\|_\ast}\right) \geq 1,~\forall i \right\}
\end{align*}

This problem is invariant to any non-zero scaling of $(\ww,b)$,
so with no loss of generality we set $\|\ww\|_\ast=1$.

\begin{align*}
&=\argmax_{\ww,b} \left\{ \vphantom{i_2^\top} \varepsilon \mid
y_i \left(\frac{\ww^\top}{\varepsilon} \xx_i +b\right)\geq 1,\, \forall i,
\|\ww\|_\ast= 1\right\}
\end{align*}

Let $\ww^\prime=\ww/\epsilon$, then the solution to the following problem gives a solution for $\ww$,
\begin{align*}
&\argmax_{\ww^\prime,b} \left\{ \vphantom{i_2^\top} \frac{1}{\|\ww^\prime\|_\ast} \mid y_i (\ww^{\prime\top} \xx_i +b)\geq 1,\, \forall i \right\}\\
&=\argmin_{\ww^\prime,b} \left\{ \vphantom{i_2^\top} \|\ww^\prime\|_\ast \mid y_i (\ww^{\prime\top} \xx_i +b)\geq 1,\, \forall i  \right\}.
\end{align*}

\end{proof}

\subsection{Proof of Maximally Robust to Perturbations Bounded in Fourier Domain (\texorpdfstring{\cref{cor:max_robust_to_min_norm_conv}}{Corollary 2})}
\label{proof:max_robust_to_min_norm_conv}
The proof mostly follows from the equivalence for linear models
in \cref{proof:max_robust_to_min_norm_linear}
by substituting
the dual norm of Fourier-$\ell_1$.
Here, $\AA^\ast$ denotes the complex conjugate transpose,
$\langle \uu,\vv \rangle=\uu^\top\vv^\ast$
is the complex inner product,
$[\bF]_{ik}=\frac{1}{\sqrt{D}}\omega_D^{i k}$
the DFT matrix
where $\omega_D=e^{-j 2\pi/D}$, $j=\sqrt{-1}$.

Let $\|\cdot\|$ be a norm on $\C^{n}$
and $\langle\cdot,\cdot\rangle$ be the complex inner product.
Similar to $\R^{n}$, the associated dual norm is defined as
$\|\ddelta\|_\ast = \sup_{\xx}\{ |\langle \ddelta, \xx \rangle|\; |\; \|\xx\| \leq 1 \}\;$.

\begin{align*}
&\|\mathcal{F}(\ww)\|_1\\
&= \sup_{\|\ddelta\|_\infty\leq 1}
|\langle\mathcal{F}(\ww), \ddelta\rangle|\\
&\qquad \text{ (Expressing DFT as a linear transformation.)}\\
&= \sup_{\|\ddelta\|_\infty\leq 1}
|\langle\bF\ww, \ddelta\rangle|\\
&= \sup_{\|\ddelta\|_\infty\leq 1}
|\langle\ww, \bF^\ast\ddelta\rangle|\\
&\qquad \text{ (Change of variables and $\bF^{-1}=\bF^\ast$.)}\\
&= \sup_{\|\bF\ddelta\|_\infty\leq 1}
|\langle\ww, \ddelta\rangle|\\
&= \sup_{\|\mathcal{F}(\ddelta)\|_\infty\leq 1}
|\langle\ww, \ddelta\rangle|
\,.
\end{align*}

\section{Linear Operations in Discrete Fourier Domain}
\label{sec:fourier_ops}

Finding an adversarial sample with bounded
Fourier-$\ell_\infty$
involves $\ell_\infty$ complex projection to ensure adversarial
samples are bounded, as well as the steepest ascent direction
w.r.t the Fourier-$\ell_\infty$ norm. We also use the complex
projection onto $\ell_\infty$ simplex for proximal gradient method
that minimizes the regularized empirical risk.

\subsection{\texorpdfstring{$\ell_\infty$}{Linf} Complex Projection}

Let $\vv$ denote the $\ell_2$ projection of $\xx\in\C^d$
onto the $\ell_\infty$ unit ball. It can be computed as,
\begin{align}
&\argmin_{\|\vv\|_\infty\leq 1} \frac{1}{2} \|\vv-\xx\|_2^2 \\
&=\{\vv : \forall i,\, \vv_i=
\argmin_{|\vv_i| \leq 1}  \frac{1}{2} |\vv_i-\xx_i|^2\}\,,
\end{align}
that is independent projection per coordinate which can be
solved by 2D projections onto $\ell_2$ the unit ball
in the complex plane.

\subsection{Steepest Ascent Direction w.r.t. Fourier-\texorpdfstring{$\ell_\infty$}{Linf}}

Consider the following optimization problem,
\begin{align}
&\argmax_{\vv : \|\bF\vv\|_\infty\leq 1} f(\vv)\,,
\end{align}
where $\bF\in\C^{d\times d}$ is the
Discrete Fourier Transform
(DFT) matrix and $\bF^\ast=\bF^{-1}$
and $\bF^\ast$ is the conjugate transpose.

Normalized steepest descent direction is defined as (See \citet[Section 9.4]{boyd2004convex}),
\begin{align}
\argmin_\vv\{\nabla \langle f(\ww), \vv\rangle : \|\vv\| = 1 \}\,.
\end{align}

Similarly, we can define the steepest ascent direction,
\begin{align}
&\argmax_{\vv\in\R^d}\{|\langle\nabla f(\ww), \vv\rangle|
: \|\bF\vv\|_\infty = 1\}\\
&\qquad \text{ (Assuming $f$ is linear.)}\\
&\argmax_{\vv\in\R^d}\{|\langle\gg, \bF^\ast\bF\vv\rangle|
: \|\bF\vv\|_\infty = 1\}\\
&\argmax_{\vv\in\R^d}\{|\langle\bF\gg, \bF\vv\rangle|
: \|\bF\vv\|_\infty = 1\}
\end{align}
where $\gg=\nabla f(\ww)$.

Consider the change of variable $\uu=\bF\vv\in\C^{d\times d}$.
Since $\vv$ is a real vector its DFT is  Hermitian,
i.e.\ $\uu_i^\ast=[\uu]_{\overline{-i}}$ for all coordinates $i$ where
$\overline{j}=j \mod d$. Similarly, $\bF\gg$ is Hermitian.
\begin{align}
&\argmax_{\uu\in\C^d : \|\uu\|_\infty = 1}\{|\langle \bF\gg, \uu\rangle|
: \uu_i^\ast=[\uu]_{\overline{-i}}\}\\
&\argmax_{\uu\in\C^d : \forall i, |\uu_i| = 1}\{|[\bF\gg]_i \uu_i|
+ |[\bF\gg]_{\overline{-i}}\, \uu_i^\ast|
: \uu_i^\ast=[\uu]_{\overline{-i}}\}\\
&\argmax_{\uu\in\C^d : \forall i, |\uu_i| = 1}\{|[\bF\gg]_i \uu_i|
+ |[\bF\gg]_i^\ast\uu_i^\ast|
: \uu_i^\ast=[\uu]_{\overline{-i}}\}\\
&\argmax_{\uu\in\C^d : \forall i, |\uu_i| = 1}\{|[\bF\gg]_i \uu_i|
: \uu_i^\ast=[\uu]_{\overline{-i}}\}\\
&\uu_i=[\bF\gg]_i/|[\bF\gg]_i|\,.
\end{align}
and the steepest ascent direction is $\vv_i=\bF^{-1}\uu_i$ which
is a real vector. In practice, there can be non-zero small imaginary
parts as numerical errors which we remove.

\section{Non-linear Maximally Robust Classifiers}
\label{sec:nonlinear}

Recall that the definition of a maximally robust classifier
(\cref{def:max_robust})
handles non-linear families of functions, $\Phi$:
\begin{align*}
    \argmax_{\varphi\in\Phi} \{\varepsilon\,|\,y_i \varphi(\xx_i + \ddelta) > 0 
    ,~\forall i,\,\|\ddelta\| \leq \varepsilon\}\,.
\end{align*}

Here we extend the proof in
\cref{lem:max_robust_to_min_norm_linear}
that made the maximally robust classification tractable
by removing $\ddelta$ and $\varepsilon$ from the problem.
In linking a maximally robust classifier to a minimum norm
classifier when there exists a non-linear transformation, the first
step that requires attention is the following,
\begin{align*}
& \argmax_{\varphi\in\Phi}\,\{ \varepsilon : 
 \inf_{\|\ddelta\| \leq 1} y_i \varphi(\xx_i + \varepsilon\ddelta)
 > 0,~\forall i \} \quad\\
& \neq \argmax_{\varphi\in\Phi}\,\{ \varepsilon : 
 y_i \varphi(\xx_i) + \varepsilon\inf_{\|\ddelta\| \leq 1} \varphi(\ddelta)
 > 0,~\forall i \} \quad
\end{align*}

\begin{lem}[Gradient Norm Weighted Maximum  Margin]
\label{lem:nonlinear}
Let $\Phi$ be a family of 
locally linear classifiers near training data, i.e.\
\begin{align*}
\Phi&=\{\varphi : 
\exists \xi>0,
\forall i,{\|\ddelta\|\leq1},\,
{\varepsilon\in[0,\xi)},\,\\
&{\varphi(\xx_i+\varepsilon\ddelta)}=
{\varphi(\xx_i)}+
{\varepsilon\ddelta^\top\dxy{}{\xx}\varphi(\xx_i)}
\}.
\end{align*}
Then a maximally robust classifier is a solution to the following
problem,
\begin{align*}
& \argmax_{\varphi\in\Phi, \varepsilon \leq \xi}\,\{ \varepsilon : 
y_i \varphi(\xx_i)
> \varepsilon \|\dxy{}{\xx}\varphi(\xx_i)\|_\ast
,~\forall i \}\,.
\end{align*}
\end{lem}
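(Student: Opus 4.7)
The plan is to mirror the derivation in \cref{proof:max_robust_to_min_norm_linear}, replacing the global linearity of $\ww^\top \xx$ with the local first-order expansion that defines $\Phi$. Starting from the maximally robust classifier formulation
\begin{align*}
\argmax_{\varphi \in \Phi}\,\{\varepsilon \mid y_i \varphi(\xx_i + \ddelta) > 0,\ \forall i,\ \|\ddelta\| \leq \varepsilon\},
\end{align*}
I would rescale $\ddelta \mapsto \varepsilon \ddelta$ and rewrite the constraint in its worst-case form $\inf_{\|\ddelta\| \leq 1} y_i \varphi(\xx_i + \varepsilon \ddelta) > 0$ for all $i$, exactly as in the linear proof.

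Next, I would restrict the optimization to $\varepsilon \leq \xi$, the radius inside which $\varphi$ acts linearly about each $\xx_i$, and substitute the first-order expansion that defines $\Phi$:
\begin{align*}
y_i \varphi(\xx_i + \varepsilon \ddelta) = y_i \varphi(\xx_i) + \varepsilon\, y_i \ddelta^\top \nabla \varphi(\xx_i).
\end{align*}
The inner infimum then splits, and since $y_i \in \{-1, +1\}$ and the unit ball $\{\ddelta : \|\ddelta\| \leq 1\}$ is symmetric, the sign of $y_i$ is absorbed into $\ddelta$, giving
$\inf_{\|\ddelta\| \leq 1} y_i \ddelta^\top \nabla \varphi(\xx_i) = -\|\nabla \varphi(\xx_i)\|_\ast$ by definition of the dual norm. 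Rearranging yields the stated constraint $y_i \varphi(\xx_i) > \varepsilon \|\nabla \varphi(\xx_i)\|_\ast$ for all $i$, which is the desired characterization.

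The main obstacle is the $\varepsilon \leq \xi$ restriction itself. This step collapses the original global constraint into a local one using linearity of $\varphi$ only inside a ball of radius $\xi$; beyond that radius the inner infimum no longer decomposes into a value plus a dual-norm term, so the reduction breaks. Moreover, $\xi$ depends on $\varphi$, so the joint optimization over $(\varphi, \varepsilon)$ must be read with $\varepsilon \leq \xi(\varphi)$ as a per-classifier feasibility constraint. This is why the lemma is best phrased as a characterization within the locally linear regime rather than a full equivalence to the global problem: if the true maximally robust radius exceeds $\xi(\varphi^\ast)$ for the optimal $\varphi^\ast$, an additional argument, for example showing that the supremum is attained strictly inside the linearity radius or enlarging the definition of $\Phi$, is needed to upgrade this one-sided reduction to a genuine equivalence.
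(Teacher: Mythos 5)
Your proposal is correct and follows essentially the same route as the paper's proof: rescale the perturbation, take the worst case over the unit ball, substitute the exact first-order expansion valid for $\varepsilon < \xi$, and evaluate the inner infimum via the dual norm to obtain $y_i \varphi(\xx_i) > \varepsilon \|\dxy{}{\xx}\varphi(\xx_i)\|_\ast$. Your explicit handling of absorbing $y_i$ into $\ddelta$ by symmetry of the unit ball, and your caveat that $\xi$ depends on $\varphi$ so the reduction is only valid within the linearity radius, are both points the paper glosses over but do not change the argument.
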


\begin{proof}

\begin{align*}
& \argmax_\varphi\,\{ \varepsilon : 
 \inf_{\|\ddelta\| \leq 1} y_i \varphi(\xx_i + \varepsilon\ddelta)
 \geq 0,~\forall i \} \quad\\
&\qquad \text{ (Taylor approx.)}\\
& = \argmax_\varphi\,\{ \varepsilon : 
\inf_{\|\ddelta\| \leq 1} y_i \varphi(\xx_i)
+ y_i \varepsilon\ddelta \dxy{}{\xx}\varphi(\xx_i)
\geq 0,~\forall i \} \quad\\
& = \argmax_\varphi\,\{ \varepsilon : 
y_i \varphi(\xx_i)
+ \varepsilon\inf_{\|\ddelta\| \leq 1} \ddelta \dxy{}{\xx}\varphi(\xx_i)
\geq 0,~\forall i \} \quad\\
&\qquad\text{(Dual to the local derivative.)}\\
& = \argmax_\varphi\,\{ \varepsilon : 
y_i \varphi(\xx_i)
\geq \varepsilon \|\dxy{}{\xx}\varphi(\xx_i)\|_\ast
,~\forall i \} \quad\\
&\qquad\text{(Assuming constant gradient norm near data.)}\\
& = \argmax_{\varphi:\|\dxy{}{\xx}\varphi(\xx)\|_\ast\leq 1}\,
\{ \varepsilon :  y_i \varphi(\xx_i) \geq \varepsilon  ,~\forall i \}\,.
\end{align*}
\end{proof}

The equivalence in \cref{lem:nonlinear}
fails when $\Phi$ includes functions with non-zero
higher order derivatives within the $\varepsilon$
of the maximally robust classifier.
In practice, this failure manifests itself as various forms of
gradient masking or gradient obfuscation where the model
has almost zero gradient near the data but large
higher-order derivatives~\citep{athalye2018obfuscated}.

Various regularization
methods have been proposed for adversarial robustness that
penalize the gradient norm and can be studied using the framework
of maximally robust classification~\citep{ross2018improving,
simon2019first,
avery2020adversarial,
moosavi2019robustness}
Strong gradient or curvature regularization methods can
suffer from gradient masking~\citep{avery2020adversarial}.

For general family of non-linear functions,
the interplay with implicit bias
of optimization and regularization methods
remains to be characterized.
The solution to the regularized problem in
\cref{def:reg_classifier} is not necessarily unique.
In such cases, the implicit bias of the optimizer biases the
robustness.

\section{Extended Experiments}
\label{sec:exp_ext}
\subsection{Details of Linear Classification Experiments}
\label{sec:linear_ext}
For experiments with linear classifiers,
we sample $n$ training data points from
the $\N(0,\I_d)$, $d$-dimensional standard normal
distribution centered at zero.
We label data points $y=\sign(\ww^\top\xx)$,
using a ground-truth linear separator sampled from  $\N(0,\I_d)$.
For $n<d$, the generated training data is linearly
separable. This setting is similar to a number of recent
theoretical works on the implicit bias of optimization methods in
deep learning and specifically the double descent phenomenon in
generalization~\citep{montanari2019generalization, deng2019doubledescent}.
We focus on robustness against norm-bounded
attacks centered at the training data, in particular,
$\ell_2$, $\ell_\infty$,
$\ell_1$ and Fourier-$\ell_\infty$ bounded attacks.

Because the constraints and the objective
in the minimum norm linear classification problem are convex,
we can use off-the-shelf convex optimization toolbox
to find the solution for small enough $d$ and $n$. We use
the CVXPY library~\citep{diamond2016cvxpy}.
We evaluate the following approaches based on
the implicit bias of optimization:
Gradient Descent (GD), Coordinate Descent (CD),
and Sign Gradient Descent (SignGD)
on fully-connected networks as well as GD on
linear two-layer convolutional networks
(discussed in \cref{sec:implicit_bias}).
We also compare with explicit regularization methods
(discussed in \cref{sec:explicit_reg})
trained using proximal gradient methods~\citep{parikh2013proximal}.
We do not use gradient descent because $\ell_p$ norms
can be non-differentiable at some points
(e.g.\ $\ell_1$ and $\ell_\infty$) and we seek
a global minima of the regularized empirical risk.
We also compare with adversarial training.
As we discussed in \cref{sec:max_robust}
we need to provide the value of maximally robust $\varepsilon$
to adversarial training for finding a maximally robust
classifier. In our experiments,
we give an advantage to adversarial training by providing it with
the maximally robust $\varepsilon$. We also use the steepest
descent direction corresponding to the attack norm
to solve the inner maximization.

For regularization methods a sufficiently
small regularization coefficient achieves
maximal robustness. Adversarial training given
the maximal $\varepsilon$ also converges to the same solution.
We tune all hyperparameters for all methods including learning rate
regularization coefficient and maximum step size in line search.
We provide a list of values in \cref{tab:hparams}.

\begin{table}[t]
    \centering
    \begin{tabular}{m{0.4\linewidth}|m{0.5\linewidth}}
    \toprule
    Hyperparameter
    & Values
    \\
    \midrule
Random seed & 0,1,2 \\
$d$ & 100 \\
$d/n$ & $1, 2, 4, 8, 16, 32$\\
Training steps & $10000$\\
Learning rate & $1\mathrm{e}{-5}$, $3\mathrm{e}{-5}$,
$1\mathrm{e}{-4}$, $3\mathrm{e}{-4}$, $1\mathrm{e}{-3}$, $3\mathrm{e}{-3}$,
$1\mathrm{e}{-2}$, $3\mathrm{e}{-2}$, $1\mathrm{e}{-1}$, $3\mathrm{e}{-1}$,
$1$, $2$, $3$, $6$, $9$, $10$, $20$, $30$, $50$\\
Reg. coefficient &
$1\mathrm{e}{-7}$, $1\mathrm{e}{-6}$, $1\mathrm{e}{-5}$, $1\mathrm{e}{-4}$,
$1\mathrm{e}{-3}$, $1\mathrm{e}{-2}$, $1\mathrm{e}{-1}$, $1$, $10$,
$3\mathrm{e}{-3}$, $5\mathrm{e}{-3}$, $3\mathrm{e}{-2}$, $5\mathrm{e}{-2}$,
$3\mathrm{e}{-1}$, $5\mathrm{e}{-1}$\\
Line search max step & $1$, $10$, $100$, $1000$\\
Adv. Train steps & 10\\
Adv. Train learning rate & 0.1\\
Runtime (line search/prox. method) & $<20$ minutes\\
Runtime (others) & $<2$ minutes\\
    \bottomrule
    \end{tabular}
    \vspace*{5pt}
    \caption{\textbf{Range of Hyperparameters.} Each run uses 2 CPU cores.}
    \label{tab:hparams}
\end{table}

\subsection{Details of CIFAR-10 experiments}

For \cref{fig:tradeoffs_cifar10_eps,fig:tradeoffs_cifar10_trades},
the model is a WRN-28-10.
We use SGD momentum (momentum set to $0.9$)
with a learning rate schedule that warms up from $0$ to LR for $10$ epochs,
then decays slowly using a cosine schedule back to zero over $200$ epochs.
LR is set to 0.1 * BS / 256, where batch size, BS, is set to $1024$.
Experiments runs on Google Cloud TPUv3 over $32$ cores.
All models are trained from scratch and uses the default
initialization from JAX/Haiku.
We use the KL loss and typical adversarial loss for adversarial training.
The inner optimization either maximizes the KL divergence (for TRADES)
or the cross-entropy loss (for AT)
and we use Adam with a step-size of $0.1$.

For the evaluation, we use $40$ PGD steps
(with Adam as the underlying optimizer and step-size $0.1$).
Instead of optimizing the cross-entropy loss, we used the margin-loss~\citep{carlini2017towards}.

For \cref{fig:cifar10_acc}, we evaluate the models in \cref{tab:cifar10_models}
against our Fourier-$\ell_p$ attack with varying $\varepsilon$ in the range
$[0, 8]\times 255$ with step size $0.5$.
We report the largest $\varepsilon$ at which
the robust test accuracy is at most $1\%$ lower than standard test accuracy
of the model. We run the attack for $20$ iterations
with no restarts and use apgd-ce, and apgd-dlr methods from AutoAttack.

\begin{table}[t]
    \centering
    \begin{tabular}{m{0.6\linewidth}|m{0.3\linewidth}}
    \toprule
    Model name & Robust training type
    \\
    \midrule
   \verb|Standard| & - \\
    \midrule
   \verb|Gowal2020Uncovering_70_16_extra| & \texttt{Linf} \\ 
   \verb|Gowal2020Uncovering_28_10_extra| & \texttt{Linf} \\
   \verb|Wu2020Adversarial_extra| & \texttt{Linf} \\
   \verb|Carmon2019Unlabeled| & \texttt{Linf} \\
   \verb|Sehwag2020Hydra| & \texttt{Linf} \\
   \verb|Gowal2020Uncovering_70_16| & \texttt{Linf} \\
   \verb|Gowal2020Uncovering_34_20| & \texttt{Linf} \\
   \verb|Wang2020Improving| & \texttt{Linf} \\
   \verb|Wu2020Adversarial| & \texttt{Linf} \\
   \verb|Hendrycks2019Using| & \texttt{Linf} \\
    \midrule
   \verb|Gowal2020Uncovering_extra| & \texttt{L2} \\ 
   \verb|Gowal2020Uncovering| & \texttt{L2} \\
   \verb|Wu2020Adversarial| & \texttt{L2} \\
   \verb|Augustin2020Adversarial| & \texttt{L2} \\
   \verb|Engstrom2019Robustness| & \texttt{L2} \\
   \verb|Rice2020Overfitting| & \texttt{L2} \\
   \verb|Rice2020Overfitting| & \texttt{L2} \\
   \verb|Rony2019Decoupling| & \texttt{L2} \\
   \verb|Ding2020MMA| & \texttt{L2} \\
    \midrule
   \verb|Hendrycks2020AugMix_ResNeXt|              & \texttt{corruptions} \\ 
   \verb|Hendrycks2020AugMix_WRN| & \texttt{corruptions} \\
   \verb|Kireev2021Effectiveness_RLATAugMixNoJSD| & \texttt{corruptions} \\
   \verb|Kireev2021Effectiveness_AugMixNoJSD| & \texttt{corruptions} \\
   \verb|Kireev2021Effectiveness_Gauss50percent| & \texttt{corruptions} \\
   \verb|Kireev2021Effectiveness_RLAT| & \texttt{corruptions} \\
   \bottomrule
    \end{tabular}
    \vspace*{5pt}
    \caption{List of models evaluated in \cref{fig:cifar10_acc}.}
    \label{tab:cifar10_models}
\end{table}
    
\subsection{Margin Figures}
\label{sec:margin}
A small gap exists between the solution found using CVXPY
compared with coordinate descent.
That is because of limited number of training iterations.
The convergence of coordinate descent
to minimum  $\ell_1$ norm solution is slower than
the convergence of gradient descent to minimum  $\ell_2$ norm
solution. 
There is also a small gap between the solution
of $\ell_1$ regularization and CVXPY. The reason is the
regularization coefficient has to be infinitesimal but
in practice numerical errors prevent us from training
using very small regularization coefficients.

\begin{figure*}[t]
\vspace*{-0.15cm}
\centering
\begin{subfigure}[c]{0.25\textwidth}
\includegraphics[width=\textwidth]{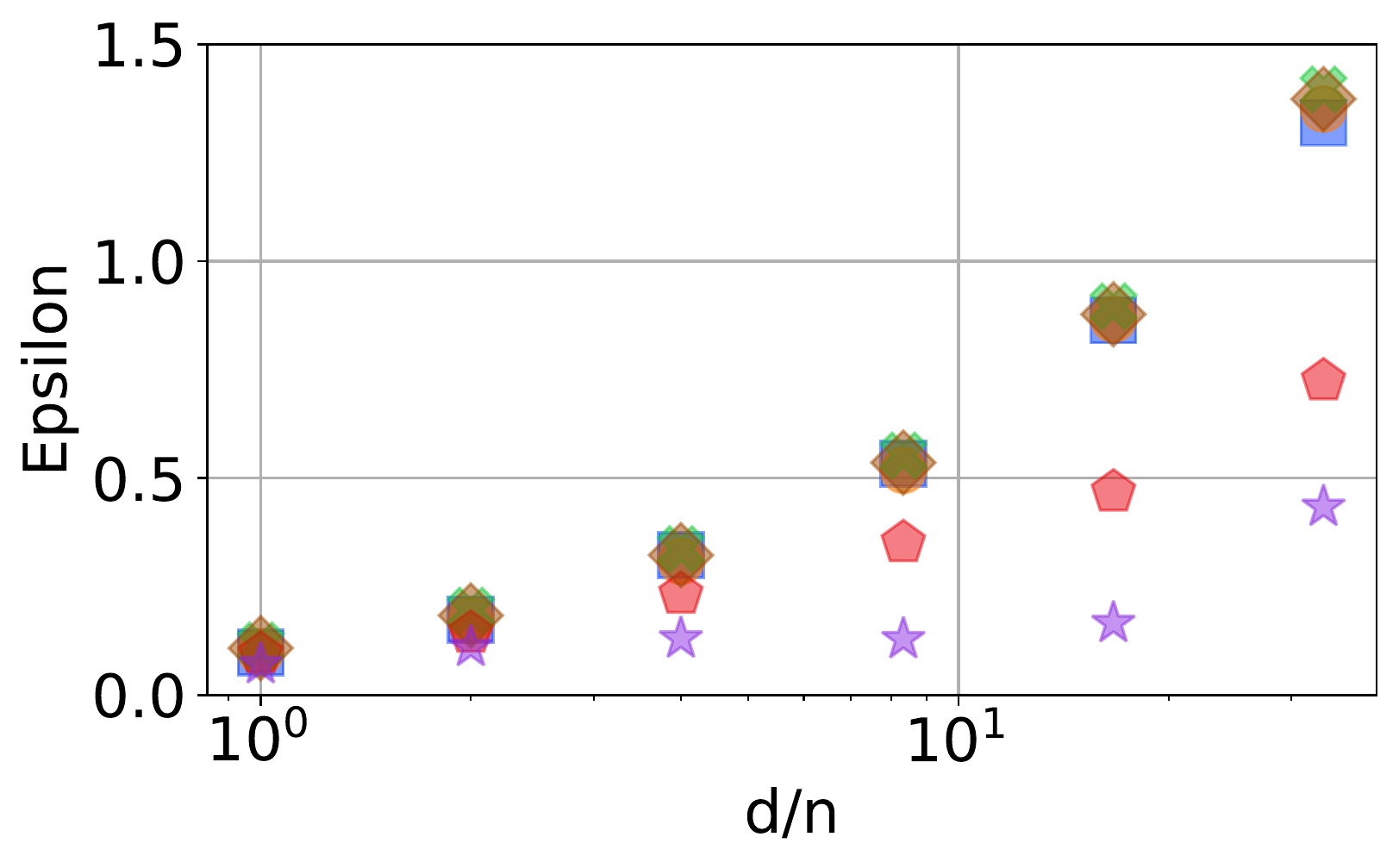}
\caption{$\ell_\infty$ attack}
\end{subfigure}
\hspace{5pt}
\begin{subfigure}[c]{0.25\textwidth}
\includegraphics[width=\textwidth]{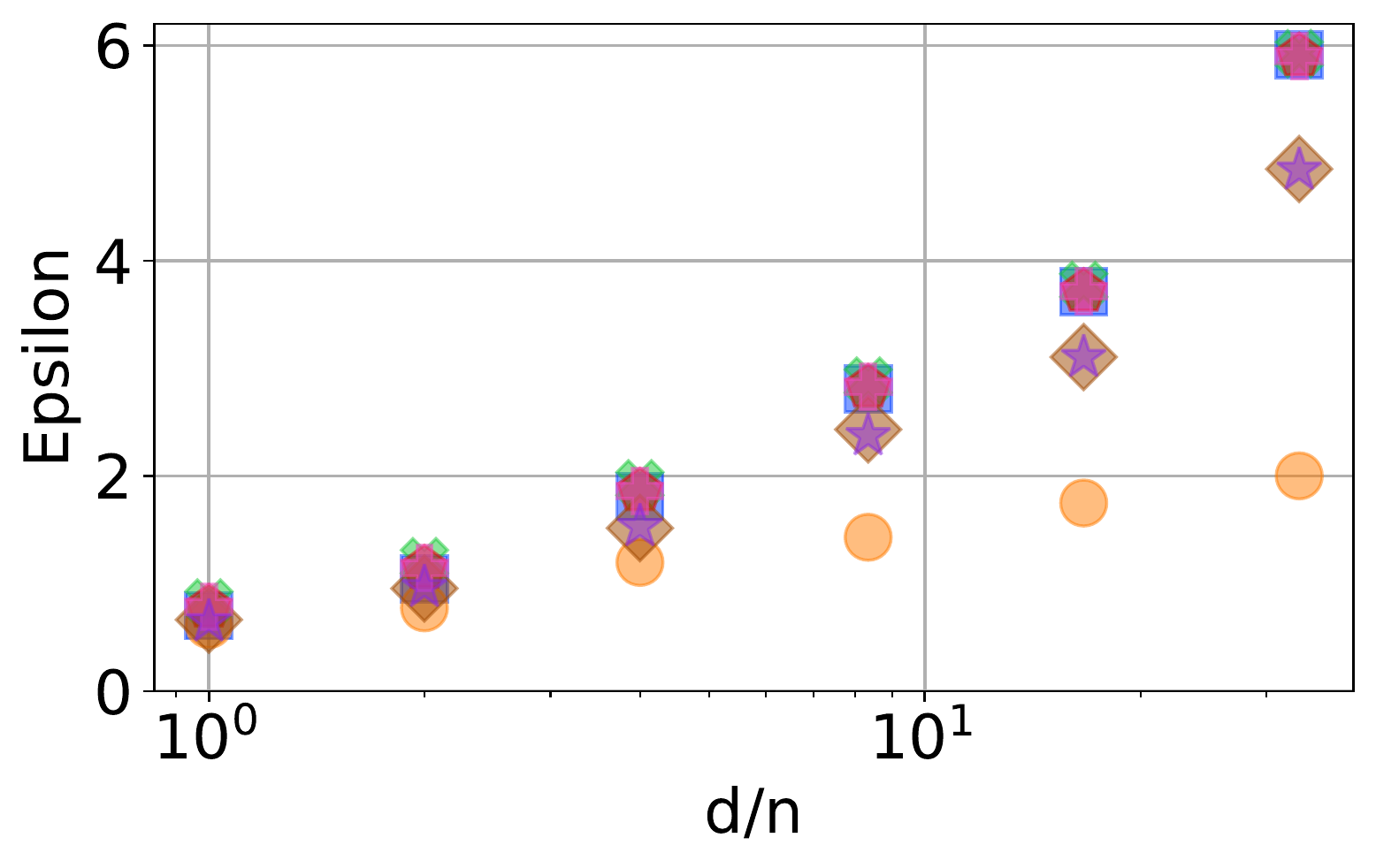}
\caption{$\ell_2$ attack}
\end{subfigure}
\hspace{5pt}
\begin{subfigure}[c]{0.25\textwidth}
\includegraphics[width=\textwidth]{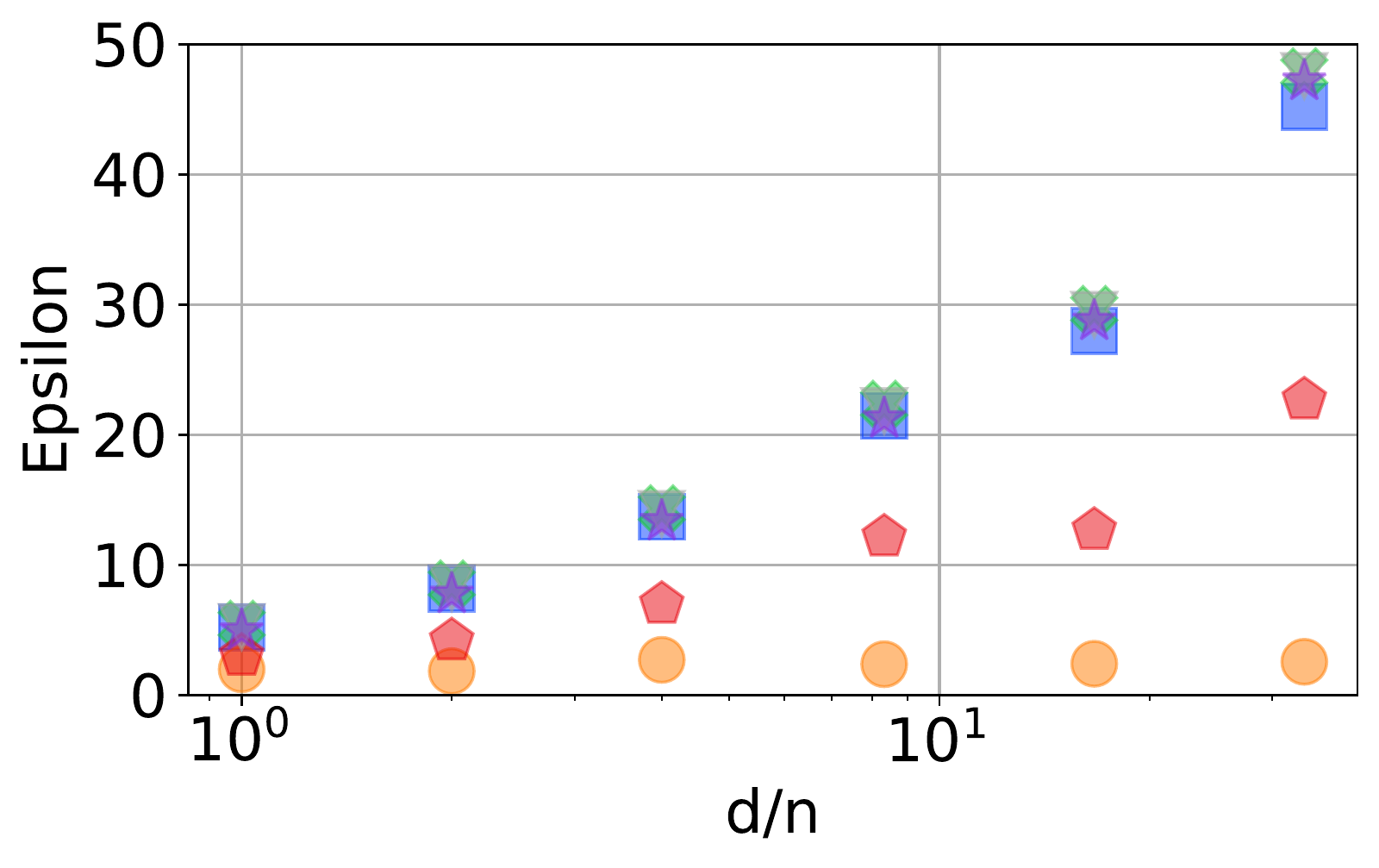}
\caption{$\ell_1$ attack}
\end{subfigure}
\hspace{5pt}
\begin{subfigure}[c]{0.1\textwidth}
\vspace*{-25pt}
\fbox{
\includegraphics[width=\textwidth]{figures/runs_linear_postnorm_randinit_l2/dim_num_train_risk_train_adv_l2_legend.pdf}}
\end{subfigure}

\caption{\textbf{Margin of models in \cref{fig:linear_max_eps}.}
Models are trained to be robust
against $\ell_\infty$, $\ell_2$, $\ell_1$ attacks.
For each attack, there exists one optimizer and one regularization
method that finds the maximally robust classifier. Adversarial
training also finds the solution given the maximal $\varepsilon$.}
\label{fig:linear_margin}
\end{figure*}

\begin{figure}[t]
\begin{subfigure}[b]{\linewidth}
\includegraphics[width=\textwidth]{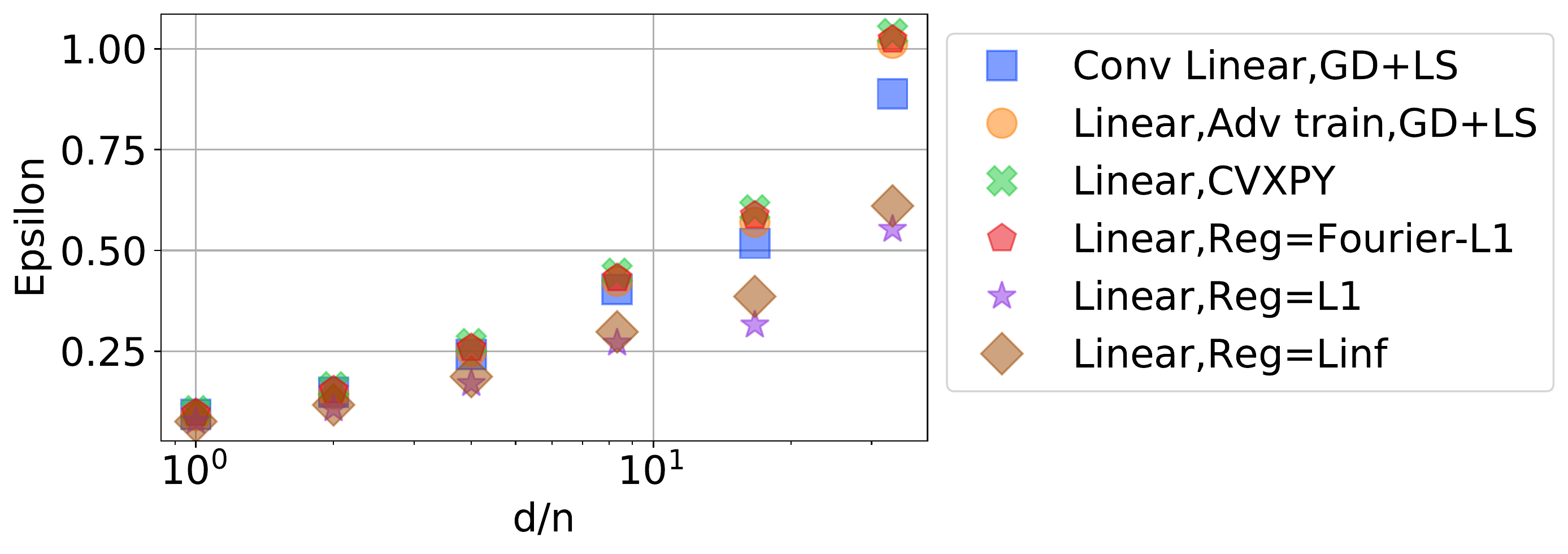}
\end{subfigure}
\caption{\textbf{Fourier-$\ell_1$ margin of
Linear Convolutional Models.}}
\label{fig:conv_margin}
\end{figure}

\subsection{Visualization of Fourier Adversarial Attacks}
\label{sec:fourier_vis}
In
\cref{fig:image_attack_standard,fig:image_attack_carmon,fig:image_attack_augustin}
we visualize adversarial samples for
models available in RobustBench~\citep{croce2020robustbench}.
Fourier-$\ell_\infty$ adversarial samples are qualitatively different
from $\ell_\infty$ adversarial samples as they concentrate on the object.

\begin{figure*}[t]
    \centering
    \begin{subfigure}[b]{\twocolfigwidth}
    \begin{tabular}{c}
    \,\,\,$\xx$\hfill
    $\xx+\ddelta$\hfill
    \,\,\,$\ddelta$ \hfill
    \,\,\,$|\mathcal{F}(\ddelta)|$\hfill\\
    \includegraphics[width=.95\textwidth]{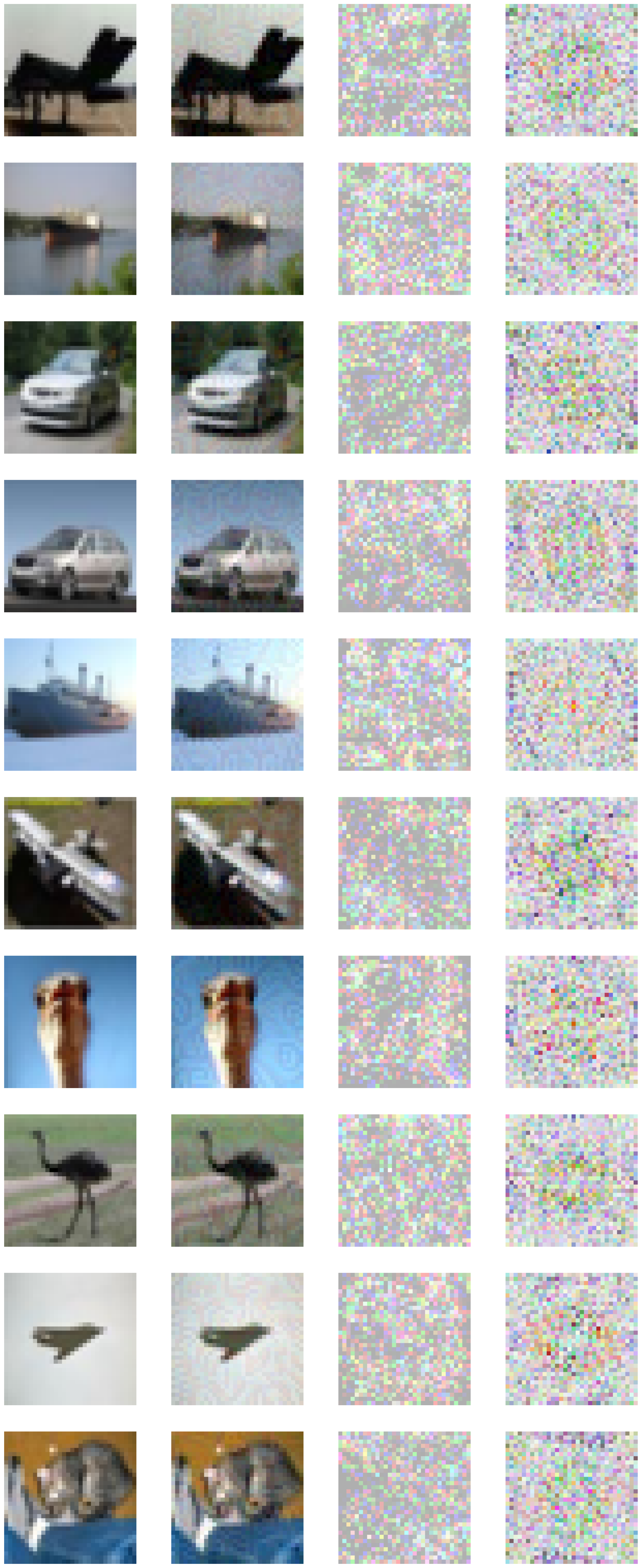}
    \end{tabular}
    \caption{$\ell_\infty$ attack}
    \end{subfigure}
    \hfill
    \begin{subfigure}[b]{\twocolfigwidth}
    \begin{tabular}{c}
    \,\,\,$\xx$\hfill
    $\xx+\ddelta$\hfill
    \,\,\,$\ddelta$ \hfill
    \,\,\,$|\mathcal{F}(\ddelta)|$\hfill\\
    \includegraphics[width=.95\textwidth]{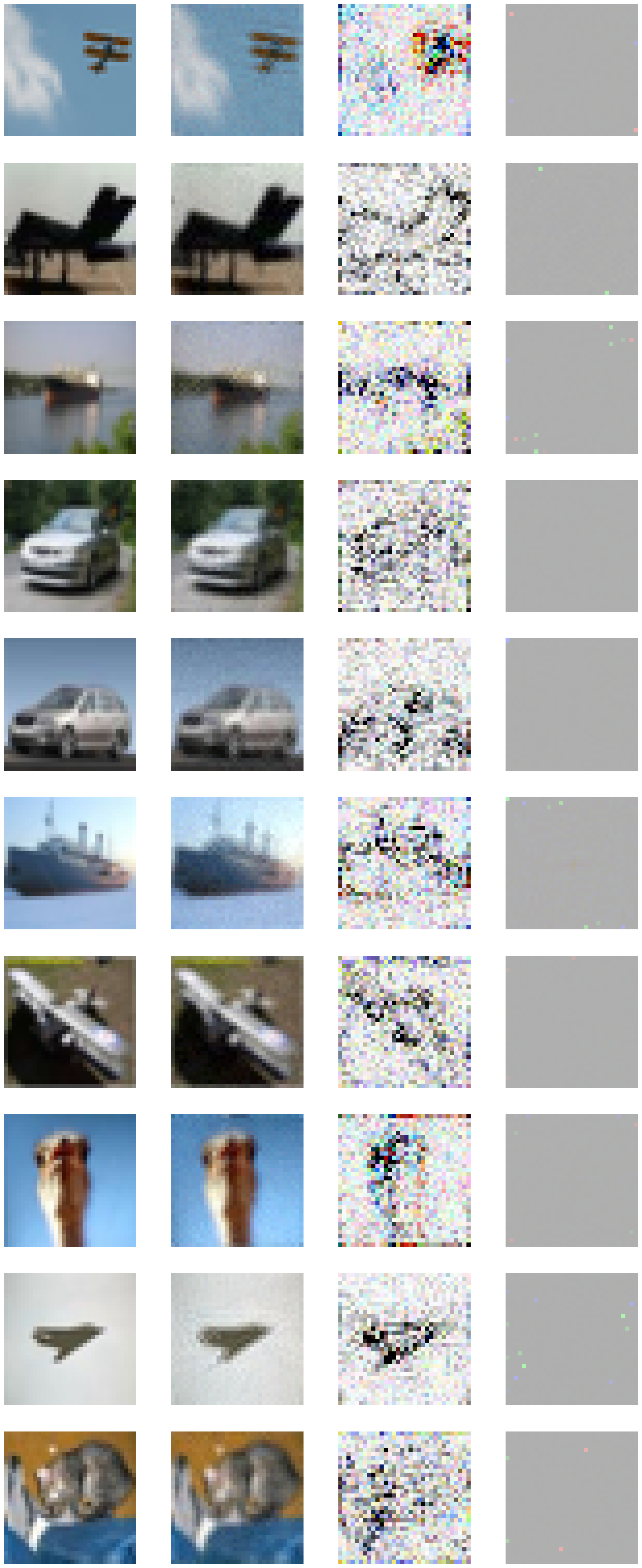}
    \end{tabular}
    \caption{Fourier-$\ell_\infty$ attack}
    \end{subfigure}
    \caption{
    \textbf{Adversarial attacks ($\ell_\infty$ and Fourier-$\ell_\infty$)
    against CIFAR-10 with standard training.}
    WideResNet-28-10 model with standard training.
    The attack methods are APGD-CE and APGD-DLR with default
    hyperparameters in RobustBench. We use $\varepsilon=8/255$ for both attacks.
    Fourier-$\ell_\infty$ perturbations are more concentrated on
    the object.
    Darker color in perturbations means larger magnitude.
    The optimal Fourier attack step is achieved when the
    magnitude in the Fourier domain is equal to the constraints.
    }
    \label{fig:image_attack_standard}
\end{figure*}

\begin{figure*}[t]
    \centering
    \begin{subfigure}[b]{\twocolfigwidth}
    \begin{tabular}{c}
    \,\,\,$\xx$\hfill
    $\xx+\ddelta$\hfill
    \,\,\,$\ddelta$ \hfill
    \,\,\,$|\mathcal{F}(\ddelta)|$\hfill\\
    \includegraphics[width=.95\textwidth]{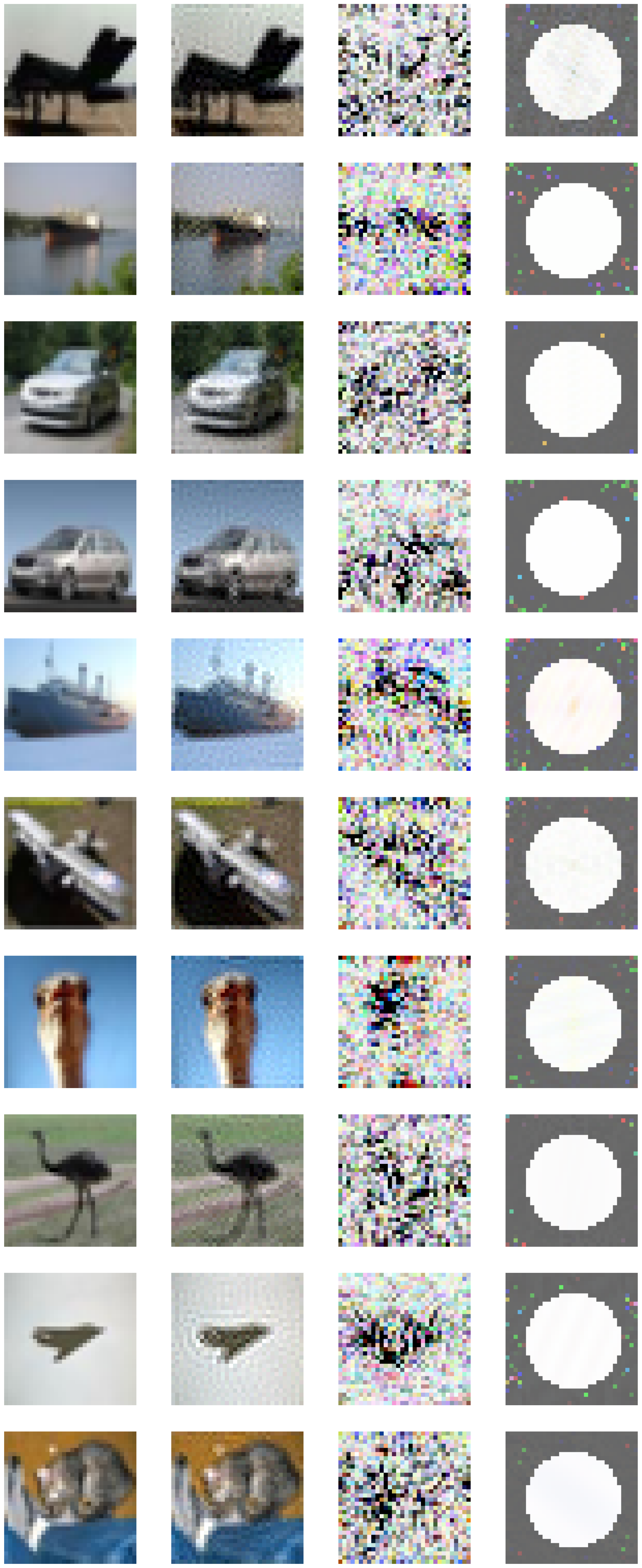}
    \end{tabular}
    \caption{$\ell_\infty$ attack}
    \end{subfigure}
    \hfill
    \begin{subfigure}[b]{\twocolfigwidth}
    \begin{tabular}{c}
    \,\,\,$\xx$\hfill
    $\xx+\ddelta$\hfill
    \,\,\,$\ddelta$ \hfill
    \,\,\,$|\mathcal{F}(\ddelta)|$\hfill\\
    \includegraphics[width=.95\textwidth]{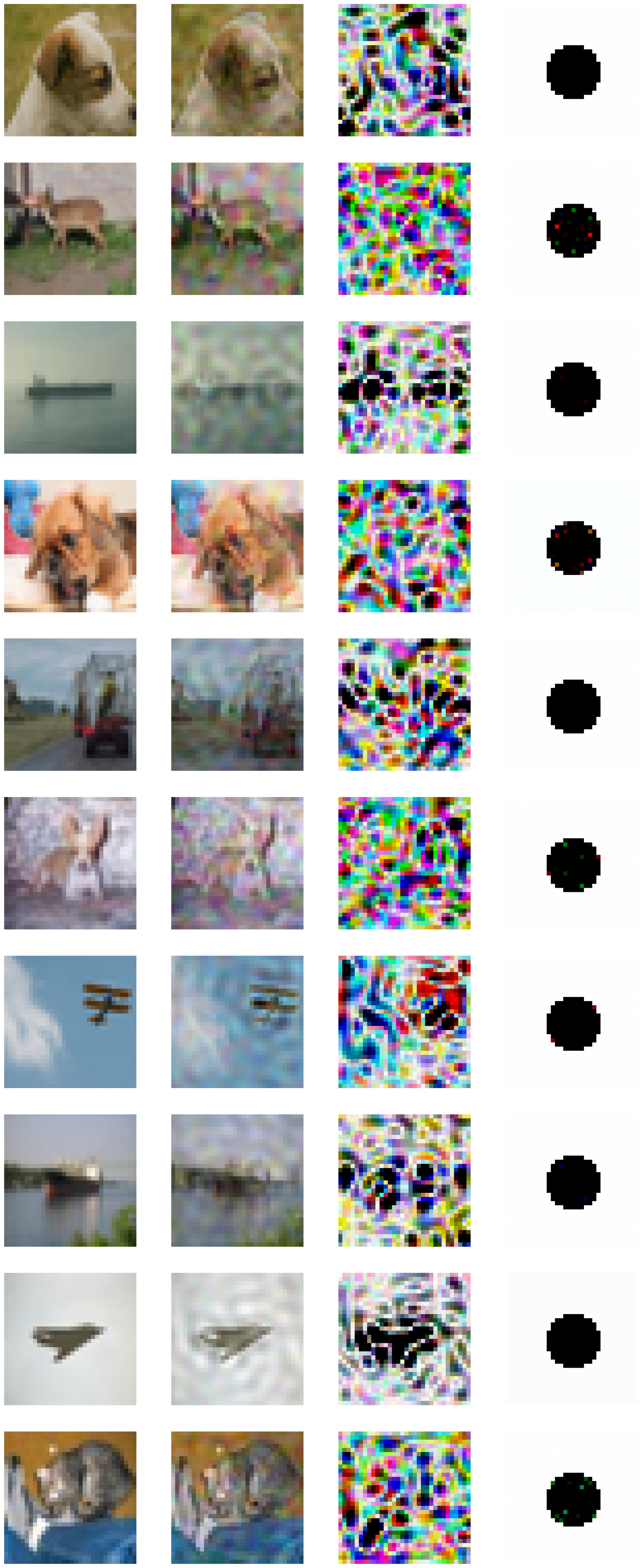}
    \end{tabular}
    \caption{Fourier-$\ell_\infty$ attack}
    \end{subfigure}
    \caption{
    \textbf{Adversarial attacks (High and low frequency Fourier-$\ell_\infty$)
    against CIFAR-10 with standard training.}
    WideResNet-28-10 model with standard training.
    The attack methods are APGD-CE and APGD-DLR with default
    hyperparameters in RobustBench. We use $\varepsilon=15/255,45/255$ respectively for high and low frequency.
    Darker color in perturbations means larger magnitude.
    The optimal Fourier attack step is achieved when the
    magnitude in the Fourier domain is equal to the constraints.
    }
    \label{fig:image_attack_standard_band}
\end{figure*}

\begin{figure*}[t]
    \centering
    \begin{subfigure}[b]{\twocolfigwidth}
    \begin{tabular}{c}
    \,\,\,$\xx$\hfill
    $\xx+\ddelta$\hfill
    \,\,\,$\ddelta$ \hfill
    \,\,\,$|\mathcal{F}(\ddelta)|$\hfill\\
    \includegraphics[width=.95\textwidth]{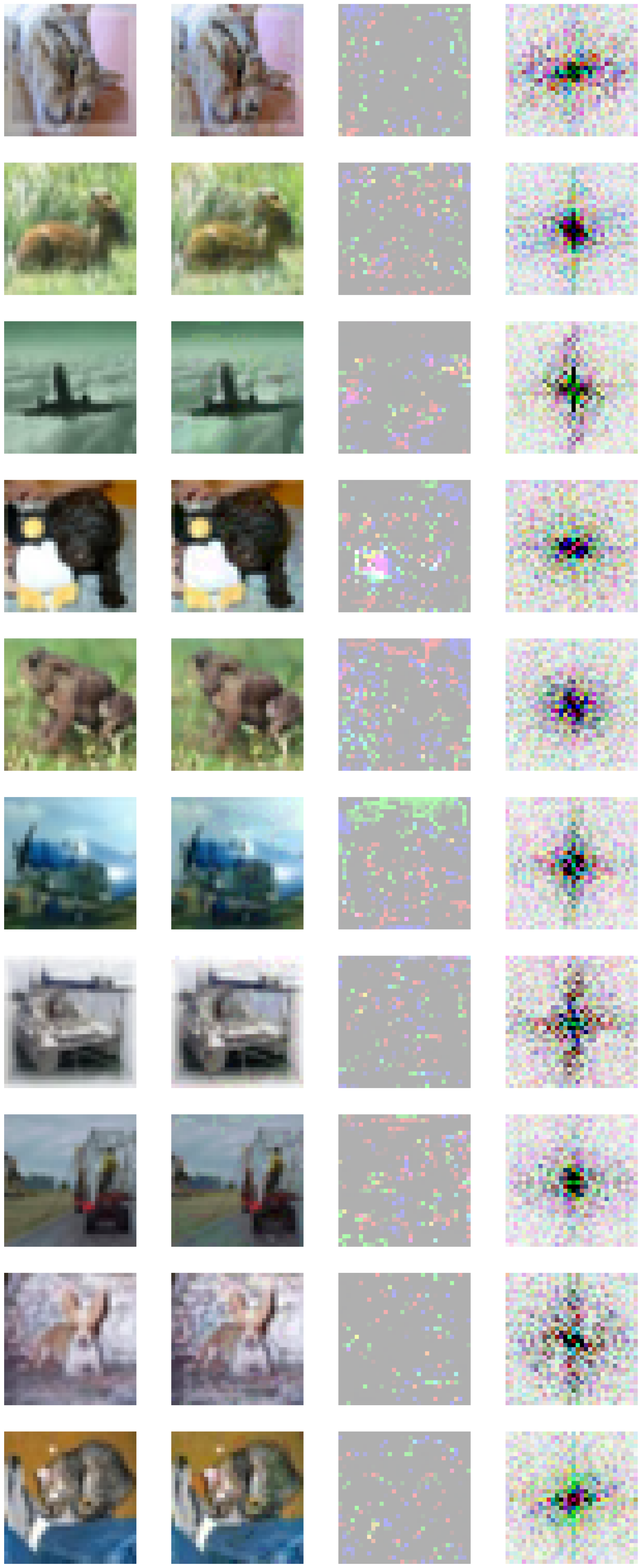}
    \end{tabular}
    \caption{$\ell_\infty$ attack}
    \end{subfigure}
    \hfill
    \begin{subfigure}[b]{\twocolfigwidth}
    \begin{tabular}{c}
    \,\,\,$\xx$\hfill
    $\xx+\ddelta$\hfill
    \,\,\,$\ddelta$ \hfill
    \,\,\,$|\mathcal{F}(\ddelta)|$\hfill\\
    \includegraphics[width=.95\textwidth]{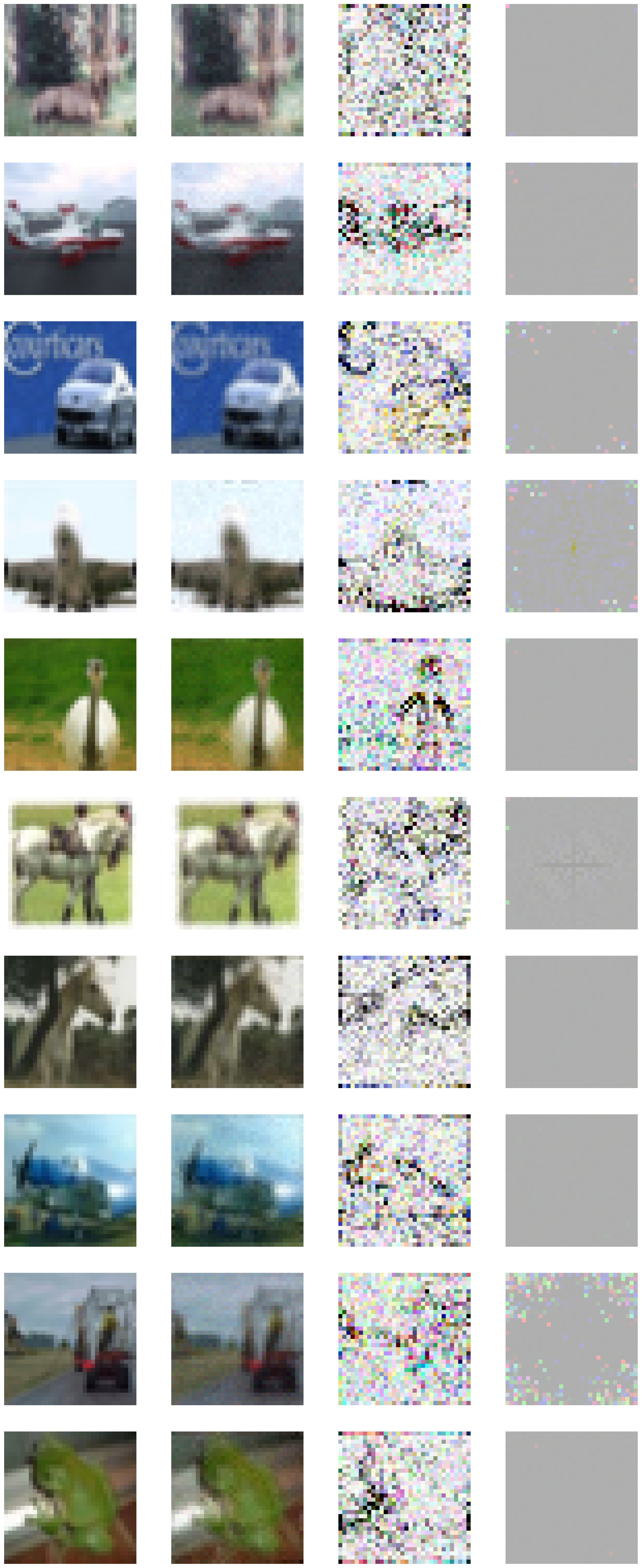}
    \end{tabular}
    \caption{Fourier-$\ell_\infty$ attack}
    \end{subfigure}
    \caption{
    \textbf{Adversarial attacks ($\ell_\infty$ and Fourier-$\ell_\infty$)
    against CIFAR-10 $\ell_\infty$ model of \citep{carmon2019unlabeled}.}
    Adversarially trained model against $\ell_\infty$ attacks.
    The attack methods are APGD-CE and APGD-DLR with default
    hyperparameters in RobustBench. We use $\varepsilon=8/255$ for both attacks.
    Fourier-$\ell_\infty$ perturbations are more concentrated on
    the object.
    Darker color in perturbations means larger magnitude.
    The optimal Fourier attack step is achieved when the
    magnitude in the Fourier domain is equal to the constraints.
    }
    \label{fig:image_attack_carmon}
\end{figure*}

\begin{figure*}[t]
    \centering
    \begin{subfigure}[b]{\twocolfigwidth}
    \begin{tabular}{c}
    \,\,\,$\xx$\hfill
    $\xx+\ddelta$\hfill
    \,\,\,$\ddelta$ \hfill
    \,\,\,$|\mathcal{F}(\ddelta)|$\hfill\\
    \includegraphics[width=.95\textwidth]{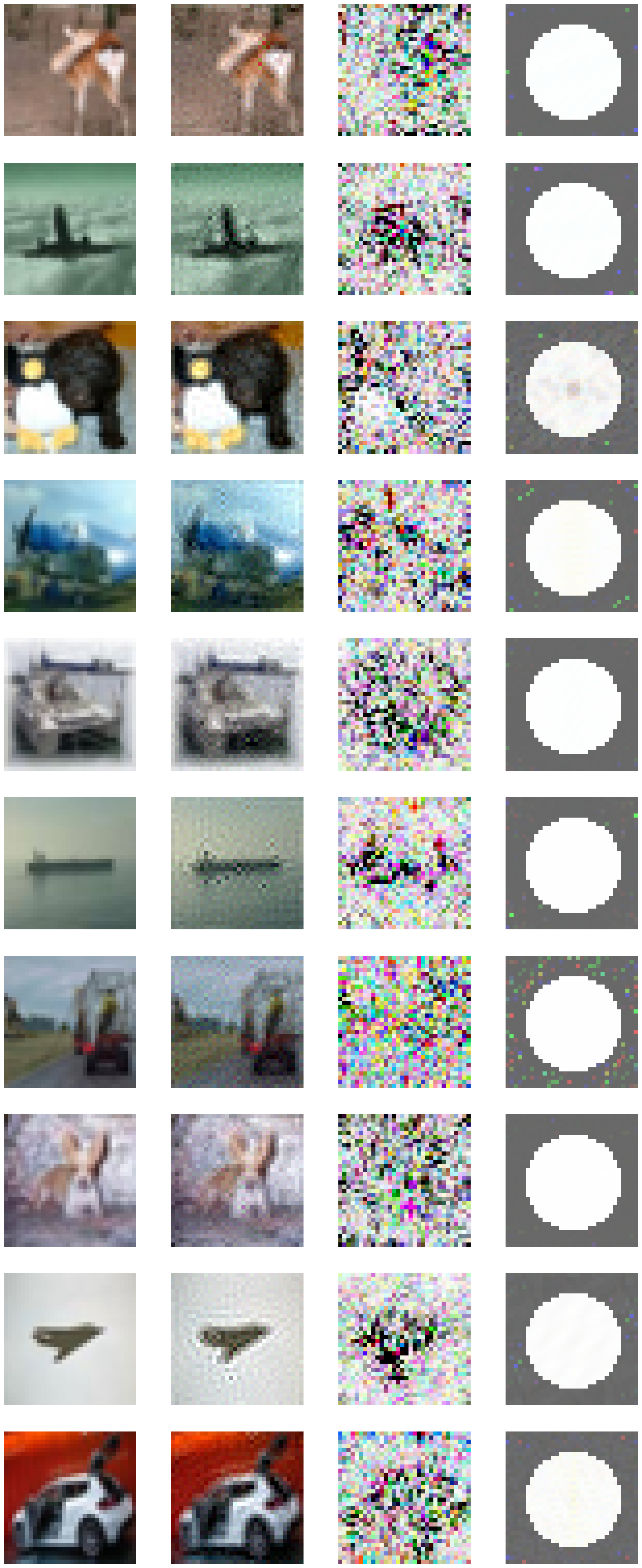}
    \end{tabular}
    \caption{$\ell_\infty$ attack}
    \end{subfigure}
    \hfill
    \begin{subfigure}[b]{\twocolfigwidth}
    \begin{tabular}{c}
    \,\,\,$\xx$\hfill
    $\xx+\ddelta$\hfill
    \,\,\,$\ddelta$ \hfill
    \,\,\,$|\mathcal{F}(\ddelta)|$\hfill\\
    \includegraphics[width=.95\textwidth]{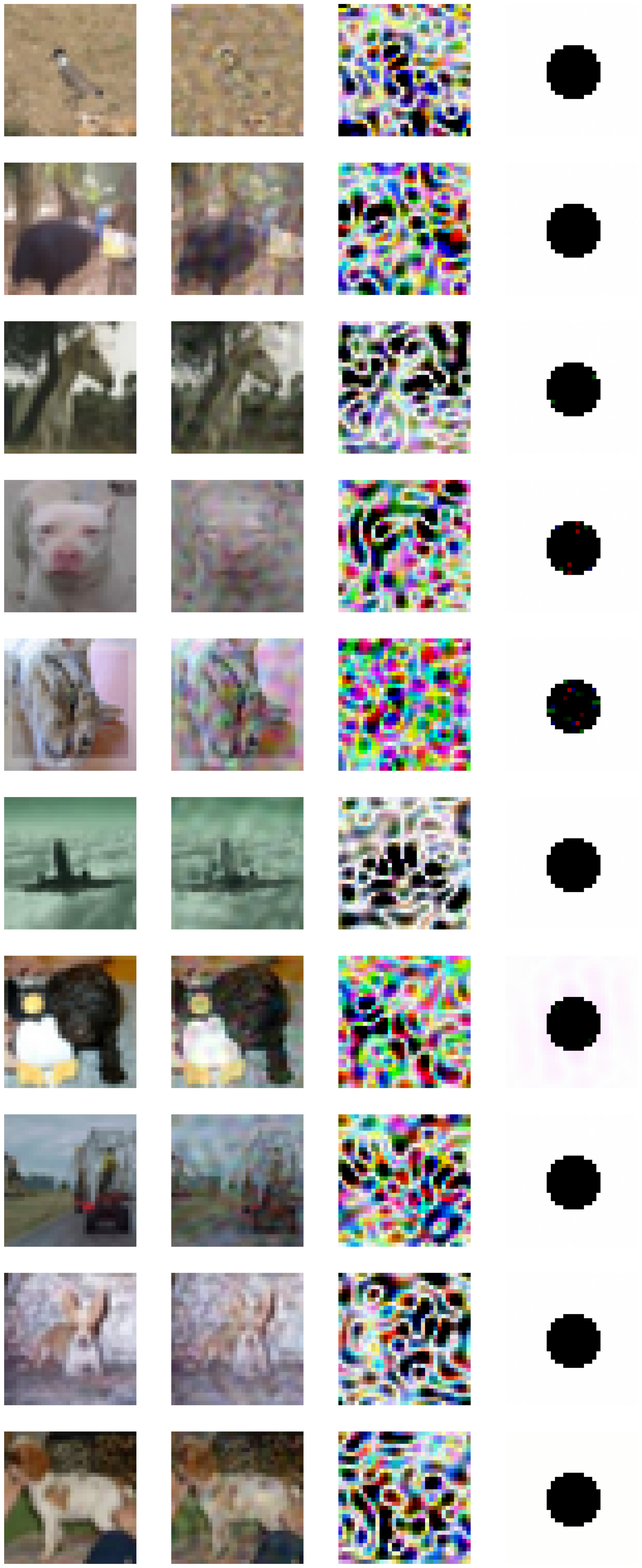}
    \end{tabular}
    \caption{Fourier-$\ell_\infty$ attack}
    \end{subfigure}
    \caption{
    \textbf{Adversarial attacks (High and low frequency Fourier-$\ell_\infty$)
    against CIFAR-10 $\ell_\infty$ model of \citep{carmon2019unlabeled}.}
    WideResNet-28-10 model with standard training.
    The attack methods are APGD-CE and APGD-DLR with default
    hyperparameters in RobustBench. We use $\varepsilon=15/255,45/255$ respectively for high and low frequency.
    Darker color in perturbations means larger magnitude.
    The optimal Fourier attack step is achieved when the
    magnitude in the Fourier domain is equal to the constraints.
    }
    \label{fig:image_attack_carmon_band}
\end{figure*}

\begin{figure*}[t]
    \centering
    \begin{subfigure}[b]{\twocolfigwidth}
    \begin{tabular}{c}
    \,\,\,$\xx$\hfill
    $\xx+\ddelta$\hfill
    \,\,\,$\ddelta$ \hfill
    \,\,\,$|\mathcal{F}(\ddelta)|$\hfill\\
    \includegraphics[width=.95\textwidth]{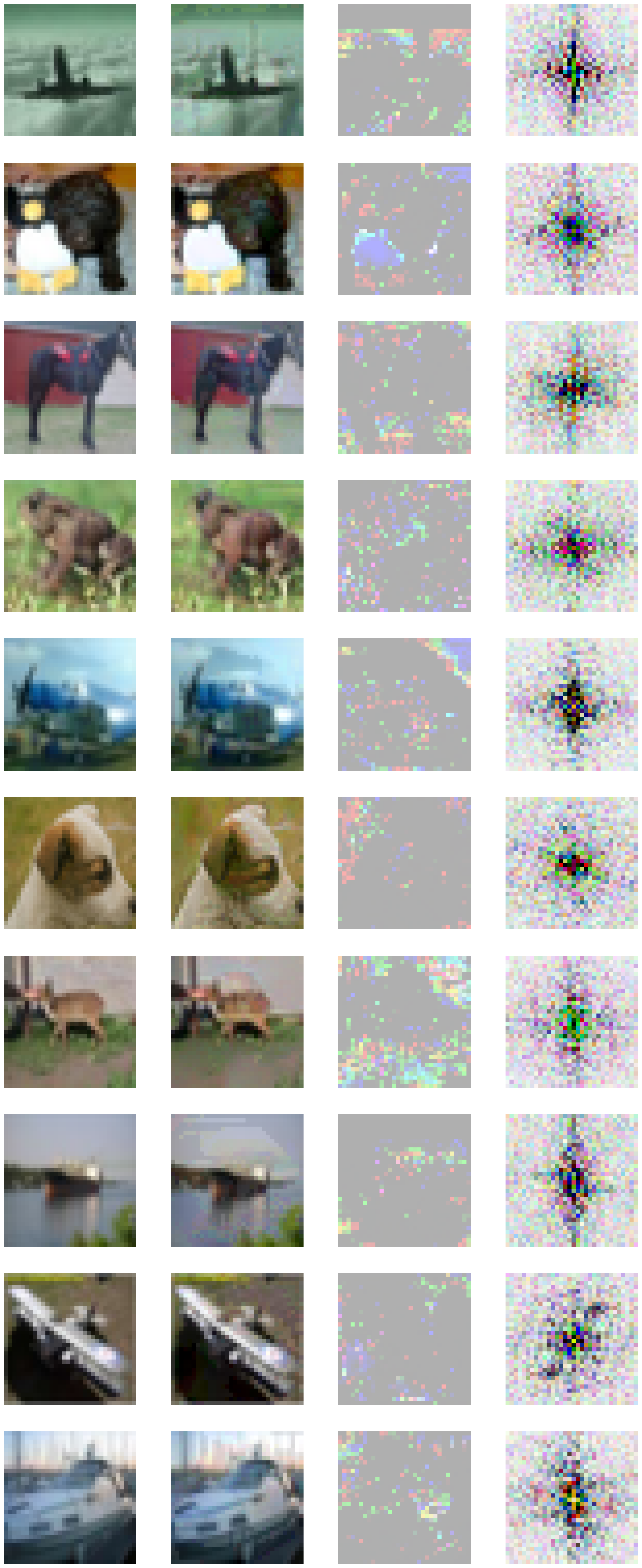}
    \end{tabular}
    \caption{$\ell_\infty$ attack}
    \end{subfigure}
    \hfill
    \begin{subfigure}[b]{\twocolfigwidth}
    \begin{tabular}{c}
    \,\,\,$\xx$\hfill
    $\xx+\ddelta$\hfill
    \,\,\,$\ddelta$ \hfill
    \,\,\,$|\mathcal{F}(\ddelta)|$\hfill\\
    \includegraphics[width=.95\textwidth]{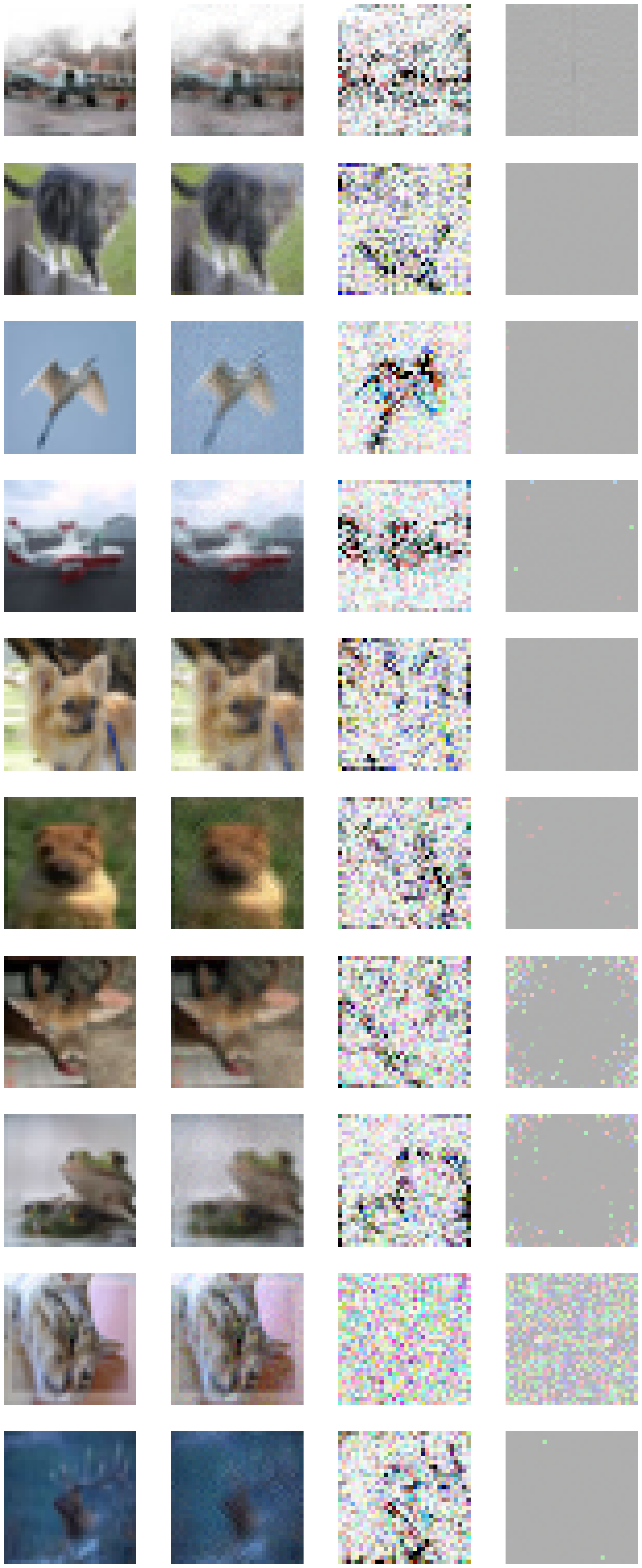}
    \end{tabular}
    \caption{Fourier-$\ell_\infty$ attack}
    \end{subfigure}
    \caption{
    \textbf{Adversarial attacks ($\ell_\infty$ and Fourier-$\ell_\infty$)
    against CIFAR-10 $\ell_2$ model of \citep{augustin2020adversarial}.}
    Adversarially trained model against $\ell_2$ attacks.
    The attack methods are APGD-CE and APGD-DLR with default
    hyperparameters in RobustBench. We use $\varepsilon=8/255$ for both attacks.
    Fourier-$\ell_\infty$ perturbations are more concentrated on
    the object.
    Darker color in perturbations means larger magnitude.
    The optimal Fourier attack step is achieved when the
    magnitude in the Fourier domain is equal to the constraints.
    }
    \label{fig:image_attack_augustin}
\end{figure*}

\begin{figure*}[t]
    \centering
    \begin{subfigure}[b]{\twocolfigwidth}
    \begin{tabular}{c}
    \,\,\,$\xx$\hfill
    $\xx+\ddelta$\hfill
    \,\,\,$\ddelta$ \hfill
    \,\,\,$|\mathcal{F}(\ddelta)|$\hfill\\
    \includegraphics[width=.95\textwidth]{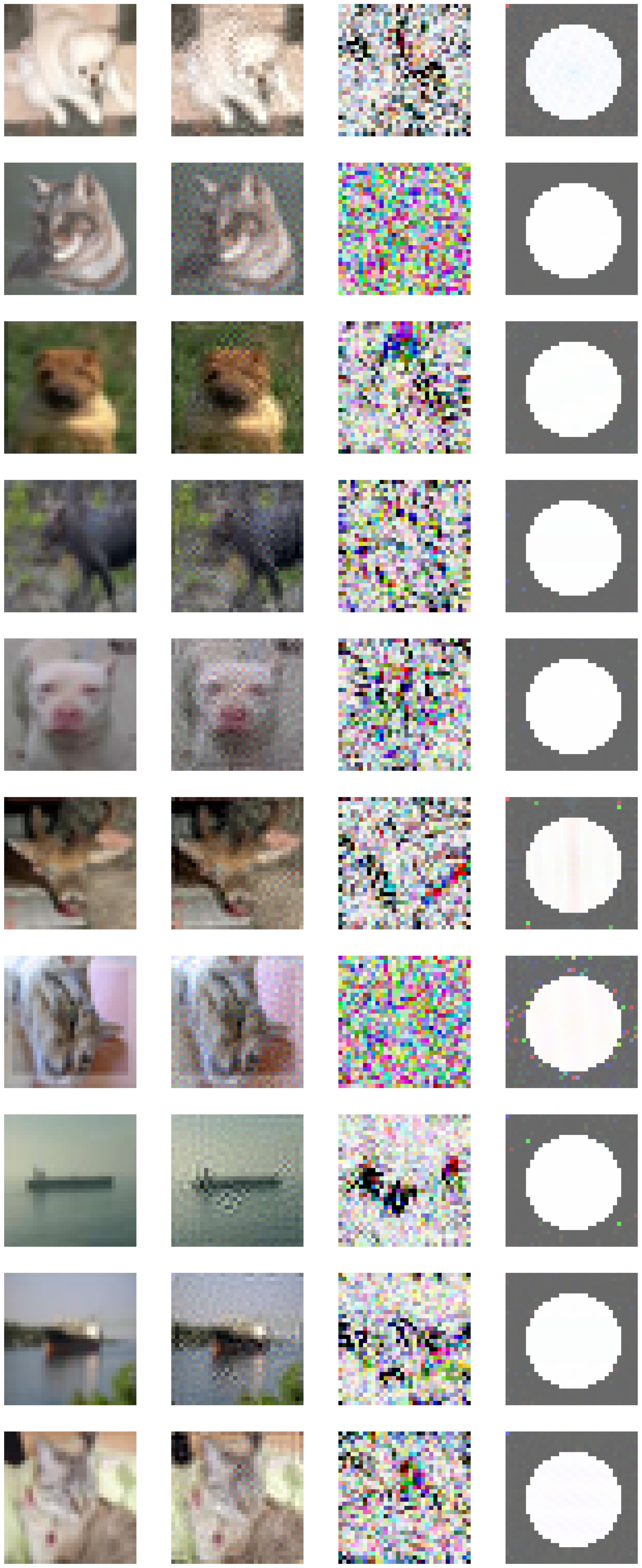}
    \end{tabular}
    \caption{$\ell_\infty$ attack}
    \end{subfigure}
    \hfill
    \begin{subfigure}[b]{\twocolfigwidth}
    \begin{tabular}{c}
    \,\,\,$\xx$\hfill
    $\xx+\ddelta$\hfill
    \,\,\,$\ddelta$ \hfill
    \,\,\,$|\mathcal{F}(\ddelta)|$\hfill\\
    \includegraphics[width=.95\textwidth]{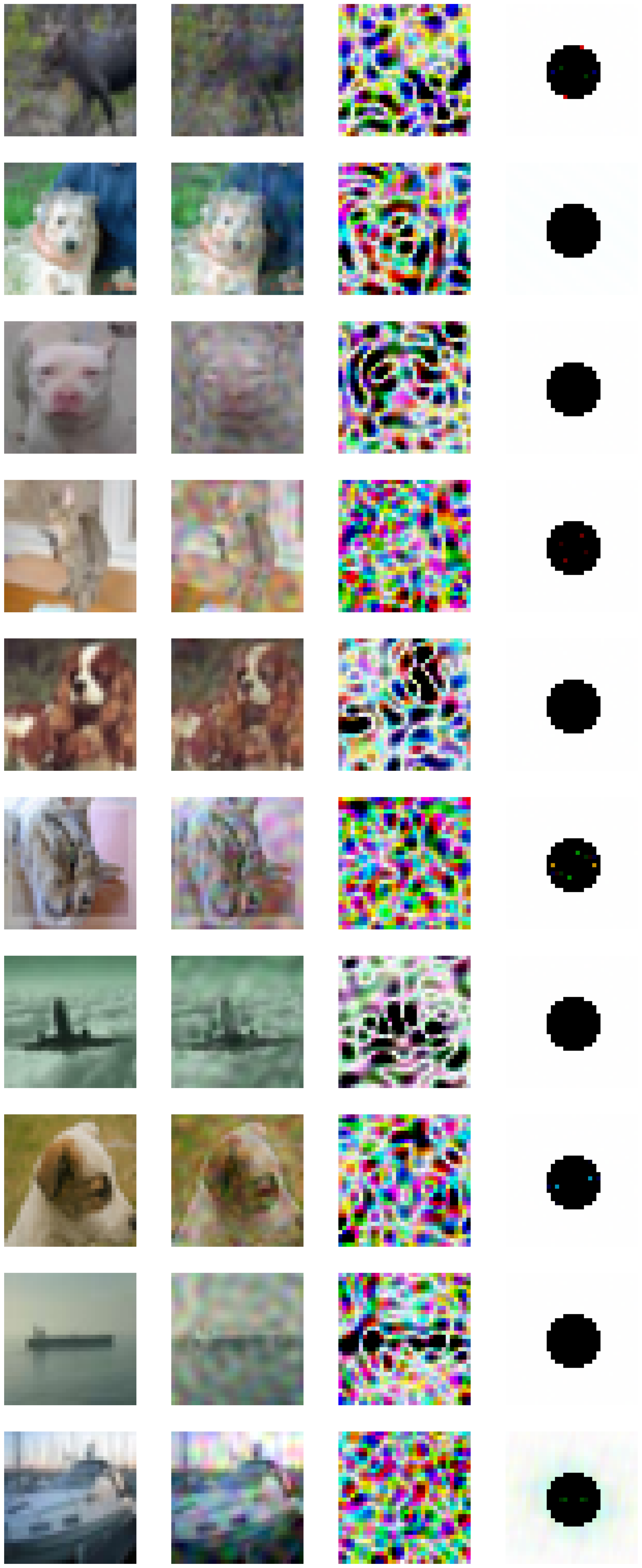}
    \end{tabular}
    \caption{Fourier-$\ell_\infty$ attack}
    \end{subfigure}
    \caption{
    \textbf{Adversarial attacks (High and low frequency Fourier-$\ell_\infty$)
    against CIFAR-10 $\ell_2$ model of \citep{augustin2020adversarial}.}
    WideResNet-28-10 model with standard training.
    The attack methods are APGD-CE and APGD-DLR with default
    hyperparameters in RobustBench. We use $\varepsilon=15/255,45/255$ respectively for high and low frequency.
    Darker color in perturbations means larger magnitude.
    The optimal Fourier attack step is achieved when the
    magnitude in the Fourier domain is equal to the constraints.
    }
    \label{fig:image_attack_augustin_band}
\end{figure*}

\section{Visualization of Norm-balls}
\label{sec:unit_norm_balls}

To reach an intuition of the
norm-ball for Fourier $\ell_\infty$ norm, we visualize
a number of common norm-balls in $3$D in \cref{fig:unit_norm_balls}.
Norm-balls have been visualized in prior
work~\citep{bach2012structured}
but we are not aware of any visualization
of Fourier-$\ell_\infty$. 

\begin{figure}[t]
\centering
\begin{subfigure}[b]{\twocolfigwidth}
\includegraphics[width=\textwidth]{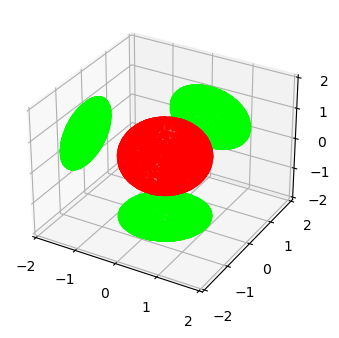}
\caption{$\|\ddelta\|_2=1$}
\end{subfigure}
\begin{subfigure}[b]{\twocolfigwidth}
\includegraphics[width=\textwidth]{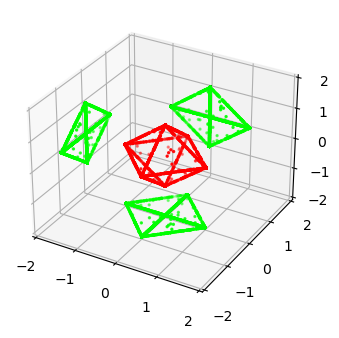}
\caption{$\|\ddelta\|_1=1$}
\end{subfigure}
\begin{subfigure}[b]{\twocolfigwidth}
\includegraphics[width=\textwidth]{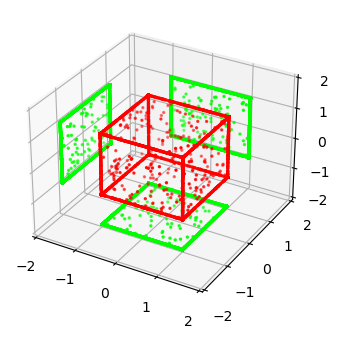}
\caption{$\|\ddelta\|_\infty=1$}
\end{subfigure}
\begin{subfigure}[b]{\twocolfigwidth}
\includegraphics[width=\textwidth]{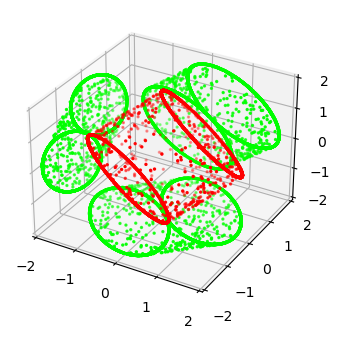}
\caption{$\|\mathcal{F}(\ddelta)\|_\infty=1$}
\end{subfigure}
\caption{Unit norm balls in $3$-D (red) and their $2$-D projections (green). Linear models trained with gradient descent
are maximally robust to $\ell_2$ perturbations.
Two-layer linear convolutional networks trained
with gradient descent are maximally robust to
perturbations with bounded Fourier-$\ell_\infty$.}
\label{fig:unit_norm_balls}
\end{figure}

\end{document}